\title{
Of Dice and Games: \\ A Theory of Generalized Boosting
}
\date{ }
\renewcommand{\mathbf}{\boldsymbol}
\renewcommand{\mathbf}{\boldsymbol}
\DeclareSymbolFontAlphabet{\mathbb}{AMSb}
\DeclareSymbolFontAlphabet{\mathbbl}{bbold}
\def\X{{\mathcal X}}
\def\H{{\mathcal H}}
\def\Y{{\mathcal Y}}
\def\B{{\mathcal B}}
\def\E{{\mathbb E}}
\def\A{{\mathcal A}}
\newcommand{\D}{\mathcal{D}}
\newcommand{\bzero}{\ensuremath{\mathbf 0}}
\renewcommand\L{\mathcal{L}}
\newcommand{\F}{\mathcal{F}}
\newcommand{\C}{C}
\def\x{\mathbf{x}}
\def\w{\mathbf{w}}
\def\bz{\mathbf{z}}
\def\bp{\mathbf{p}}
\def\bq{\mathbf{q}}
\def\bone{\mathbf{1}}
\newcommand{\ignore}[1]{}
\theoremstyle{plain}
\newtheorem{theorem}{Theorem}
\newtheorem{lemma}[theorem]{Lemma}
\newtheorem{proposition}[theorem]{Proposition}
\newtheorem{fact}[theorem]{Fact}
\newtheorem*{theorem*}{Theorem}
\newtheorem*{lemma*}{Lemma}
\newtheorem*{corollary*}{Corollary}
\newtheorem*{proposition*}{Proposition}
\newtheorem*{claim*}{Claim}
\newtheorem*{fact*}{Fact}
\newtheorem*{observation*}{Observation}
\newtheorem*{assumption*}{Assumption}
\theoremstyle{definition}
\newtheorem{definition}[theorem]{Definition}
\newtheorem*{definition*}{Definition}
\newtheorem*{remark*}{Remark}
\newtheorem*{example*}{Example}
\theoremstyle{plain}
\DeclareMathAlphabet{\mathbfsf}{\encodingdefault}{\sfdefault}{bx}{n}
\DeclareMathOperator*{\argmin}{arg\,min}
\let\Pr\relax
\DeclareMathOperator{\Pr}{\mathbb{P}}
\newcommand{\lrset}[1]{\left\{#1\right\}}
\DeclarePairedDelimiter{\abs}{|}{|}
\DeclarePairedDelimiter{\norm}{\|}{\|}
\DeclarePairedDelimiter{\ceil}{\lceil}{\rceil}
\newcommand{\ind}[1]{\mathbb{I}\!\lrset{#1}}
\newcommand{\eps}{\varepsilon}
\renewcommand{\leq}{~\le~}
\let\oldtfrac\tfrac
\renewcommand{\tfrac}[2]{\smash{\oldtfrac{#1}{#2}}}
\let\nablaold\nabla
\renewcommand{\nabla}{\nablaold\mkern-2.5mu}
\newtheorem{atheorem}{Theorem}
\newcommand{\cost}{w}
\newcommand{\costVec}{\boldsymbol{w}}
\newcommand{\bbz}{\boldsymbol{z}}
\newcommand{\balpha}{\boldsymbol{\alpha}}
\newcommand{\bbe}{\boldsymbol{e}}
\def\argmin{\mathop{\rm arg\, min}}
\renewcommand{\eps}{\epsilon}
\newcommand{\bx}{{\boldsymbol{x}}}
\definecolor{PalePurp}{rgb}{0.66,0.57,0.66}
\DeclareSymbolFontAlphabet{\mathbb}{AMSb}
\newcommand{\NN}{\mathbb{N}}
\newcommand{\RR}{\mathbb{R}}
\newcommand{\val}{\operatorname{V}}
\newcommand{\RRp}{\RR_{\ge 0}}
\newcommand{\listSize}{s}
\newcommand{\threshIndex}{n}
\newcommand{\excl}{\operatorname{av}}
\author{
    Marco Bressan$^{1}$, 
    Nataly Brukhim$^{2}$, 
    Nicolò Cesa-Bianchi$^{1,3}$, \\
    Emmanuel Esposito$^{1}$, 
    Yishay Mansour$^{4,5}$, 
    Shay Moran$^{6,5}$, 
    Maximilian Thiessen$^{7}$
    \\[1em]  %
    \small $^1$Universita degli Studi di Milano, Italy \qquad
    \small $^2$Institute for Advanced Study, USA \\ 
    \small $^3$Politecnico di Milano, Italy \qquad
    \small $^4$Tel Aviv University, Israel \qquad
    \small $^5$Google Research \\ 
    \small $^6$Technion, Israel \qquad 
    \small $^7$TU Wien, Austria
}
\begin{document}

\maketitle

\begin{abstract}
Cost-sensitive loss functions are crucial in many real-world prediction problems, where different types of errors are penalized differently; for example, in medical diagnosis, a false negative prediction can lead to worse consequences than a false positive prediction. However, traditional PAC learning theory has mostly focused on the symmetric 0-1 loss, leaving cost-sensitive losses largely unaddressed.
In this work, we extend the celebrated theory of boosting to incorporate both cost-sensitive and multi-objective losses.
Cost-sensitive losses assign costs to the entries of a confusion matrix, and are used to control the sum of prediction errors accounting for the cost of each error type. Multi-objective losses, on the other hand, simultaneously track multiple cost-sensitive losses,
and are useful when the goal is to satisfy several criteria at once (e.g., minimizing false positives while keeping false negatives below a critical threshold).

We develop a comprehensive theory of cost-sensitive and multi-objective boosting, providing a taxonomy of weak learning guarantees that distinguishes which guarantees are trivial (i.e., can always be achieved), which ones are boostable (i.e., imply strong learning), and which ones are intermediate, implying non-trivial yet not arbitrarily accurate learning. For binary classification, we establish a dichotomy: a weak learning guarantee is either trivial or boostable. In the multiclass setting, we describe a more intricate landscape of intermediate weak learning guarantees. Our characterization relies on a geometric interpretation of boosting, revealing a surprising equivalence between cost-sensitive and multi-objective losses.

\smallskip\noindent\textbf{Keywords:} Boosting, Minimax theorem, cost-sensitive learning, multi-objective learning, Blackwell's approachability.
\end{abstract}

\thispagestyle{empty}
\newpage 
\setcounter{page}{1}

\newpage
\section{Introduction}\label{sec:intro}
In many machine learning applications, different types of mistakes may have very different consequences, making it crucial to consider the costs associated with them. For example, in medical diagnostics, failing to detect a serious illness (a false negative) can have life-threatening implications, whereas incorrectly diagnosing a healthy person as ill (a false positive) mostly leads to unnecessary stress and medical expenses.
This disparity in error costs is not limited to binary decisions. For example, when recommending movies to a viewer with preferences ``romance over action over horror'', misclassifying a romance film as ``horror'' is probably worse than misclassifying it as ``action''.
Besides weighting different kinds of mispredictions, one may even want to treat different kinds of mispredictions separately.
That is, instead of a cost-sensitive criterion, one may use a multi-objective criterion, specifying acceptable rates for different types of mispredictions. For example, one may find acceptable a false positive rate of (say) $10\%$ only if simultaneously the false negative rate is at most (say) $1\%$.

Despite the importance of misclassification costs in applications, the theoretical understanding of this setting is lacking. A glaring example, which motivates this work, is boosting. Broadly speaking, a boosting algorithm is a procedure that aggregates several \emph{weak learners} (whose accuracy is only marginally better than a random guess) into a single \emph{strong learner} (whose accuracy can be made as high as desired). Although this is a fundamental and well-studied machine learning technique, a theory of boosting accounting for cost-sensitive or multi-objective losses is missing, even in the simplest setting of binary classification.\footnote{There are many works on adapting AdaBoost to cost-sensitive learning, but they do not address the fundamental question of identifying the minimal assumption on the cost-sensitive function which guarantees boosting. See more in Appendix~\ref{app:cost-sensitive}.} In fact, if one can assign different costs to different kinds of mistakes, then even the meaning of ``marginaly better than a random guess'' is not immediately clear; let alone the question of whether one can boost a weak learner to a strong learner, or what precisely ``boosting'' means. The present work addresses those challenges, providing a generalized theory of boosting which unifies different types of weak learning guarantees, including cost-sensitive and multi-objective ones, and extends the standard algorithmic boosting framework beyond the current state of the art. The fundamental question that we pose is:
\begin{center}
     \emph{Which cost-sensitive and/or multi-objective learners can be boosted? And how?} 
\end{center}
In classical boosting theory for binary classification, a sharp transition occurs at a weak learner's error threshold of \( 1/2 \): if the weak learner is guaranteed to output hypotheses with an error rate below \( 1/2 \), then it can be boosted to a strong learner with arbitrarily small error, for instance by AdaBoost \citep{Freund97decision}. Thus, a weak learning guarantee below \( 1/2 \) implies strong learning. On the other hand, a guarantee of \( 1/2 \) is trivial, as it can be achieved by tossing a fair coin---see \cite{schapire2012boosting}. Therefore, guarantees above \( 1/2 \) are not boostable.

We investigate whether similar transitions exist for arbitrary cost-sensitive losses. A cost-sensitive loss \( \cost \) specifies the penalty $\cost(i,j)$ for predicting $i$ when the true label is $j$, and can penalize prediction errors unequally (e.g., in binary classification, it may penalize false negatives more than false positives). Suppose we have access to a weak learner that outputs hypotheses with a cost-sensitive loss of at most \( z > 0 \) under \( \cost \). For which values of \( z \) does this imply strong learning, so that the weak learner can be boosted to achieve arbitrarily small cost-sensitive loss according to \( \cost \)? Which values of \( z \) are trivial, meaning they can always be achieved? Are there intermediate values of $z$ that do not imply strong learning but are still non-trivial? 

A similar question arises for multi-objective losses. %
A multi-objective loss is given by a vector \( \costVec=(\cost_1, \cost_2, \ldots, \cost_r) \) where each $\cost_i$ is a cost-sensitive loss as described above. For instance, in binary classification, a natural choice would be \( \costVec=(\cost_n, \cost_p) \), where \( \cost_n \) measures false negatives and \( \cost_p \) false positives. 
Suppose again we have access to a weak learner that outputs hypotheses with loss at most \( z_i \ge 0 \) under \( \cost_i \) simultaneously for every $i$, forming a vector of guarantees \( \bz = (z_1, \ldots, z_r) \). Which $\bz$ are trivial, in that they can always be achieved? Which $\bz$ are boostable, allowing for a strong learner that achieves arbitrarily small error simultaneously for all of the losses \( \cost_i \)? And are there intermediate vectors $\bz$ that fall between trivial and boostable?

We address these questions by introducing a new perspective on random guessing, framed as either a zero-sum game or a vector-payoff game (known as a Blackwell approachability game). This game-theoretic approach applies to both cost-sensitive and multi-objective learning, leading to a complete characterization of boostability in these cases. We then extend these techniques to the multiclass setting, where the boostability landscape becomes significantly more complex. While this perspective complements existing views of boosting as a zero-sum game, prior methods are not suited to the  settings we examine here. The new tools introduced in this work 
effectively handle a broad learning framework, 
establishing a unified and comprehensive
theory of generalized boosting.

In particular, we provide extensive answers to the above questions, as follows:
\begin{itemize}
    \item \textit{Cost-sensitive losses}.
    For binary classification, we establish a crisp dichotomy: each guarantee \( z \ge 0 \) is either trivial or boostable (\Cref{thm:intro_binary_boost}). We show that this transition occurs at a critical threshold given by the value of a zero-sum game defined by \( \cost \) itself. In the multiclass setting, the dichotomy expands into a hierarchy of guarantees, ranging from fully boostable to trivial. Here, we show that there exist multiple thresholds \( 0 < v_1(\cost) < v_2(\cost) < \ldots < v_\tau(\cost) \), where \( \tau \) depends on the cost function $\cost$, and each guarantee \( z \in (v_i, v_{i+1}) \) can be boosted down to \( v_i \) (\Cref{thm:main_multiclass}). This generalizes the binary case, in which \( \tau = 1 \).

    \item \textit{Multi-objective losses}. For binary classification, we again show a clean dichotomy; however, the threshold now takes a higher-dimensional form, becoming a surface that separates trivial guarantees from boostable ones (\Cref{thm:binary_MO_boost}). \Cref{fig:boostability-thresholds} illustrates this surface for a loss vector \( \costVec \) representing false positive and false negative errors. In the multiclass setting, things become more complex: here, we show how to boost non-trivial guarantees to list-learners  (\Cref{thm:multiclass_MO_boost}), but a complete taxonomy of boostable guarantees remains elusive and is left as an open question.
    
    \item \textit{An equivalence between cost-sensitive and multi-objective losses.} We establish and exploit an equivalence between multi-objective and cost-sensitive losses that may be of independent interest (\Cref{thm:intro_general_duality}). Given a loss vector \( \costVec = (\cost_1, \ldots, \cost_r) \), consider a weak learner that outputs hypotheses with loss at most \( z_i \) for each \(\cost_i\). By linearity of expectation, it follows that for any convex combination \( \cost_{\balpha} = \sum \alpha_i \cost_i \), the weak learner’s loss with respect to \( \cost_{\balpha} \) does not exceed \( \sum \alpha_i z_i \). We prove that the converse also holds: if for each such \( \cost_{\balpha} \) there exists a weak learner with loss at most \( \sum \alpha_i z_i \), then we can efficiently aggregate these learners into one that achieves loss at most \( z_i \) simultaneously for every \(\cost_i\).
    Interestingly, a geometric perspective of this result reveals a connection to Blackwell's Approachability Theorem~\citep{blackwell1956analog} and to scalarization methods in multi-objective optimization.
\end{itemize}

\paragraph{Organization of the manuscript.} \Cref{sec:results} gives a detailed walkthrough of all our main results, their significance, and the underlying intuition. \Cref{sec:binary} addresses the binary classification case, in both the cost-sensitive and multi-objective flavors. \Cref{sec:duality} presents our equivalence connecting cost-sensitive learners with multi-objective learners. Finally, \Cref{sec:multiclass} considers the multiclass case.

\section{Main Results}
\label{sec:results}
This section outlines the problem setting and the key questions we investigate, and provides an overview of our main results. We begin from the basic technical setup. We consider the standard PAC (Probably Approximately Correct) setting \citep{Valiant84}, which models learning a concept class $\F \subseteq \Y^\X$ for a domain $\X$ and a label space $\Y=[k]$ with $k \ge 2$. We define a \emph{cost function}, or simply \emph{cost}, to be any function $\cost:\Y^2 \to [0,1]$ that satisfies $\cost(i,i)=0$ for all $i \in \Y$;\footnote{Although the range of the cost is $[0,1]$ our results can be easily extended to arbitrary costs $\cost:\Y^2 \to \mathbb{R}_{\ge 0}$.} the value $\cost(i,j)$ should be thought of as the penalty incurred by predicting $i$ when the true label is~$j$. Note that $\cost$ can be seen as a $k \times k$ matrix indexed by $\Y$. For instance, for $\Y=\{-1,+1\}$, a valid cost is $\cost=\bigl(\begin{smallmatrix} 0 & 1\\ 4 & 0 \end{smallmatrix}\bigr)$; this means that the cost of a false positive is $\cost(+1,-1) = 4$, and that of a false negative is $\cost(-1,+1) = 1$.  
For conciseness, in the binary setting we let $w_- = \cost(-1,+1)$ and $w_+ = \cost(+1,-1)$; and by some overloading of notation we denote by $\cost$ both the matrix $\bigl(\begin{smallmatrix} 0  & w_-\\ w_+ & 0 \end{smallmatrix}\bigr)$ and the vector $(w_+,w_-)$. 
For a generic cost $\cost: \Y^2 \rightarrow [0,1]$, a target function $f \in \F$, and a distribution $\D$ over $\X$, the cost-sensitive {loss} of a predictor $h : \X \to \Y$ is:
\begin{equation} 
 L_{\D}^{\cost}(h) \triangleq \E_{x \sim \D}\Bigl[\cost(h(x),f(x))\Bigr]
\end{equation} 
For example, when $\cost(i, j) = \ind{i \neq j}$, then $L_{\D}^{\cost}(h)$ simply corresponds to $\Pr_{x \sim \D}(h(x) \neq f(x))$, the standard 0-1  loss used in binary classification.
For convenience we assume throughout the paper that $\norm{w}_{\infty} \le 1$, though all our results apply more broadly.

\medskip

\noindent We begin by presenting our contributions for the binary setting (\Cref{sub:intro_bin}), followed by their extension to the general multiclass case (\Cref{subsec:main_results_multiclass}).

\subsection{Binary setting}\label{sub:intro_bin}
We begin from the binary setting, that is, the case $\Y=\{-1,+1\}$. The starting point is the definition of a suitable notion of ``weak learner'' for an arbitrary cost function $w$. 
\begin{definition}[{{\bf $(\cost, z)$-learner}}]\label{def:w_z_learner}
Let $\cost: \Y^2 \rightarrow [0,1]$ be a cost function 
and let $z \ge 0$. 
An algorithm~$\A$ is a $(\cost, z)$-learner for $\F \subseteq \Y^\X$ if there is a function $m_0: (0,1)^2 \rightarrow \mathbb{N}$ such that for every $f \in \F$, every distribution $\D$ over $\X$, and every $\epsilon,\delta \in (0,1)$ the following claim holds. If $S$ is a sample of $m_0(\epsilon, \delta)$  examples drawn i.i.d.\ from $\D$ and labeled by $f$, then $\A(S)$ returns a predictor $h: \X \rightarrow \Y$ such that $L_{\D}^{\cost}(h) \le z + \epsilon$ with probability at least $1-\delta$.
\end{definition}

\begin{figure}[!t]
    \begin{floatrow}
    {%
    \renewcommand{\arraystretch}{1.1}
    \begin{tabularx}{0.7\textwidth}{ 
       >{\centering\arraybackslash}X 
      | >{\centering\arraybackslash}X 
      | >{\centering\arraybackslash}X  }
     {} & {\bf Binary} & {\bf Multiclass} \\
     \hline 
     {\bf \quad 0-1 loss} & $1/2$ & $\frac{1}{2}, \frac{2}{3}, \frac{3}{4},\dots, \frac{k-1}{k}$ \vspace{.05cm} \\
     \hline
     {\bf \quad \ cost $\cost$} & $V(\cost) = \frac{\cost_+\cost_-}{\cost_+ +\cost_-}$ & { $v_{{2}}(\cost) \le\dots\le v_{{\tau}}(\cost)$}\vspace{.05cm}  \\
    \hline
     {\bf  {\small multi-objective} { $\costVec = (\cost_1,\dots,\cost_r)$}} &  {\small coin-attainable boundary of $C(\costVec)$} &  {\small $J$-dice-attainable boundaries of $D_J(\costVec)$}
    \end{tabularx}
    }
    \ffigbox{%
     \hspace{-3cm} \vspace{-1.6cm}
     \includegraphics[width=0.27\textwidth]{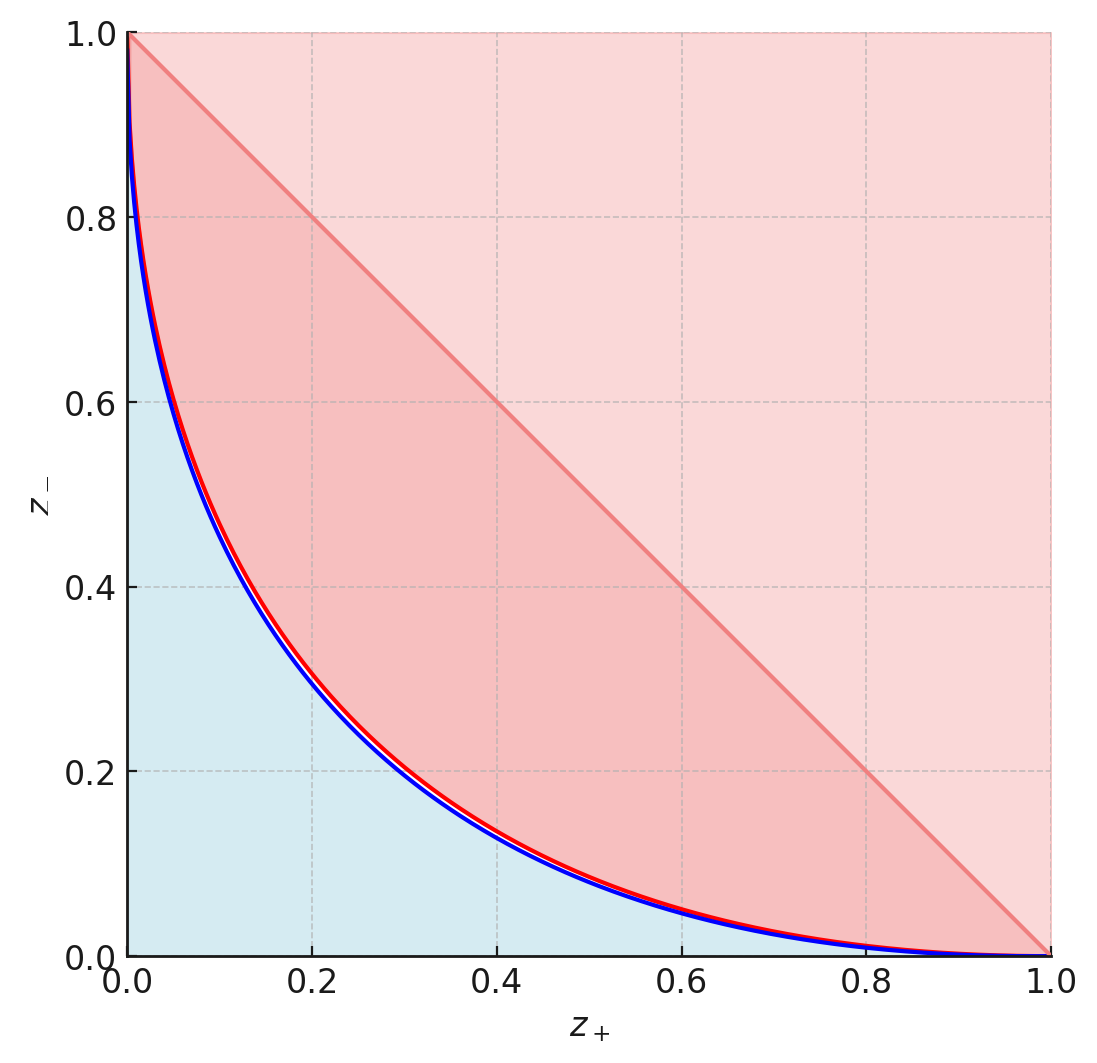}
    }{}
    \end{floatrow}
    \caption{{\bf Boostability thresholds.} \underline{Binary}. 
    For classic 0-1 loss binary boosting, it is well-known that the boostability threshold is $1/2$: any value below it can be boosted, while any value above it is trivially attainable by non-boostable learners. For any cost $\cost$, the boostability threshold is $\val(\cost)$ (see \Cref{eq:val_of_game}, \Cref{thm:intro_binary_boost}).  For the multi-objective loss, the threshold is determined by the boundary of the coin-attainable region, denoted $C(\costVec)$ (see \Cref{def:coin_attainability}, \Cref{thm:binary_MO_boost}), as illustrated in the plot on the right; each point in the plane corresponds to false-positive and false-negative errors $(z_+,z_-)$. The two colored regions in the plot correspond to (a) coin-attainable point $C(\costVec)$ (in red) and (b) boostable points $[0,1]^2\setminus C(\costVec)$ (in blue). 
    See below \Cref{thm:binary_MO_boost} for further discussion.
    \underline{Multiclass}. A similar pattern holds for multiclass boosting. For 0-1 loss, boostability is known to be determined by $k-1$ thresholds \citep{Brukhim23simple}. 
    For any cost $\cost$, the boostability thresholds are $v_{{n}}(\cost)$ (see \Cref{eq:multiclass_critical_thresholds}). For the multi-objective loss, thresholds are determined by the boundaries of dice-attainable regions $D_J(\costVec)$ (see \Cref{subsec:main_results_multiclass} for further details).
    }\label{fig:boostability-thresholds}
\end{figure}

We study the question of whether a $(\cost, z)$-learner can be boosted. Broadly speaking, a learner~$\A$ is {\it boostable} if there exists a learning algorithm $\B$ that can achieve arbitrarily small loss by aggregating a small set of predictors obtained from $\A$. More precisely, given black-box access to $\A$ and a labeled sample $S$, algorithm $\B$ invokes $\A$ on subsamples $S_1,\ldots,S_T$ of $S$ to produce weak predictors~$h_1, \dots, h_T$. It then outputs a predictor $\bar{h}(x) = g(h_1(x), \dots, h_T(x))$ using some aggregation function $g: \Y^T \rightarrow \Y$.  For instance, in classic binary boosting the function $g$ is usually a weighted majority vote. In this work, we consider arbitrary such aggregations.  The goal is to ensure that the loss of the aggregate predictor, $L_\D^{\cost}(\bar{h})$, goes to zero with $T$, resulting in a $(\cost,0)$-learner. Our guiding question can then be stated as:
\begin{center}
    {\it Can we boost a $(\cost,z)$-learner to a $(\cost,0)$-learner?}
\end{center}
In order to develop some intuition, let us consider again the case when $\cost$ yields the standard 0-1 loss. In that case, \Cref{def:w_z_learner} boils down to the standard notion of a weak PAC learner.\footnote{Notice that this definition is slightly more general than the classic weak PAC learning definition, in which $z = \nicefrac{1}{2} - \gamma$ and $\epsilon=0$, yet we consider learners which are allowed to be arbitrarily close to $z$.} Then, classic boosting theory states that every $(\cost,z)$-learner with $z<\nicefrac{1}{2}$ can be boosted to a $(\cost,0)$-learner; and it is easy to see that a loss of $\nicefrac{1}{2}$ is always achievable, simply by predicting according to a fair coin toss. Thus, the value $\nicefrac{1}{2}$ yields a sharp dichotomy: every $z < \nicefrac{1}{2}$ can be boosted and drawn to $0$, while every $z \ge \nicefrac{1}{2}$ is trivially achievable and cannot be brought below $\nicefrac{1}{2}$.

Can this classic dichotomy between ``boostable'' and ``trivial'' be extended to accommodate arbitrary cost functions? It turns out that this can be done if one uses as trivial predictors {\it biased} coins. 
Indeed, we show that, by taking such random guessing strategies into account, one can identify a general boostability threshold for all cost functions $\cost$.
However, in contrast to the 0-1 loss case, this critical threshold between the boostable and non-boostable guarantees is no longer $\nicefrac{1}{2}$; instead, it is a function of the cost $\cost$. More precisely, the threshold is determined by the outcome of a simple two-player zero-sum game, as we describe next.  

The game involves a minimizing player (the predictor) and a maximizing player (the environment). The minimizing player selects a distribution $\bp$ over $\Y=\{-1,+1\}$; similarly, the maximizing player selects a distribution $\bq$ over $\Y=\{-1,+1\}$. The payoff matrix of the game is $\cost$. The overall cost paid by the predictor is then:
\begin{equation}\label{eq:zero_sum_game_def}
    \cost(\bp,\bq) \triangleq \E_{i \sim \bp} \E_{j \sim \bq} \  \cost (i,j) \,.
\end{equation}
Following standard game theory, the \emph{value of the game} is the quantity
\begin{align}\label{eq:val_of_game}
    \val(\cost) \triangleq \min_{\bp \in \Delta_\Y} \max_{\bq \in \Delta_\Y} \cost(\bp,\bq)\,,
\end{align}
where $\Delta_\Y$ is the set of all distributions over $\Y$.
In words, $\val(\cost)$ is the smallest loss that the predictor can ensure by tossing a biased coin (i.e., $\bp$) without knowing anything about the true distribution of the labels (i.e., $\bq$). 
Now consider a value $z \ge 0$. It is easy to see that, if $z \ge \val(\cost)$, then there exists a universal $(\cost, z)$-learner---one that works for all $\F$ and all distributions $\D$ over $\X$: this is the learner that, ignoring the input sample, returns the randomized predictor $h_{\bp}$ whose outcome $h(x)$ is distributed according to $\bp$ independently for every $x \in \X$. Indeed, by \Cref{eq:val_of_game} the loss of $h_{\bp}$ is at most $\val(\cost)$ regardless of $f \in \F$ and of the distribution $\D$. Formally, we define:
\begin{definition}[Random guess]\label{def:random-guess}
    A randomized hypothesis is called a \emph{random guess} if its prediction $y \in \Y$ is independent of the input point $x \in \X$. That is, there exists a probability distribution $\bp \in \Delta_\Y$ such that $h(x)\sim \bp$ for every $x \in \X$.
\end{definition}
\noindent We prove that the non-boostability condition $z \ge \val(\cost)$ is tight. That is, we prove that every $(\cost,z)$-learner with $z<\val(\cost)$ \emph{is} boostable to a $(\cost,0)$-learner. Thus, the value of the game $\val(\cost)$ given by \Cref{eq:val_of_game} is precisely the threshold for boostability. Formally:
\begin{atheorem}[Cost-sensitive boosting, binary case]\label{thm:intro_binary_boost}
Let $\Y=\{-1,+1\}$. Let $\cost = (w_+, w_-) \in (0,1]^2$ be a cost. Then, for all $z \ge 0$, exactly one of the following holds.
\begin{itemize}[leftmargin=.6cm]\itemsep0pt
    \item  {\bf\small $(\cost, z)$ is boostable}: for every $\F \subseteq \Y^\X$, every $(\cost,z)$-learner is boostable  to a $(\cost,0)$-learner.
    \item {\bf\small $(\cost, z)$ is trivial}: there exists a random guess $h$ such that the learner that always outputs $h$ is a $(\cost, z)$-learner for every $\F \subseteq \Y^\X$.
\end{itemize}
Moreover, $(\cost,z)$ is boostable if and only if $z < V(\cost)$, where $\val(\cost) = \frac{\cost_+ \cost_-}{\cost_+ + \cost_-}$.
\end{atheorem}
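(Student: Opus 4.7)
The plan is to prove the dichotomy by handling the two regimes separately. For $z \ge V(\cost)$ I would exhibit a universal random guess; for $z < V(\cost)$ I would boost via a minimax argument combined with a \emph{biased} threshold aggregation, where the very feasibility of the threshold pins down $V(\cost) = w_+w_-/(w_+ + w_-)$ as the sharp boundary.

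The trivial side is straightforward. Let $\bp^\star \in \Delta_\Y$ attain the outer minimum in $V(\cost) = \min_{\bp}\max_{\bq}\cost(\bp,\bq)$; balancing the two rows of $\cost$ gives $\Pr_{\bp^\star}[+1] = w_-/(w_+ + w_-)$ with value $V(\cost)$. The random guess $h_{\bp^\star}$ satisfies $L_\D^\cost(h_{\bp^\star}) = \cost(\bp^\star, \bq_\D) \le V(\cost) \le z$ for every $f \in \F$ and every $\D$, where $\bq_\D$ is the induced label distribution. Since $h_{\bp^\star}$ ignores the sample, it vacuously meets \Cref{def:w_z_learner}.

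For the boostable side, fix a $(\cost,z)$-learner $\A$ with $z < V(\cost)$, and let $S$ be the training sample. A standard boosting-by-multiplicative-weights reduction (as in AdaBoost/Freund--Schapire) applied to the payoff matrix $M[x,h] = \cost(h(x),f(x))$ produces, in $T = O((\log |S|)/\eta^2)$ rounds, hypotheses $h_1,\dots,h_T$ obtained by invoking $\A$ on reweighted sub-samples of $S$, together with weights $\alpha_1,\dots,\alpha_T$ summing to one, such that
\begin{equation}
\forall x \in S:\quad \sum_{t=1}^T \alpha_t\, \cost(h_t(x), f(x)) \le z + \eta.
\end{equation}
This is an instance of the minimax theorem: the weak-learning assumption gives $\max_D \min_h \E_{x\sim D}[\cost(h(x),f(x))] \le z+\eta$, and minimax swaps the order. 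The key insight is then to aggregate $h_1,\dots,h_T$ via a \emph{biased} threshold rather than majority vote. Define
\begin{equation}
H(x) = \begin{cases} +1 & \text{if } \sum_t \alpha_t \ind{h_t(x) = +1} > \theta, \\ -1 & \text{otherwise,}\end{cases}
\end{equation}
for any $\theta \in \bigl[(z+\eta)/w_+,\; 1-(z+\eta)/w_-\bigr)$. Unrolling the per-example bound above, on positive examples $\sum_t \alpha_t \ind{h_t(x){=}+1} \ge 1-(z+\eta)/w_-$ and on negative examples $\sum_t \alpha_t \ind{h_t(x){=}+1} \le (z+\eta)/w_+$, so $H$ classifies every $x \in S$ correctly, i.e., achieves zero empirical cost-sensitive loss. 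Crucially, the threshold interval is nonempty iff $(z+\eta)(1/w_+ + 1/w_-) < 1$, i.e., $z + \eta < V(\cost)$, which is precisely where the critical value enters: choosing $\eta < V(\cost) - z$ makes the construction feasible. A standard uniform-convergence argument for the composite class of $T$-fold thresholded mixtures over $\mathrm{range}(\A)$ then turns empirical perfection into $L_\D^\cost(H) \le \epsilon$ with probability $1-\delta$, using a sample of size polynomial in $1/\epsilon$, $1/\delta$, $m_0$, $T$, and $1/(V(\cost)-z)$.

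The main obstacle is the biased-threshold step. Classical boosting uses majority vote (threshold $1/2$), which here would only suffice when $z < \min(w_+,w_-)/2$, a strictly smaller bound than $z < V(\cost)$ whenever $w_+ \ne w_-$ (since $V(\cost) \ge \min(w_+,w_-)/2$). Recognizing that one should instead balance the tolerated false-positive rate $(z+\eta)/w_+$ against the tolerated false-negative rate $(z+\eta)/w_-$ is what yields a tight boostability threshold; everything else is bookkeeping, namely propagating the weak learner's $(\epsilon,\delta)$ guarantee through the $T$ boosting rounds and invoking generalization for the $T$-term threshold ensemble.
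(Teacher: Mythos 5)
Your overall approach closely mirrors the paper's: a Hedge-style regret bound to get a mixture of weak hypotheses with low per-example cost-sensitive loss, followed by a biased threshold aggregation whose feasibility pins down $V(\cost)$, plus the random-guess construction for the trivial side. In particular, your interval $\theta \in \bigl[(z+\eta)/w_+,\, 1-(z+\eta)/w_-\bigr)$ is equivalent to the paper's fixed rule $\widehat h_S(x)=+1 \iff w_-F(x,-1) < w_+F(x,+1)$, since that rule is a threshold at $\theta^\star = w_-/(w_++w_-) = V(\cost)/w_+$, which always sits inside your interval whenever $z+\eta < V(\cost)$. Your observation that majority vote would only cover $z<\min(w_+,w_-)/2$, hence biased thresholding is essential for tightness, is exactly the right insight and matches the paper's reasoning.

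There are two gaps worth flagging. The more serious one is the generalization step. You invoke ``a standard uniform-convergence argument for the composite class of $T$-fold thresholded mixtures over $\mathrm{range}(\A)$,'' but nothing in \Cref{def:w_z_learner} bounds the VC dimension of $\mathrm{range}(\A)$: the weak learner may be improper and output hypotheses from an arbitrarily rich set. Uniform convergence therefore does not apply. The paper instead uses a sample-compression argument (its \Cref{lem:bin_scalar_boost_generalization}, following Theorem~2.8 of \citet{schapire2012boosting}): the final predictor $\widehat h_S$ is reconstructible from the $T$ sub-samples $S_1,\dots,S_T$ of total size $\kappa=\widehat m\cdot T=o(m)$, which yields the generalization bound independently of any capacity assumption on $\mathrm{range}(\A)$. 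Swapping your uniform-convergence step for this compression argument repairs the proof without touching anything else.

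The second, smaller omission concerns the ``exactly one'' claim and the final ``if and only if.'' You establish that $z\ge V(\cost)$ is trivial and that $z<V(\cost)$ is boostable, but you never argue that $z\ge V(\cost)$ is \emph{not} boostable. The paper handles this in its \Cref{thm:bin_scalar_LB}: since $w_+,w_->0$ forces $V(\cost)>0$, a $(\cost,0)$-learner would be a strong PAC learner, so choosing $\F=\Y^{\mathbb N}$ (which is not PAC-learnable) gives a $(\cost,z)$-learner --- the trivial one --- that cannot be boosted below $V(\cost)$. Without this step you have not shown the two bullets are mutually exclusive nor that the transition is sharp.
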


\noindent The proof of \Cref{thm:intro_binary_boost} is given in \Cref{sec:binary}.
In a nutshell, \Cref{thm:intro_binary_boost} says that anything that beats the weakest possible learner (a coin) is as powerful as a the strongest possible learner; between these two extremes there is no middle ground.  Remarkably, the proof of \Cref{thm:intro_binary_boost} is simple and yet it relies on {\it three} distinct applications of von Neumann’s Minimax Theorem! \citep{neumann1928theorie}.  The first application, which also appears in classical binary boosting, is used to aggregate a distribution over the weak learners. The other two applications are unique to the cost-sensitive setting: one arises in the analysis of the boosting algorithm (first item of \Cref{thm:intro_binary_boost}), and the other one in defining the trivial learner (second item of \Cref{thm:intro_binary_boost}). These last two applications are both based on the zero-sum game defined above. We also note that the first item of \Cref{thm:intro_binary_boost} is obtained  constructively by an efficient boosting algorithm, as we detail in \Cref{sec:binary}.

\subsubsection{Multi-objective losses}
\begin{figure}[t]
    \centering
    {{\includegraphics[width=45mm]{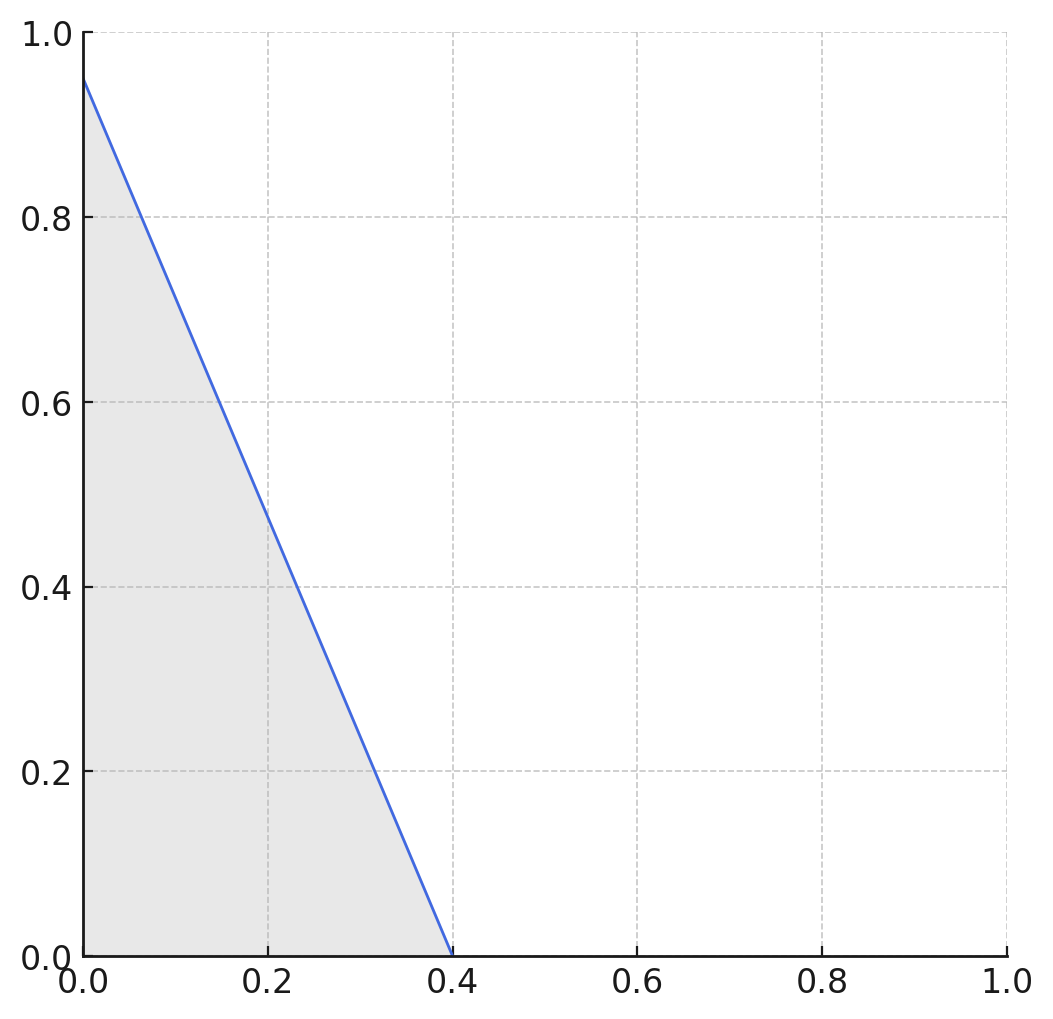} }}%
    \quad {{\includegraphics[width=45mm]{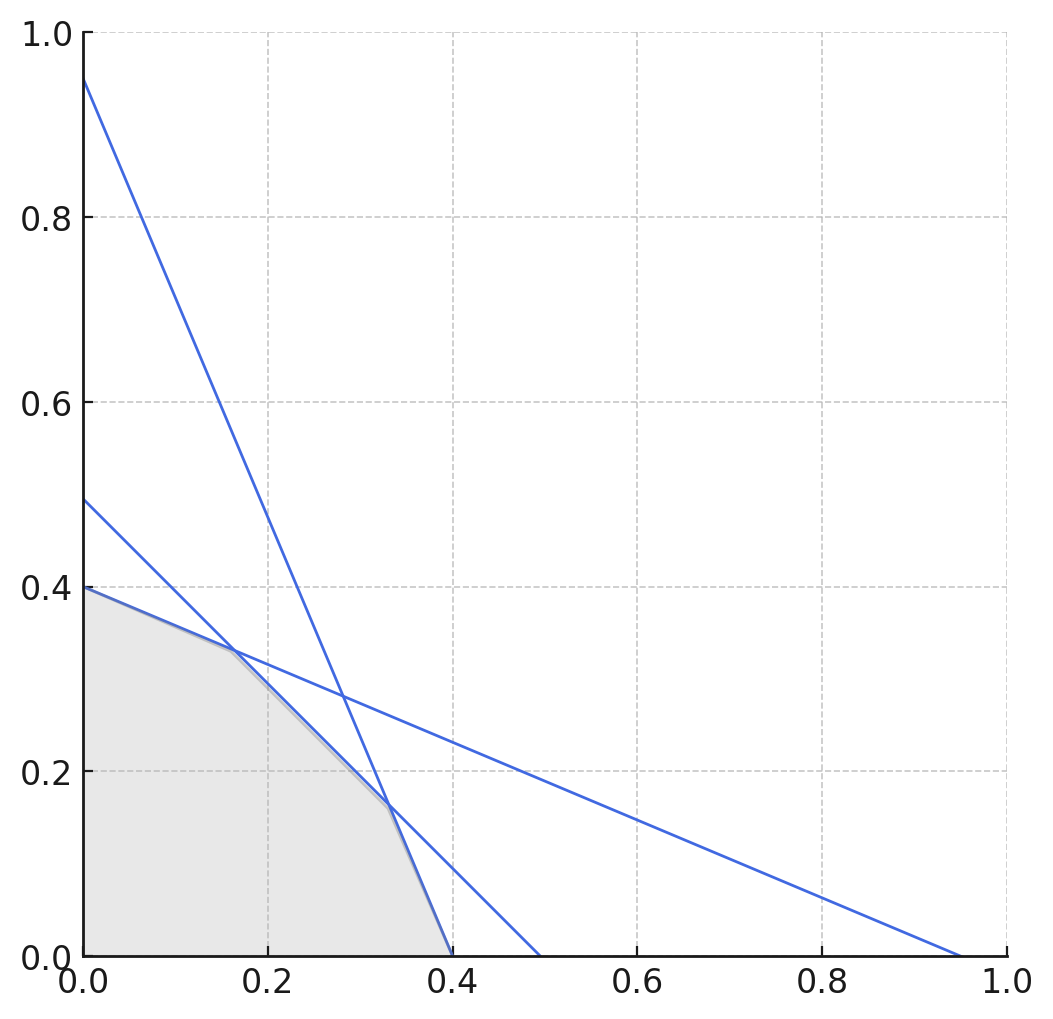} }}%
    \hfill {{\includegraphics[width=45mm]{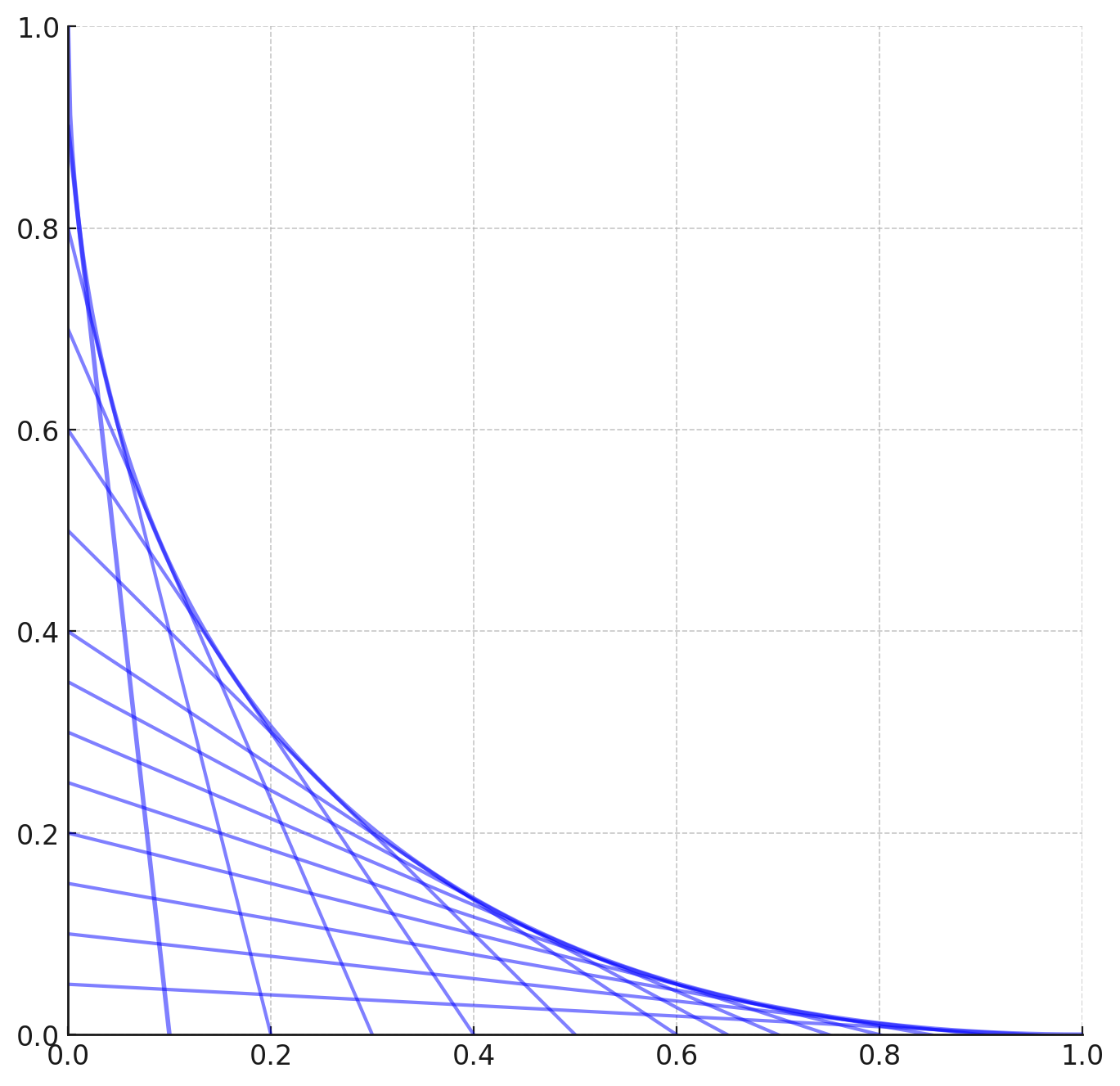} }}%
    \caption{ In all plots, each point $e = (e_+,e_-)$ in the plane corresponds to false-positive and false-negative errors. {\small(Left) {\bf Cost-sensitive vs.\ multi-objective}}. The leftmost figure corresponds to a cost-sensitive guarantee $(\cost,z)$, where the blue line is given by $\langle e, w\rangle = z$. The shaded area is the feasible region of points $e$ satisfying the  guarantee. 
    The second figure corresponds to a multi-objective guarantee $(\costVec,\bbz)$, where $r=3$ corresponds to 3 different lines, each of the form  $\langle e, w_i\rangle = z_i$. The shaded area corresponds to all points satisfying all guarantees, i.e, attaining  $(\costVec,\bbz)$. {\small (Right) {\bf Envelope of the coin-attainable region}}. The rightmost figure presents many different lines; each line correspond to a guarantee $(\cost,\val(\cost))$ and is of the form $\langle e, w\rangle = \val(\cost)$. %
    Then, the coin-attainable boundary curve in the case of false-positive and false-negative costs   
    (i.e., $\costVec = (w_p,w_n)$), is given by $\sqrt{e_+} + \sqrt{e_-} = 1$ (as shown in \Cref{sec:binary}). Furthermore, that boundary of the coin-attainable points is the same curve obtained by the envelope of the different lines. This ties between the value $\val(\cost)$   to the coin-attainable area $C(\costVec)$ 
    (see below \Cref{thm:binary_MO_boost} for further discussion).
    }%
    \label{fig:MO_vs_CS}%
\end{figure}
We turn our attention to \emph{multi-objective} losses. A multi-objective loss is specified by a multi-objective cost, that is, a vector $\costVec = (\cost_1,\ldots,\cost_r)$ where $\cost_i:\Y^2\to[0,1]^2$ is a cost for every $i=1,\ldots,r$. This allows one to express several criteria by which a learner is evaluated; for example, it allows us to measure separately the false positive rate and the false negative rate, giving a more fine-grained control over the error bounds.
The guarantees of a learner can then be expressed by bounding $L_{\D}^{\cost_i}(h)$ by some $z_i \ge 0$ simultaneously for each $i =1,\ldots, r$.
This leads to the following generalization of \Cref{def:w_z_learner}.
\begin{definition}[{{\bf $(\costVec, \bbz)$-learner}}]\label{def:vec_z_learner}
Let $r \in \mathbb{N}$, let $\costVec = (\cost_1,\ldots,\cost_r)$ where each $\cost_i:\Y^2\to[0,1]^2$ is a cost function, and let $\bbz \in [0,1]^r$. 
An algorithm $\A$ is a $(\costVec, \bbz)$-learner for $\F \subseteq \Y^\X$ if there is a function $m_0: (0,1)^2 \rightarrow \mathbb{N}$ such that for every $f \in \F$, every distribution $\D$ over $\X$, and every $\epsilon,\delta \in (0,1)$ what follows holds. If $S$ is a sample of $m_0(\epsilon, \delta)$  examples drawn i.i.d.\ from $\D$ and labeled by $f$, then $\A(S)$ returns a predictor $h: \X \rightarrow \Y$ such that with probability at least $1-\delta$: 
$$\forall \ i = 1, \ldots, r,   \qquad  
  L^{\cost_i}_\D(h) \le z_{i} + \epsilon \;.
  $$
\end{definition}
\noindent 
Consider for example $\costVec = (\cost_p, \cost_n)$ with $\cost_p=\bigl(\begin{smallmatrix} 0 & 0\\1  & 0 \end{smallmatrix}\bigr)$ and $\cost_n=\bigl(\begin{smallmatrix} 0 & 1\\0 & 0 \end{smallmatrix}\bigr)$. Then $\cost_p$ counts the false positives, $\cost_n$ the false negatives. Thus a $(\costVec,\bz)$-learner for, say, $\bz=(0.1,0.4)$ ensures simultaneously a false-positive rate arbitrarily close to $0.1$ and false-negative arbitrarily close to $0.4$. See \Cref{fig:MO_vs_CS} for illustrative geometric examples.

Like for the cost-sensitive case, we address the question of which $(\costVec, \bbz)$-learners are boostable. In other words we ask: given $\costVec$, what is the right ``threshold'' for $\bz$? Equivalently, for which $\bz$ can we always boost a $(\costVec, \bbz)$-learner to a $(\costVec, \mathbf{0})$-learner, and for which $\bz$ do there exist $(\costVec, \bbz)$-learners that are not boostable?
It turns out that the answer is more nuanced than in the cost-sensitive case of \Cref{sub:intro_bin}, and yet we can prove the same ``all or nothing'' phenomenon of \Cref{thm:intro_binary_boost}, as depicted in \Cref{fig:boostability-thresholds}.\footnote{Note that the light-red area (upper triangle) is attainable by coins (distributions) that are oblivious to the true distribution marginals. For example, it is trivial to attain the point $(\frac{1}{2}, \frac{1}{2})$ by a fair coin, and the points  $(0, 1)$ or $(1, 0)$ by a ``degenerate coin'' that is simply a constant function which always predicts the same label.} %
In particular, every $\bz \in [0,1]^r$ is either entirely boostable, in the sense that every $(\costVec, \bbz)$-learner can be boosted to a $(\costVec, \mathbf{0})$-learner, or \emph{trivial}, in the sense that there exists a $(\costVec, \bbz)$-learner whose output is always a hypothesis that can be simulated by a (biased) random coin. 
To this end we introduce the notion of {\it coin attainability}. This is the multi-objective equivalent of the condition $z \ge \val(\cost)$ for $\cost$ being a scalar cost as in \Cref{sub:intro_bin}.
\begin{definition}[{Coin attainability}]\label{def:coin_attainability}
    Let  $\costVec = (\cost_1,\ldots,\cost_r)$ where $\cost_i: \Y^2 \rightarrow [0,1]$ is a cost function for every $i=1,\ldots,r$, and let $\bz \in [0,1]^r$. We say that $(\costVec, \bbz)$ is \emph{coin-attainable} if
    \begin{equation}
        \forall \bq \in \Delta_\Y, \ \exists \bp \in \Delta_\Y, \ \forall i = 1, \ldots, r, \qquad \cost_i(\bp,\bq) \le z_i \;.
    \end{equation}
    The \emph{coin-attainable region} of $\costVec$ is the set $\C(\costVec)$ of all $\bbz$ such that $(\costVec, \bbz)$ is coin-attainable.
\end{definition}
\noindent 
It is not hard to see that, if $\bz$ is coin-attainable, there exists a universal trivial $(\costVec,\bz)$-learner for all $\F \subseteq \Y^\X$. Fix indeed any distribution $\D$ over $\X$ and any $f \in \F$. First, one can learn the marginal $\bq$ of $\D$ over $\Y$ within, say, total variation distance $\eps$; then, by the coin-attainability of $\bz$, one can return a hypothesis $h$ such that $h(x)\sim\bp$, where $\bp$ ensures $\cost_i(\bp,\bq) \le z_i + \eps$ for all $i=1,\ldots,r$. 
Formally, recalling the definition of random guess (\Cref{def:random-guess}), we have:
\begin{definition}[Trivial learner]\label{def:trivial-learner}
A learning algorithm $\A$ is \emph{trivial} if it only outputs random guesses.
\end{definition}
\noindent Thus, a trivial learner can (at best) learn the \emph{best} random guess and return it. Clearly, such a learner is in general not boostable to a $(\costVec, \bzero)$-learner. 
The main result of this subsection is:
\begin{atheorem}[Multi-objective boosting, binary case]\label{thm:binary_MO_boost}
Let $\Y = \{-1,1\}$,  let $\costVec = (\cost_1,\ldots,\cost_r)$ where $\cost_i: \Y^2 \rightarrow [0,1]$ is a cost, and let $\bbz \in [0,1]^r$. Then,   exactly one of the following holds. 
\begin{itemize}[leftmargin=.6cm]\itemsep0pt
    \item  {\bf\small $(\costVec, \bbz)$ is boostable}: for every $\F  \subseteq \Y^\X$, every
    $(\costVec, \bbz)$-learner is boostable to a $(\costVec,\mathbf{0})$-learner.
    \item  {\bf\small $(\costVec, \bbz)$ is trivial}: there exists a trivial learner that is a $(\costVec, \bbz)$-learner for all $\F \subseteq \Y^\X$.
\end{itemize}
Moreover, $(\costVec,\bz)$ is boostable if and only if $\bz \notin C(\costVec)$.
\end{atheorem}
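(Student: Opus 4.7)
The plan is to prove the dichotomy by handling the two cases $\bz \in \C(\costVec)$ and $\bz \notin \C(\costVec)$ separately, leaning on the binary cost-sensitive result \Cref{thm:intro_binary_boost} and on two applications of von Neumann's minimax theorem.

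For the trivial direction ($\bz \in \C(\costVec)$), I would construct a trivial $(\costVec,\bz)$-learner directly from coin-attainability. Given a labeled sample $S$ drawn i.i.d.\ from $\D$, the learner first estimates the label marginal $\bq$ of $\D$ to total-variation error $O(\epsilon)$ by standard concentration, producing $\hat\bq$. Using coin-attainability at $\hat\bq$, it then picks $\bp \in \Delta_\Y$ satisfying $\cost_i(\bp,\hat\bq) \le z_i$ for all $i$, and outputs the random guess $h$ with $h(x)\sim\bp$ independently for each $x$. Bilinearity of $\cost_i$ in its arguments yields $L^{\cost_i}_\D(h) = \cost_i(\bp,\bq) \le z_i + O(\epsilon)$, giving a $(\costVec,\bz)$-learner that works for every $\F$. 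This learner is trivial in the sense of \Cref{def:trivial-learner}, and since every aggregation of random guesses remains a random guess (its output distribution is independent of $x$), it cannot be boosted to a $(\costVec,\bzero)$-learner on any class $\F$ where nontrivial learning is required; this rules out boostability.

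For the boostable direction ($\bz \notin \C(\costVec)$), the crux is a scalarization lemma, which I would prove by minimax:
\[
\bz \notin \C(\costVec) \quad\Longleftrightarrow\quad \exists\, \balpha \in \Delta_r \text{ such that } \langle \balpha, \bz \rangle < \val(\cost_\balpha),
\]
where $\cost_\balpha = \sum_i \alpha_i \cost_i$ and $\Delta_r$ is the $r$-simplex. The easy direction uses linearity: from $\bz \in \C(\costVec)$, for every $\bq$ a coin-attainability witness $\bp_\bq$ satisfies $\cost_\balpha(\bp_\bq,\bq) \le \langle \balpha, \bz \rangle$, so $\min_\bp \cost_\balpha(\bp,\bq) \le \langle \balpha, \bz \rangle$, and taking $\max_\bq$ together with von Neumann gives $\val(\cost_\balpha) \le \langle \balpha, \bz \rangle$. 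For the converse, $\bz \notin \C(\costVec)$ provides $\bq$ with $\min_\bp \max_i (\cost_i(\bp,\bq) - z_i) > 0$; rewriting $\max_i(\cdot)$ as $\max_{\balpha \in \Delta_r} \langle \balpha, \cdot \rangle$ and swapping $\min_\bp$ with $\max_\balpha$ via minimax (the expression is bilinear in $(\bp,\balpha)$ over compact convex sets) yields $\balpha^* \in \Delta_r$ with $\val(\cost_{\balpha^*}) \ge \min_\bp \cost_{\balpha^*}(\bp,\bq) > \langle \balpha^*, \bz \rangle$. I would then upgrade $\balpha^*$ to be \emph{strictly positive}: the map $\balpha \mapsto \val(\cost_\balpha) - \langle \balpha, \bz \rangle$ is continuous on $\Delta_r$ (being the $\min$--$\max$ of a bilinear form on compact sets), so the set of witnessing $\balpha$'s is open; since the relative interior of $\Delta_r$ is dense in $\Delta_r$, some $\balpha^* \in \Delta_r$ with $\alpha^*_i > 0$ for all $i$ satisfies $\langle \balpha^*, \bz \rangle < \val(\cost_{\balpha^*})$.

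Given such a strictly positive $\balpha^*$, boosting proceeds in one shot: by linearity of expectation, any $(\costVec,\bz)$-learner $\A$ is automatically a $(\cost_{\balpha^*}, \langle \balpha^*, \bz \rangle)$-learner, and the gap $\val(\cost_{\balpha^*}) - \langle \balpha^*, \bz \rangle > 0$ lets me invoke \Cref{thm:intro_binary_boost} to boost $\A$ into a $(\cost_{\balpha^*},0)$-learner $\B$. Its aggregate predictor $\bar h$ satisfies $L^{\cost_{\balpha^*}}_\D(\bar h) \le \eta$ for arbitrary $\eta > 0$, and by nonnegativity of the $\cost_i$ together with $\alpha^*_i > 0$, this forces $L^{\cost_i}_\D(\bar h) \le \eta/\alpha^*_i$ for every $i$; shrinking $\eta$ thus produces a $(\costVec,\bzero)$-learner. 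I expect the scalarization lemma---and in particular the upgrade of $\balpha^*$ to the relative interior of $\Delta_r$---to be the main obstacle, since without strict positivity a vanishing coordinate $\alpha^*_i = 0$ would leave $L^{\cost_i}_\D(\bar h)$ uncontrolled and the reduction to the binary cost-sensitive case would collapse.
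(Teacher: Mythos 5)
Your proposal is correct and follows essentially the same route as the paper's proof. The trivial direction is identical (estimate the marginal, pick $\bp$ by coin-attainability, output a random guess). For the boostable direction, your scalarization lemma — $\bz \notin \C(\costVec)$ iff some $\balpha \in \Delta_r$ has $\langle\balpha,\bz\rangle < \val(\cost_\balpha)$ — is exactly the content of \Cref{prop:intro_coin_duality} specialized to $J=\Y$, which the paper proves the same way (via a minimax swap in $(\bp,\balpha)$; see \Cref{lem:proving_duality}); you re-derive it inline rather than citing it, which is fine. Your upgrade of $\balpha$ to the relative interior by continuity-and-density is the same argument as the paper's explicit perturbation $\balpha(\eps)=(1-\eps)\balpha+\tfrac{\eps}{r}\bone$, just packaged differently. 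Two small points worth tightening: (i) in the trivial direction, saying the learner "cannot be boosted on any class where nontrivial learning is required" should be made concrete by fixing a non-PAC-learnable $\F$ (as the paper does) so that a $(\costVec,\bzero)$-learner provably cannot exist; (ii) the invocation of \Cref{thm:intro_binary_boost} requires $\cost_{\balpha^*}\in(0,1]^2$, but this is automatic since $\val(\cost_{\balpha^*}) > \langle\balpha^*,\bz\rangle\ge 0$ forces both off-diagonal entries of $\cost_{\balpha^*}$ to be positive — a one-line remark you might add for completeness.
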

The proof of \Cref{thm:binary_MO_boost} is given in \Cref{sec:binary}, and is based on a certain scalarization of the loss amenable to a reduction to \Cref{thm:intro_binary_boost}. Let us spend a few words on \Cref{thm:binary_MO_boost}. 
First, note how \Cref{def:coin_attainability} is reminiscent of the zero-sum game of \Cref{sub:intro_bin}. The crucial difference, however, is that now the maximizing player (the environment) plays first, and the minimizing player (the predictor) can then choose $\bp$ as a function of $\bq$. As shown in more detail in \Cref{subsec:binary:vector}, this is essential for proving the second item---that is, for coin-attainability to capture exactly the guarantees attainable by trivial learners. For the scalar-valued game of \Cref{sub:intro_bin}, instead, by von Neumann's Minimax Theorem the order of the players is immaterial to the value of the game~$\val(\cost)$.

Second, note that the multi-objective setting is more general than that of a (scalar-valued) zero-sum game, as here the payoff is vector-valued. This can be thought of as a Blackwell approachability game \citep{blackwell1956analog}, a generalization of zero-sum games to vectorial payoffs. Indeed, it turns out that there is a connection between these two notions, in the sense of Blackwell's theorem: we prove that a certain vectorial bound~$\bz$ can be attained (with respect to a vector-valued loss~$\costVec$) if an only if all the ``scalarizations'' of $\bz$ can be attained with respect to the corresponding scalarizations of $\costVec$. In fact, we can show that this is captured formally by an equivalence between cost-sensitive learners and multi-objective learners, as explained in \Cref{sub:intro_mo_equiv}.

\paragraph{Geometric interpretation of \Cref{thm:binary_MO_boost}.} A  multi-objective guarantee $\bz$ with respect to a multi-objective cost $\costVec$ can be viewed as a set of linear constraints, $\cost_i \le z_i$. This identifies a region that is convex and downward closed, see \Cref{fig:MO_vs_CS}.
This holds true even in the multiclass setting, when there are more than $2$ labels. Similarly, the set of all coin-attainable points $C(\costVec)$ for the false-negative/false-positive objective $\costVec = (\cost_p,\cost_n)$ can be shown to be convex and upward-closed\footnote{A set $C$ is upward-closed if for all $\bbz \in C$ and $\bbz' \in [0,1]^r$ such that $\bbz' \ge \bbz$ coordinate-wise, then $\bbz' \in C$. See Section \ref{sec:binary} for further details.}.
This is illustrated in \Cref{fig:boostability-thresholds}, as well as the set of all boostable points, given in the red and blue areas, respectively. 

As shown in \Cref{fig:MO_vs_CS}, the coin-attainable boundary curve in this case, is given by $\sqrt{e_+} + \sqrt{e_-} = 1$ (as shown in \Cref{sec:binary}), and it is exactly the same curve obtained by the envelope of the lines of the form $\langle e, w \rangle = V(w)$, for any cost $w: \Y^2 \rightarrow [0,1]$. This ties between the value $\val(\cost)$   to the coin-attainable area $C(\costVec)$. 

For a general pair $(\costVec, \bbz)$, one can characterize if it is coin-attainable by examining whether its associated convex feasible region is intersected with the set coin-attainable points as depicted in \Cref{fig:boostability-thresholds}. This intuition is also captured in \Cref{prop:intro_coin_duality} below.

\subsubsection{Cost-sensitive vs. multi-objective losses: a general equivalence}\label{sub:intro_mo_equiv}
We establish a general, formal connection between cost-sensitive and multi-objective learning. The starting point is the straightforward observation that, by definition, a $(\costVec, \bbz)$-learner is a $(w_i,z_i)$-learner for every $i=1,\ldots,r$. Does the converse hold, too? That is, if for each $i=1,\ldots,r$ there exist a $(w_i,z_i)$-learner $\A_i$, can we conclude that there exists a $(\costVec, \bbz)$-learner $\A$?
Perhaps surprisingly, we can show that this holds if we consider all \emph{convex combinations} of the scalar guarantees $(w_i,z_i)$. Formally, given a distribution $\balpha=(\alpha_1,\ldots,\alpha_r)\in \Delta_r$, let $w_{\balpha} = \sum_{i=1}^r \alpha_i w_i$ and $z_{\balpha} = \langle\balpha,\bbz\rangle$.
It is easy to see that the observation above continues to hold: a $(\costVec, \bbz)$-learner is a $(w_{\balpha},z_{\balpha})$-learner for every such $\balpha$. The next result shows that the converse holds, too: the existence of a $(w_{\balpha},z_{\balpha})$-learner for every $\balpha \in \Delta_{r}$ implies the existence of a $(\costVec, \bbz)$-learner.
\begin{theorem}[Equivalence of learners]\label{thm:intro_general_duality}
Let $\F \subseteq \Y^\X$, let $\costVec = (\cost_1,\ldots,\cost_r)$ where each $\cost_i:\Y^2 \to [0,1]$ is a cost function, and let $\bbz \in [0,1]^r$.  Then, the following are equivalent.
\begin{enumerate}\itemsep0pt
    \item There exists a $(\costVec, \bbz)$-learner for $\F$.
    \item For every $\balpha \in \Delta_r$ there exists a $\big(w_{\balpha}, z_{\balpha} \big)$-learner for $\F$. %
\end{enumerate} 
Moreover, the equivalence is obtained constructively by an efficient algorithm. 
\end{theorem}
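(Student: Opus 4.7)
}
The direction $(1)\Rightarrow(2)$ is essentially free: if $\A$ is a $(\costVec,\bbz)$-learner, then for any $\balpha\in\Delta_r$ the output $h=\A(S)$ satisfies, by linearity of expectation,
\[
L_\D^{w_{\balpha}}(h) \eq \sum_{i=1}^r \alpha_i L_\D^{w_i}(h) \le \sum_{i=1}^r \alpha_i(z_i+\eps) \eq z_{\balpha}+\eps
\]
with probability at least $1-\delta$, so the very same $\A$ is a $(w_{\balpha},z_{\balpha})$-learner. The real content is the converse $(2)\Rightarrow(1)$, and I would prove it by combining a minimax/no-regret argument over the simplex $\Delta_r$ with the given scalar learners.

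The high-level idea is to view the problem as a two-player game: the \emph{meta-adversary} picks $\balpha\in\Delta_r$, while the \emph{meta-learner} picks a distribution $\mu$ over hypotheses; the payoff is the scalarized excess loss $u(\mu,\balpha)=\E_{h\sim\mu}\bigl[L_\D^{w_{\balpha}}(h)\bigr]-z_{\balpha}$. By assumption, for every pure strategy $\balpha$ of the adversary the scalar learner $\A_{\balpha}$ produces a hypothesis $h_{\balpha}$ with $u(\delta_{h_{\balpha}},\balpha)\le\eps$, so $\max_{\balpha}\min_{\mu} u(\mu,\balpha)\le\eps$. Applying von Neumann's minimax theorem (the $\balpha$-simplex is compact and $u$ is bilinear, so no topological difficulties arise) yields a distribution $\mu^\st$ over hypotheses with $\max_{\balpha} u(\mu^\st,\balpha)\le\eps$; plugging $\balpha=\bbe_i$ gives exactly $\E_{h\sim\mu^\st}[L_\D^{w_i}(h)]\le z_i+\eps$ simultaneously for every $i$, which is the desired $(\costVec,\bbz)$-guarantee (when the learner is allowed to output a randomized predictor, as is standard).

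To make this \emph{constructive and efficient}, I would replace the abstract minimax invocation with a Freund--Schapire style reduction via multiplicative weights on $\Delta_r$, exactly mirroring the proof of AdaBoost. The algorithm proceeds for $T$ rounds maintaining weights $\balpha_t\in\Delta_r$: at round $t$ it calls the scalar learner $\A_{\balpha_t}$ on a fresh subsample of $S$ to obtain $h_t$ satisfying $L_\D^{w_{\balpha_t}}(h_t)\le z_{\balpha_t}+\eps$, then updates $\balpha_{t+1}$ via the Hedge rule on the loss vector $(\widehat{L}^{w_i}(h_t)-z_i)_{i=1}^r$ (using an empirical estimate $\widehat L$ of the true losses). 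The standard Hedge regret bound gives, for every $i^\st$,
\[
\frac{1}{T}\sum_{t=1}^T \bigl(L_\D^{w_{i^\st}}(h_t)-z_{i^\st}\bigr) \le \frac{1}{T}\sum_{t=1}^T \bigl(L_\D^{w_{\balpha_t}}(h_t)-z_{\balpha_t}\bigr) + O\!\bigl(\sqrt{\tfrac{\log r}{T}}\bigr) \le \eps + O\!\bigl(\sqrt{\tfrac{\log r}{T}}\bigr),
\]
where the inner step uses the weak-learning guarantee round by round. Choosing $T=\Theta(\log(r)/\eps^2)$ makes the right-hand side $O(\eps)$, so the randomized hypothesis $\bar h$ that picks $h_t$ uniformly from $\{h_1,\ldots,h_T\}$ simultaneously satisfies $L_\D^{w_i}(\bar h)\le z_i+O(\eps)$ for all $i$, as required.

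The main obstacle is not the minimax step itself but the bookkeeping for sample reuse and empirical estimates: at each round we need the $\widehat L^{w_i}(h_t)$ to be accurate enough (uniformly over $i$ and $t$) that substituting them for $L_\D^{w_i}(h_t)$ in the Hedge analysis only costs an additional $O(\eps)$, and we must make sure the per-round calls to $\A_{\balpha_t}$ draw from fresh, appropriately sized samples so that the $(w_{\balpha_t},z_{\balpha_t})$ guarantee holds with probability at least $1-\delta/T$ on each round. A standard union bound plus Hoeffding over the $T\cdot r$ loss estimates, together with a sample-complexity budget of $m_0(\eps,\delta/T)\cdot T$ for the scalar learners, resolves this and yields the claimed efficient and constructive algorithm.
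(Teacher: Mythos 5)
Your proposal is correct and follows the same broad architecture as the paper's proof: run Hedge on the $r$-simplex, at each round call the scalar learner $\A_{\balpha_t}$ for the current scalarization, and then extract a multi-objective learner from the sequence $h_1,\ldots,h_T$. The $(1)\Rightarrow(2)$ direction is handled identically (linearity of expectation), and you correctly identify where the regret bound and the per-round weak-learning guarantee plug in.

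The concrete difference is the loss fed to Hedge. You update the weights $\balpha_t$ with the real-valued excess losses $\widehat L^{w_i}(h_t)-z_i$; the paper instead feeds Hedge the binary indicators $M(h_t,i)=\ind{\frac{1}{m}\sum_j w_i(h_t(x_j),y_j) > z_i}$ (\Cref{alg:CS_to_MO}). This changes the shape of the final guarantee. With real-valued gains, the Hedge regret bound gives
\[
\max_i \frac{1}{T}\sum_t \bigl(L^{w_i}_\D(h_t)-z_i\bigr) \le O(\eps),
\]
so the \emph{randomized mixture} $\bar h$ that draws $t\sim U(T)$ afresh on each input satisfies $L^{w_i}_\D(\bar h)\le z_i+O(\eps)$ for every $i$ deterministically once the $h_t$ are fixed. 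The paper's indicator version bounds instead the average number of rounds on which \emph{some} constraint is violated, and then argues that a \emph{single} $h_t$ drawn uniformly from $\{h_1,\ldots,h_T\}$ satisfies all $r$ constraints simultaneously with constant probability, followed by a confidence-amplification step. Your route buys a cleaner expectation bound and sidesteps the Markov-style passage from ``weighted excess loss is small'' to ``weighted fraction of violated indicators is small''; the paper's route buys a single fixed hypothesis (rather than a fresh-randomness mixture) satisfying all constraints at once. Under \Cref{def:vec_z_learner} either output is a valid multi-objective learner, so both are fine. Your remarks about fresh subsamples per round, union-bounding the per-round failure probability to $\delta/T$, Hoeffding control of the $T\cdot r$ empirical loss estimates, and a compression-based generalization bound all match the paper's treatment (cf.\ the invocation of Theorem~30.2 of \citet{shalev2014understanding}). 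One small thing to make explicit when writing this up fully: you should allow $\bar h$ to be a randomized predictor (the paper does the same), or alternatively lose a factor of $r$ via Markov if you insist on outputting a single deterministic $h_t$.
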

\noindent 
We remark that \Cref{thm:intro_general_duality} holds in the multiclass case, too (i.e., for arbitrary sets $\Y$). Thus, the interplay between multi-objective and cost-sensitive learning is a general phenomenon, not limited to binary classifiers. 
The reduction from multi-objective to cost-sensitive in \Cref{thm:intro_general_duality} resembles the weighted sum scalarization method for multi-objective optimization \citep[Chapter 3]{ehrgott2005multicriteria}. However, it is unclear whether this similarity can be used to provide further insights in our problem.

In the same vein as \Cref{thm:intro_general_duality}, we prove an equivalence between trivial guarantees of cost-sensitive learners (see \Cref{thm:intro_binary_boost}) and trivial guarantees of multi-objective learners (see \Cref{thm:binary_MO_boost}).
\begin{theorem}[Equivalence of trivial guarantees]\label{prop:intro_coin_duality}
Let $\costVec = (\cost_1,\ldots,\cost_r)$ where each $\cost_i:\Y^2 \to [0,1]$ is a cost function, and let $\bbz \in [0,1]^r$.  Then the following are equivalent.
\begin{enumerate}\itemsep0pt
    \item $(\costVec, \bbz)$ is coin-attainable. 
    \item For every $\balpha \in \Delta_r$ it holds that $z_{\balpha}  \ge \val(w_{\balpha})$.
\end{enumerate} 
\end{theorem}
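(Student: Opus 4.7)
The plan is to establish the two implications separately, each via a single application of von Neumann's minimax theorem to a suitable bilinear game on finite simplices. In both cases the only non-obvious ingredient is a swap of $\min$ and $\max$; no probabilistic or learning-theoretic machinery is needed, as the statement is purely about the geometry of the cost matrices and the simplices $\Delta_\Y, \Delta_r$.

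For $(1) \Rightarrow (2)$, fix $\balpha \in \Delta_r$. By the minimax theorem applied to the scalar game with payoff $w_{\balpha}$, we have $\val(w_{\balpha}) = \max_{\bq \in \Delta_\Y} \min_{\bp \in \Delta_\Y} w_{\balpha}(\bp,\bq)$. For each $\bq$, coin-attainability supplies a $\bp$ (depending on $\bq$) such that $w_i(\bp,\bq) \le z_i$ for every $i$; forming the convex combination of these inequalities with weights $\alpha_i$ gives $w_{\balpha}(\bp,\bq) \le z_{\balpha}$. Hence $\min_{\bp} w_{\balpha}(\bp,\bq) \le z_{\balpha}$ for every $\bq$, and taking the max over $\bq$ yields $\val(w_{\balpha}) \le z_{\balpha}$.

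For $(2) \Rightarrow (1)$, the key step is to fix an arbitrary $\bq \in \Delta_\Y$ and produce a single $\bp$ satisfying all $r$ inequalities $w_i(\bp,\bq) \le z_i$ simultaneously. I would introduce the bilinear auxiliary function $F(\bp,\balpha) := \sum_{i=1}^r \alpha_i\bigl(w_i(\bp,\bq) - z_i\bigr)$ on the product $\Delta_\Y \times \Delta_r$. Since $\max_{\balpha \in \Delta_r} F(\bp,\balpha) = \max_{i} \bigl(w_i(\bp,\bq)-z_i\bigr)$ (attained at a vertex $\balpha=e_i$), the existence of the desired $\bp$ is equivalent to $\min_{\bp} \max_{\balpha} F(\bp,\balpha) \le 0$. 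Von Neumann's minimax theorem applied to $F$ (bilinear in $(\bp,\balpha)$ on compact convex simplices) swaps the order of the min and the max, yielding $\min_{\bp} \max_{\balpha} F = \max_{\balpha} \min_{\bp} F$. For each fixed $\balpha$,
\begin{equation*}
\min_{\bp} F(\bp,\balpha) \eq \min_{\bp} w_{\balpha}(\bp,\bq) - z_{\balpha} \leq \max_{\bq'}\min_{\bp} w_{\balpha}(\bp,\bq') - z_{\balpha} \eq \val(w_{\balpha}) - z_{\balpha},
\end{equation*}
and assumption (2) gives $\val(w_{\balpha}) \le z_{\balpha}$, so this quantity is $\le 0$. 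Taking the max over $\balpha$ preserves this bound, closing the argument.

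The main conceptual obstacle is recognizing why simultaneously satisfying $r$ constraints is not automatic from the scalar guarantees: a priori the witness $\bp$ could depend on $i$. The trick is that the auxiliary game on $\balpha$ converts the $\max$ over $i$ into a $\max$ over convex combinations, and convex combinations are exactly the currency in which assumption (2) is denominated. Thus, a second use of minimax (on the auxiliary game $F$) converts the scalar guarantees for all $\balpha$ into the single uniform $\bp$ required by coin-attainability. There are no further technical difficulties since the simplices are compact and $F$ is bilinear.
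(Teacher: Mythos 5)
Your proof is correct and follows essentially the same route as the paper's: the paper's Lemma~\ref{lem:proving_duality} introduces precisely your auxiliary bilinear game $F(\bp,\balpha)=\sum_i\alpha_i(w_i(\bp,\bq)-z_i)$ on $\Delta_\Y\times\Delta_r$ and applies von Neumann's minimax theorem to it, and the surrounding Proposition~\ref{prop:J_dice_duality} handles the scalar reduction exactly as you do (the paper phrases it for general $J\subseteq\Y$, from which the stated theorem is the special case $J=\Y$). The only cosmetic difference is that the paper modularizes the argument into a lemma so it can be reused for the $J$-dice-attainable variant.
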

\noindent As 
\Cref{thm:intro_general_duality}, we note that
\Cref{prop:intro_coin_duality} also holds in the general multiclass case. The proof of \Cref{prop:intro_coin_duality} again uses von Neumann's Minimax Theorem, based on a two-player game that differs from those described earlier. In this game, the pure strategies of the maximizing player are the different elements of $[r]$, representing the various objectives of the cost function. 
The proofs of all equivalence theorems are given in Section \ref{sec:duality}.

\subsection{Multiclass setting}\label{subsec:main_results_multiclass}
For binary classification tasks, we get an especially clean and refined mathematical picture, in which boostability is fully determined by a single threshold, i.e., $\val(\cost)$. That is, any learner with loss value below it can be boosted, while any value above it is attainable by a trivial non-boostable learner. In the unweighted case, recent work by \cite{Brukhim23simple} demonstrated that a {\it multichotomy} emerges, governed by  
$k-1$ thresholds: $\frac{1}{2}, \frac{2}{3}, \dots, \frac{k-1}{k}$, where $k$ is the number of labels. For each such threshold, any learner achieving a loss below it can be ``partially'' boosted to the next threshold below it, but not further than that. 
Here, we extend their results to arbitrary cost functions and prove that a similar phenomenon holds more generally.

In contrast to the unweighted case, the landscape of boostability in the general setting is more complex, consisting of many thresholds corresponding to distinct possible subsets of labels. Interestingly, the thresholds of boostability can be computed precisely, and are all determined by the outcome of zero-sum games, as defined next. We generalize the zero-sum game defined in Equation \eqref{eq:zero_sum_game_def} to the case of $k \ge 2$ labels. It turns out that in the multiclass setting, a more refined zero-sum game is needed, introducing a certain asymmetry between the sets of allowable strategies for each player.
In particular, we restrict the set of distributions $\bq$ which the maximizing player is allowed to choose. For a subset $J \subseteq \Y$ of labels define the value of the game \emph{restricted to the subset $J$} as:
\begin{align} \label{eq:def_J_game}
    \val_J(\cost) \triangleq \min_{\bp \in \Delta_\Y} \max_{\bq \in \Delta_J} \cost(\bp,\bq) \;.
\end{align}
Importantly, only the maximizing player is restricted to $\Delta_J$ while the minimizing player is not. Thus, $\val_J(\cost)$ is the smallest loss one can ensure by predicting with a die \emph{given that the correct label is in} $J$. Clearly, for every $J' \subseteq J$ it holds that $\val_{J'}(\cost) \le \val_{J}(\cost)$. 
Consider the subsets $J \subseteq \Y$ in nondecreasing order of $\val_{J}(\cost)$.\footnote{For notational easiness we define $\val_{\emptyset}(\cost)=0$.} We then denote each such cost by $v_{s}(\cost)$ for some $s \in \{1,\dots,2^k\}$, so that overall we have,
\begin{equation}\label{eq:multiclass_critical_thresholds}
 0 = v_1(\cost) < v_2(\cost) < \ldots < v_\tau(\cost)  = \val_{\Y}(\cost),
 \end{equation}
where \( \tau \le 2^k\) depends on the cost function $\cost$. The first equality holds since $\val_{\{y\}}(\cost) = 0$ for $y \in \Y$. Moreover, by a straightforward calculation it can be shown that, for the unweighted case in which $\cost$ is simply the 0-1 loss (i.e., $\cost(i, j) = \ind{i \neq j}$), the thresholds $\val_J(\cost)$ specialize to those given by \citet{Brukhim23simple}. Concretely, it is described in the following fact. 
\begin{fact}
Let $\cost$ be the 0-1 loss. Then, for every $\threshIndex \ge 1$ and  $J \in {\Y \choose \threshIndex}$,
it holds that $\val_J(\cost)=1-\frac{1}{\threshIndex}$.
\end{fact}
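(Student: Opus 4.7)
The plan is to unpack the two minima/maxima in the definition $\val_J(\cost) = \min_{\bp \in \Delta_\Y} \max_{\bq \in \Delta_J} \cost(\bp,\bq)$ one at a time, using that the $0$-$1$ loss makes $\cost(\bp,\bq)$ reduce to an inner product.

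First I would rewrite the game value in closed form. For $\cost(i,j) = \ind{i \neq j}$, expectation gives
\[
\cost(\bp,\bq) \eq \Pr_{i \sim \bp,\, j \sim \bq}(i \neq j) \eq 1 - \sum_{y \in \Y} p_y q_y \eq 1 - \sum_{y \in J} p_y q_y,
\]
where the last equality uses $\supp(\bq) \subseteq J$. So the inner maximization becomes a minimization of the linear functional $\bq \mapsto \sum_{y \in J} p_y q_y$ over the simplex $\Delta_J$, which is attained at a vertex: $\max_{\bq \in \Delta_J} \cost(\bp,\bq) = 1 - \min_{y \in J} p_y$.

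Next I would solve the outer minimization. We have
\[
\val_J(\cost) \eq 1 - \max_{\bp \in \Delta_\Y} \min_{y \in J} p_y.
\]
Since $\sum_{y \in J} p_y \le 1$, by averaging we get $\min_{y \in J} p_y \le \tfrac{1}{n}$ where $n = |J|$, and this bound is tight: take $\bp$ uniform on $J$ (zero elsewhere), which lies in $\Delta_\Y$ and achieves $\min_{y \in J} p_y = \tfrac{1}{n}$. Substituting yields $\val_J(\cost) = 1 - \tfrac{1}{n}$, as claimed.

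There is no real obstacle here: both optimizations are solvable by inspection because the $0$-$1$ loss linearizes $\cost(\bp,\bq)$, and the restriction $\bp \in \Delta_\Y$ (rather than $\Delta_J$) plays no role since the optimal strategy already happens to be supported on $J$. The only thing to check is the base case $J = \{y\}$, where the formula correctly gives $\val_{\{y\}}(\cost) = 1 - 1 = 0$, consistent with the convention $\val_\emptyset(\cost)=0$ extended by continuity in $|J|$.
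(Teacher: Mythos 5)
Your proof is correct and is exactly the ``straightforward calculation'' the paper alludes to without spelling out: linearize $\cost(\bp,\bq)$ for the $0$-$1$ loss, reduce the inner maximization over $\Delta_J$ to a vertex, and solve the outer minimization by the averaging bound $\min_{y\in J} p_y \le 1/|J|$, achieved by the uniform distribution on $J$. One small remark on the closing sentence: the consistency check at $|J|=1$ is fine, but the claim that the formula ``extends by continuity in $|J|$'' to recover $\val_\emptyset(\cost)=0$ doesn't quite parse (the formula $1-1/n$ diverges as $n\to 0$); that equality is simply a convention the paper adopts, not something derived from the formula. This is cosmetic and does not affect the argument.
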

Thus, for the 0-1 loss we have $\tau = k$. Note that although for the general case given here 
there is no closed-form expression for the value of each threshold, it can each be determined by solving for the corresponding linear program representation of the zero-sum game described above. \\

\noindent We can now state our main results on multiclass cost-sensitive boosting, given next.

\begin{atheorem}[Cost-sensitive boosting, multiclass case]\label{thm:main_multiclass}
Let $\Y=[k]$, let $\cost: \Y^2 \rightarrow [0,1]$ be any cost function, and let $z \ge 0$. Let $v_{1} < v_{2} < \cdots < v_{\tau}$  as defined in \Cref{eq:multiclass_critical_thresholds},\footnote{When clear from context we omit $\cost$ and denote a threshold by $v_n$.} and let $\threshIndex$ be the largest integer such that $v_{\threshIndex} \le z$. Then, the following claims both hold.
\begin{itemize}[leftmargin=.3cm]\itemsep0pt
    \item{\bf\small $(\cost, z)$ is $\threshIndex$-boostable:} for every $\F \subseteq \Y^\X$, every $(\cost,z)$-learner is boostable to $(\cost,v_{{\threshIndex}})$-learner.
    \item{\bf\small $(\cost, z)$ is not $(\threshIndex-1)$-boostable:} there exists a class $\F  \subseteq \Y^\X$ and a $(\cost,z)$-learner for $\F$ that cannot be boosted to $(\cost,z')$-learner for $\F$, for any $z' < v_{\threshIndex}$.
    \end{itemize} 
\end{atheorem}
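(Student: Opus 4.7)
The plan is to prove the two items separately via game-theoretic arguments. For the upper bound (boostability to $v_n$), given any $(\cost,z)$-learner $\A$ for $\F$ and a labeled sample $S$, I would run a standard multiplicative-weights boosting procedure for $T$ rounds, invoking $\A$ on reweighted versions of $S$ to obtain weak hypotheses $h_1,\ldots,h_T$. Applying von Neumann's minimax theorem to the matrix game whose rows are distributions over $S$ and whose columns are the hypotheses produced by $\A$, the weak learning guarantee yields a mixture $\mu$ over $h_1,\ldots,h_T$ such that for every $x \in S$,
\[
\E_{h \sim \mu}\bigl[\cost(h(x),f(x))\bigr] \le z + \eta,
\]
for arbitrarily small $\eta>0$ (at the cost of more boosting rounds). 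Writing $\bp_x = \E_{h\sim\mu}[\delta_{h(x)}] \in \Delta_\Y$, this gives $\cost(\bp_x,f(x))\le z+\eta$ at each sample point by linearity of $\cost$ in its first argument.

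The heart of the argument is a per-example derandomization exploiting the restricted-game structure of the thresholds. For each $x$ in the sample, define
\[
J_x = \bigl\{j \in \Y : \cost(\bp_x,j)\le z+\eta\bigr\}.
\]
Then $f(x) \in J_x$, and for any $\bq \in \Delta_{J_x}$ we have $\cost(\bp_x,\bq)=\E_{j\sim\bq}[\cost(\bp_x,j)]\le z+\eta$, so $\bp_x$ is a feasible minimizer strategy in the $J_x$-restricted game of \Cref{eq:def_J_game}, yielding $\val_{J_x}(\cost) \le z+\eta$. Picking $\eta$ small enough that $z+\eta<v_{n+1}$, and recalling that $\{\val_J(\cost):J\subseteq\Y\}$ is exactly the discrete set $\{v_1,\ldots,v_\tau\}$, I conclude $\val_{J_x}(\cost)\le v_n$. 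Now define the boosted predictor $\hat h$ by sampling, on input $x$, from a minimax-optimal strategy $\bp_x^\star$ for the $J_x$-restricted game; since $f(x) \in J_x$, the per-example loss is at most $\max_{j \in J_x}\cost(\bp_x^\star,j) \le \val_{J_x}(\cost)\le v_n$. Standard uniform convergence over the composite predictor class lifts this to $L_\D^\cost(\hat h)\le v_n+\epsilon$ with high probability.

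For the lower bound, let $J^\star \subseteq \Y$ be any subset realizing $\val_{J^\star}(\cost)=v_n$ and take $\F=\{f:\X \to J^\star\}$ over a sufficiently rich domain $\X$. Define the $(\cost,z)$-learner $\A$ to output, on any input sample, the constant predictor $h \equiv \hat y$ that minimizes the empirical cost; since the label marginal $f_\ast\D$ always lies in $\Delta_{J^\star}$, a direct minimax calculation gives loss at most $\max_{\bq \in \Delta_{J^\star}}\min_{y \in \Y}\cost(y,\bq) = \val_{J^\star}(\cost)=v_n \le z$, so $\A$ is a valid $(\cost,z)$-learner. Now for any booster $\B$ producing predictors of the form $\bar h(x)=g(h_1(x),\ldots,h_T(x))$, every hypothesis $h_i$ output by $\A$ is constant in $x$, so $\bar h$ itself is independent of $x$, i.e., a random guess $\bp \in \Delta_\Y$. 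Letting the adversary choose $f$ and $\D$ so that $f_\ast \D$ equals the maximizer $\bq^\star$ of $\val_{J^\star}(\cost) = \max_{\bq\in\Delta_{J^\star}}\min_{\bp'\in\Delta_\Y}\cost(\bp',\bq)$, the induced loss satisfies $L_\D^\cost(\bar h)=\cost(\bp,\bq^\star)\ge v_n$, so no boosting of $\A$ can beat the $v_n$ barrier.

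The main obstacle is the discreteness step in the upper bound: one must ensure the boosting slack $\eta$ is strictly smaller than the gap $v_{n+1}-z$, and then exploit the fact that $\val_{J_x}(\cost)$ can only take values in the finite set $\{v_1,\ldots,v_\tau\}$. The boundary case $z=v_n$ is immediate, since there the boosting procedure already matches the target. Integrating the $J_x$-dependent derandomization with PAC generalization bounds is routine but requires care because the aggregated predictor class grows with $T$.
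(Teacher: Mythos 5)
For the first item, your approach is essentially the paper's. You run multiplicative-weights to obtain $h_1,\ldots,h_T$ whose mixture $\bp_x$ satisfies $\cost(\bp_x,f(x))\le z+\eta$ on the sample, define $J_x = \{j: \cost(\bp_x,j)\le z+\eta\}$, observe that $\val_{J_x}(\cost)\le z+\eta < v_{n+1}$ forces $\val_{J_x}(\cost)\le v_n$ by discreteness of the set $\{\val_J(\cost): J\subseteq\Y\}$, and derandomize via the minimax strategy for the $J_x$-restricted game. This is exactly the paper's route (Algorithm~3 producing a $(\cost,z+\sigma)$-bounded list function, then Lemma~\ref{lemma:list_to_wl} converting list functions to learners). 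The one real gap is the generalization step: ``standard uniform convergence over the composite predictor class'' is not what works here, because that class is determined by $T$ weak hypotheses drawn adaptively and its complexity is not bounded a priori. The paper instead uses a sample-compression argument (generalizing Theorem~6 of Brukhim et al.\ to $(\cost,z)$-bounded list functions). You flag that this ``requires care'' but do not identify the compression argument as the tool; as stated, the claim that uniform convergence suffices is not justified.

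For the second item you take a genuinely different route. You build a $(\cost,z)$-learner $\A$ outputting constant predictors and argue that any pointwise aggregation $g(h_1(x),\ldots,h_T(x))$ of constants is again constant, hence a random guess, whose loss against the maximizer $\bq^\star$ of the $J^\star$-restricted game is at least $v_n$. This is correct within the paper's aggregation-based model of boosting, but it is strictly weaker than what the paper proves. The paper establishes (Lemma~\ref{lemma:LB__finegrained}) that the class $\F = J^\X$ admits \emph{no} $(\cost,z')$-learner at all for $z'<v_n$: a hypothetical $(\cost,z')$-learner would, by Theorem~\ref{thm:wl_to_list}, yield a $(\cost,z')$-list learner whose lists strictly avoid $J$, hence a $(|J|-1)$-list PAC learner, contradicting the unbounded $(|J|-1)$-DS dimension of $J^\X$. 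The paper's argument is more robust---it rules out any procedure whatsoever, not just aggregation-based boosters, and so does not depend on the precise operational model of what a booster is allowed to do with the training sample.
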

The proof of \Cref{thm:main_multiclass} is given in \Cref{sec:multiclass}.
In words, \Cref{thm:main_multiclass} implies that there is a partition of the interval $[0,1]$, on which the loss values lie,  into at most $\tau$ sub-intervals or ``buckets'', based on $v_{{\threshIndex}}$. Then, any loss value $z$ in a certain bucket $\bigl[v_{{\threshIndex}},v_{{\threshIndex+1}}\bigr)$ can be boosted to the lowest value within the bucket, but not below that.
We remark that, in fact, the above boostability result is obtained constructively via an efficient algorithm, as described in \Cref{sec:multiclass}.   \\

\subsubsection{Multiclass classification, multi-objective loss}
We shift our focus to {\it multi-objective} losses in the multiclass setting. 
Recall the notion of $(\costVec, \bbz)$-learner of Definition \ref{def:vec_z_learner}. Our goal is to prove a multi-objective counterpart of \Cref{thm:main_multiclass}; that is, to understand for which $\bbz'$ a $(\costVec, \bbz)$-learner can be  boosted to a $(\costVec, \bbz')$-learner. Unlike the scalar cost setting of \Cref{thm:main_multiclass}, however, the set of those $\bbz'$ that are reachable by boosting a $(\costVec, \bbz)$-learner is not a one-dimensional interval in $[0,1]$, but a subset of $[0,1]^r$. 

As a first step, we generalize the notion of coin attainability given by \Cref{def:coin_attainability}. The key ingredient that we shall add is to restrict the distribution of the labels over a given set $J \subseteq \Y$. From the point of view of a random guesser, this amounts to knowing that the correct label must be in $J$. We then say that a guarantee is $J$-dice-attainable if a trivial learner can satisfy that guarantee over any distribution supported only over $J$. 
Formally, it is defined as follows.
\begin{definition}[{$J$-dice attainability}]\label{def:dice_attainable_J}
    Let $\Y=[k]$, let $\costVec = (\cost_1,\ldots,\cost_r)$ where each $\cost_i: \Y^2 \rightarrow [0,1]$ is a cost function, let $\bz \in [0,1]^{r}$, and let $J \subseteq \Y$.
    Then $(\costVec, \bbz)$ is $J$-\emph{dice-attainable} if:
    \begin{equation}
    \label{eq:J-dice-attainable}
        \forall \bq \in \Delta_J, \ \exists \bp \in \Delta_\Y, \ \forall i = 1, \ldots, r, \qquad \cost_i(\bp,\bq) \le z_i \;.
    \end{equation}
    The \emph{$J$-dice-attainable region} of $\costVec$ is the set $D_J(\costVec) \triangleq \left\{ \bbz  : (\costVec, \bbz) \text{ is }J\text{-dice-attainable}\right\}$.
\end{definition}
\noindent Intuitively, if $(\costVec,\bbz)$ is $J$-dice-attainable, then a $(\costVec,\bbz)$-learner is not better than a random die over $J$, and therefore should not be able to separate any label in $J$.
Using the notion of $J$-dice-attainability, we define a partial ordering over $[0,1]^r$. For every $\bbz, \bbz'
\in [0,1]^r$ write $\bbz \preceq_{\costVec} \bbz'$ if
\begin{equation}\label{eq:partial_order_MC_MO}
    \forall J \subseteq \Y, \ \ \bbz \in D_J(\costVec) \implies \bbz' \in D_J(\costVec)\,,
\end{equation}
and $\bbz \not\preceq_{\costVec} \bbz'$ otherwise. Intuitively, $\bbz \preceq_{\costVec} \bbz'$ means that, whenever $\bbz$ is not better than a die, then $\bbz'$ is not better than a die either. 
We prove that this partial ordering precisely characterizes the boostability of $\bbz$ to $\bbz'$. We can now state our main result for multi-objective multiclass boosting.
\begin{atheorem}[Multi-objective boosting, multiclass case]\label{thm:multiclass_MO_boost}
Let $\Y=[k]$, let $\costVec = (\cost_1,\ldots,\cost_r)$ where each $\cost_i: \Y^2 \rightarrow [0,1]$ is a cost function, and let $\bbz \in [0,1]^r$.  
Then, the following claims both hold. 
\begin{itemize}[leftmargin=.6cm]\itemsep0pt
    \item     {\bf\small $(\costVec, \bbz)$ is boostable to $(\costVec, \bbz')$ for every $\bbz \preceq_{\costVec} \bbz'$}: for every class $\F \subseteq \Y^\X$, every $(\costVec,\bbz)$-learner for $\F$ is boostable to a $(\costVec,\bbz')$-learner for $\F$. 
    \item  {\bf\small $(\costVec, \bbz)$ is not boostable to $(\costVec, \bbz')$ for every $\bbz \not\preceq_{\costVec} \bbz'$}:
    there exists a class $\F \subseteq \Y^\X$ and a $(\costVec, \bbz)$-learner for $\F$ that is a trivial learner and, therefore, cannot be boosted to a $(\costVec, \bbz')$-learner for $\F$, for any $\bbz \not\preceq_{\costVec} \bbz'$. 
\end{itemize}
\end{atheorem}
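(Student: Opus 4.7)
I prove the two bullets separately.

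\medskip
\noindent \textbf{Negative direction.} Given $\bbz \not\preceq_{\costVec} \bbz'$, fix a witness $J \subseteq \Y$ with $\bbz \in D_J(\costVec)$ and $\bbz' \notin D_J(\costVec)$. Take $\X = J$ and let $\F$ contain only the identity map $f(x)=x$, so that every distribution $\D$ over $\X$ induces a label marginal $\bq \in \Delta_J$, and conversely every $\bq \in \Delta_J$ is realizable in this way. A trivial $(\costVec,\bbz)$-learner $\A$ is obtained by estimating the label marginal $\tilde{\bq}$ from the sample (within total variation $O(\epsilon)$) and returning a random guess $h \sim \bp$, where $\bp \in \Delta_\Y$ is selected by the $J$-dice-attainability of $\bbz$ to satisfy $\cost_i(\bp,\tilde{\bq})\le z_i$ for every $i$; linearity of the cost-sensitive loss then yields $L^{\cost_i}_\D(h)\le z_i+O(\epsilon)$ with high probability. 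The central observation is that any aggregation $\bar h(x)=g(h_1(x),\ldots,h_T(x))$ of random-guess hypotheses is itself a random guess, since the joint distribution of $(h_1(x),\ldots,h_T(x))$ is independent of $x$, and hence so is that of $g(h_1(x),\ldots,h_T(x))$. Since $\bbz'\notin D_J(\costVec)$ there exists $\bq^{\star}\in\Delta_J$ for which no $\bp\in\Delta_\Y$ satisfies all $\cost_i(\bp,\bq^{\star})\le z'_i$ simultaneously; taking $\D=\bq^{\star}$ (with $f$ the identity) defeats every boosted aggregate, showing $\A$ cannot be boosted to a $(\costVec,\bbz')$-learner.

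\medskip
\noindent \textbf{Positive direction.} Assume $\bbz\preceq_{\costVec}\bbz'$ and let $\A$ be a $(\costVec,\bbz)$-learner. By linearity of the loss, $\A$ is simultaneously a $(w_\balpha,z_\balpha)$-learner for every $\balpha\in\Delta_r$, with $z_\balpha=\langle\balpha,\bbz\rangle$. The plan is to apply the multiclass cost-sensitive boosting result (\Cref{thm:main_multiclass}) to $\A$ along each direction $\balpha$, obtaining a boosting procedure that drives the $w_\balpha$-loss down to $v_{n^{\star}(\balpha)}(w_\balpha)$, the largest threshold of $w_\balpha$ not exceeding $z_\balpha$; then to reassemble these scalar-boosted learners via the algorithmic reverse direction of \Cref{thm:intro_general_duality}. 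To certify that the target $\bbz'$ is reachable, I invoke the $J$-indexed generalization of \Cref{prop:intro_coin_duality}, namely
\[
\bbu \in D_J(\costVec) \iff \forall\balpha\in\Delta_r,\ v_J(w_\balpha)\le\langle\balpha,\bbu\rangle,
\]
which follows from von Neumann's minimax theorem applied to the bilinear form $(\bp,\bq)\mapsto w_\balpha(\bp,\bq)$ on $\Delta_\Y\times\Delta_J$. This characterization converts the subset-indexed hypothesis $\bbz\preceq_{\costVec}\bbz'$ into the direction-indexed scalar inequalities $\langle\balpha,\bbz'\rangle\ge v_{n^{\star}(\balpha)}(w_\balpha)$ required for the scalar boosters to combine, through \Cref{thm:intro_general_duality}, into a $(\costVec,\bbz')$-learner.

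\medskip
\noindent \textbf{Main obstacle.} The delicate point is the final calibration in the positive direction. The partial order $\preceq_{\costVec}$ is quantified over label subsets $J\subseteq\Y$, whereas scalar boosting is quantified over directions $\balpha\in\Delta_r$, and the two quantifications do not align in the obvious way: the implication defining $\preceq_{\costVec}$ can become vacuous for those $J$ where $\bbz$ fails $D_J(\costVec)$-attainability along a direction $\balpha'$ different from the one being boosted. Reconciling these two viewpoints requires a minimax exchange over the bilinear game $(\balpha,J)\mapsto v_J(w_\balpha)$ and crucially exploits the joint structure of $\A$ as a simultaneous $(w_\balpha,z_\balpha)$-learner across \emph{all} $\balpha$, a property strictly stronger than a collection of independent scalar learners. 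An alternative (and possibly cleaner) route is to formulate the boosting directly as a Blackwell-approachability game on the vector loss, using $\A$ as the weak oracle and picking per-round directions $\balpha$ targeting the currently violated coordinates of $\bbz'$; convergence to the target region $\{\bbv\in[0,1]^r:\bbv\le\bbz'\}$ then follows from the same $J$-indexed minimax identity.
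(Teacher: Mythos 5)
Your argument is sound for the aggregation model of boosting described in \Cref{sec:results}: the output $\bar h(x)=g(h_1(x),\dots,h_T(x))$ of any aggregation of random guesses indeed has a distribution independent of $x$, and once $\bq^\star\in\Delta_J$ certifying $\bbz'\notin D_J(\costVec)$ is taken as the label marginal, no random guess can achieve $\bbz'$. However, the route you chose differs from the paper's and is in a sense weaker. You take $\F$ to be a singleton; for such $\F$ a $(\costVec,\bbz')$-learner manifestly \emph{does} exist (output the identity), so your conclusion rests entirely on the specific syntactic form of the aggregator. The paper instead takes $\F = J^\X$ over an infinite domain and shows --- via the characterization of $\listSize$-list PAC learnability through the Daniely--Shalev-Shwartz dimension and the weak-to-list conversion of \Cref{thm:wl_to_list} --- that no $(\costVec, \bbz')$-learner exists for $\F$ at all. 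This is robust to how one formalizes boosting and gives a genuine information-theoretic lower bound, so the paper's choice of $\F$ is not incidental but load-bearing.

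\paragraph{Positive direction.}
You have correctly identified the key ingredients --- $J$-dice duality (\Cref{prop:J_dice_duality}), the scalar boosting result, and the reassembly via \Cref{thm:intro_general_duality} --- and you have correctly diagnosed the hard part as the ``final calibration.'' But the calibration is not just delicate: the step you sketch actually fails. Concretely, your plan requires that $\bbz \preceq_{\costVec}\bbz'$ imply $\langle\balpha,\bbz'\rangle \ge v_{\threshIndex^\star(\balpha)}(w_\balpha)$ for \emph{every} $\balpha\in\Delta_r$, where $\threshIndex^\star(\balpha)$ tracks $z_\balpha=\langle\balpha,\bbz\rangle$. If this fails for some $\balpha$, let $J^*$ attain $v_{\threshIndex^\star(\balpha)}(w_\balpha)$. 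Then $\bbz'\notin D_{J^*}(\costVec)$ by \Cref{prop:J_dice_duality}, and the hypothesis forces $\bbz\notin D_{J^*}(\costVec)$, which by \Cref{prop:J_dice_duality} only tells you there is \emph{some other} direction $\balpha'$ with $z_{\balpha'}<\val_{J^*}(w_{\balpha'})$ --- it gives no control over the original direction $\balpha$, where in fact $z_\balpha\ge\val_{J^*}(w_\balpha)$. So there is no contradiction and the implication simply need not hold. The missing ingredient is the $\excl(\costVec,\bbz')$-list learner of \Cref{lemma:MCMO_to_LL}: for each $J\in\excl(\costVec,\bbz')$ one picks the direction $\balpha_J$ guaranteed by \Cref{prop:J_dice_duality} to have $z_{\balpha_J}<\val_J(w_{\balpha_J})$, boosts to a list function that avoids $J$, and intersects those list functions over all such $J$. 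Only then, via \Cref{lemma:MCMO_to_MCCS}, does every direction $\balpha$ inherit a $(w_\balpha,z'_\balpha)$-learner from the single intersected list, at which point \Cref{thm:intro_general_duality} closes the argument. Your Blackwell-approachability remark points toward a plausible alternative, but as written the proposal is an outline with an unresolved gap at precisely the step the paper resolves with the list-learner intermediary.
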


\noindent Let us elaborate briefly on \Cref{thm:multiclass_MO_boost}.
In the multiclass cost-sensitive setting, where the overall cost is a scalar, the landscape of boostability is determined by a sequence of (totally ordered) scalar thresholds $v_{{\threshIndex}} \in [0,1]$; and those thresholds determine whether a $(\cost, z)$-learner can be boosted to a $(\cost, z')$-learner, for $z' < z$, as detailed in \Cref{thm:main_multiclass}.
In the multiclass multi-objective setting, those scalar thresholds are replaced by a set of surfaces in $[0,1]^r$.  Each such surface corresponds to the boundary of the dice-attainable sets $D_J(\costVec)$. Those surface can be seen as thresholds of a higher-dimensional form, separating between different boostability guarantees.

\newpage

\section{Binary Classification}\label{sec:binary}
This section considers the classic binary classification setting, where $\Y=\{-1,+1\}$. 
\Cref{subsec:binary:scalar} tackles the simplest case, where the cost $\cost$ is scalar, and explores the conditions that make a $(\cost,z)$-learner boostable, proving \Cref{thm:intro_binary_boost}. \Cref{subsec:binary:vector} considers multi-objective costs, where the cost $\costVec$ is vector-valued, for a specific but illustrative choice of $\costVec$, proving a special case of \Cref{thm:binary_MO_boost} and of \Cref{prop:intro_coin_duality}. The reason for restricting $\costVec$ to a special form is that this allows us to introduce the key ideas while avoiding heavy notation.
Finally, in \Cref{subsec:binary:pf_MO} we give the full proof of \Cref{thm:binary_MO_boost}.\\

For convenience, as $\Y=\{-1,+1\}$, we will sometimes encode a cost function $\cost : \Y^2 \to [0,1]$ as a 2-by-2 matrix $\bigl(\begin{smallmatrix}0 & w_{-}\\w_{+} & 0\end{smallmatrix}\bigr)$, where $w_{+} = w(+1,-1)$ is the cost of a false positive and $w_{-} = w(-1,+1)$ the cost of a false negative. We will also use the fact, provable through easy calculations, that the value of the game $\val(\cost)$ equals $0$ if $\min(w_-,w_+)=0$ and $\frac{w_- w_+}{w_- + w_+}$ otherwise.
Before moving forward, we also need a measure of the advantage that a $(\cost,z)$-learner has compared to a coin.
\begin{definition}[Margin, binary case]\label{def:binary-margin}
The \emph{margin} of a $(\cost,z)$-learner is $\gamma \triangleq \max\{0, \val(\cost) - z\}$.
\end{definition}
\noindent 
When $w_+ = w_- = 1$, and thus $\cost$ is the standard 0-1 cost, $\gamma$ is the usual notion of margin that defines a weak learner in binary classification tasks. In that case, it is well known that to boost a weak learner with margin $\gamma$ it is necessary and sufficient to invoke the learner a number of times that scales with $\nicefrac{1}{\gamma^2}$. This remains true for general $\cost$ and $z$, in that our boosting algorithm below invokes the $(\cost,z)$-learner a number of times proportional to $\nicefrac{1}{\gamma^2}$.

\subsection{Boosting a $(\cost,z)$-learner}\label{subsec:binary:scalar}
The main contribution of this subsection is a boosting algorithm, shown in \Cref{alg:binary_boost}, that turns any $(\cost,z)$-learner $\A$ with $z < \val(\cost)$ into a $(\cost,0)$-learner $\A'$.
We prove:
\begin{theorem}\label{thm:binary_scalar_boosting_readable}
Let $\Y=\{-1,+1\}$ and $\F\subseteq \Y^\X$. If $\A$ is a $(\cost,z)$-learner for $\F$ with margin $\gamma>0$, then \Cref{alg:binary_boost} with the choice of parameters of \Cref{thm:binary_scalar_boosting} is a $(\cost,0)$-learner for $\F$.
Under the same assumptions, \Cref{alg:binary_boost} makes $T = O\bigl(\frac{\ln(m)}{\gamma^2}\bigr)$ oracle calls to $\A$, where $m$ is the size of the input sample, and has sample complexity
$m(\eps,\delta) = \widetilde{O}\Bigl(m_0\bigl(\frac{\gamma}{3},\frac{\delta}{2T}\bigr) \cdot \frac{\ln(1/\delta)}{\eps \cdot \gamma^2}\Bigr)$, where $m_0$ is the sample complexity of $\A$ .
\end{theorem}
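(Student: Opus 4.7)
Proof plan.

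The argument extends the classical minimax-based boosting paradigm of Freund-Schapire to the cost-sensitive setting. The margin $\gamma = \val(\cost) - z > 0$ is used throughout in the role that $\tfrac12 - z$ plays in standard $0$-$1$ boosting: it measures by how much the weak learner improves over the best non-adaptive biased coin under $\cost$.

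First, I reformulate the problem game-theoretically. Fix a sample $S = \{(x_i, y_i)\}_{i=1}^m$ and consider the two-player zero-sum game in which the minimizer picks $h \in \F$, the maximizer picks a distribution $\bd$ over $S$, and the payoff is $\sum_i d_i\, \cost(h(x_i), y_i)$. By the weak-learning promise, for every $\bd$ there is $h \in \F$ with expected cost at most $z + \eps$ under $\bd$ (realized constructively by $\A$). Von Neumann's minimax theorem---the classical invocation flagged in the excerpt---then produces a mixture $Q^\star \in \Delta(\F)$ satisfying $\E_{h \sim Q^\star}[\cost(h(x_i), y_i)] \le z + \eps$ pointwise for every $i$. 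Next, I show that any distribution $Q$ whose pointwise expected cost at a labeled point $(x, y)$ is strictly below $\val(\cost)$ can be rounded to the correct label by the biased-majority rule $\bar h(x) = \sign\bigl(\Pr_{h \sim Q}[h(x) = +1] - \theta\bigr)$ with threshold $\theta \triangleq w_-/(w_+ + w_-)$: the expected cost equals $w_{-y} \cdot \Pr_{h \sim Q}[h(x) \ne y]$ while $\val(\cost) = w_+ w_-/(w_+ + w_-)$, forcing the correct-label mass strictly past $\theta$.

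To make this constructive, I replace $Q^\star$ by a multiplicative-weights procedure, which is precisely \Cref{alg:binary_boost}. At round $t$, maintain a distribution $\bd^{(t)}$ over $S$, feed $\A$ a fresh sample of size $m_0(\gamma/3, \delta/(2T))$ drawn from $\bd^{(t)}$ to obtain $h_t$ with empirical $\cost$-loss at most $z + \gamma/3$ on $\bd^{(t)}$, and update $\bd^{(t+1)}$ by a \emph{cost-aware} multiplicative rule calibrated so that the per-round potential gap is $\Theta(\gamma)$. Standard regret analysis then gives, after $T = \Theta(\log m / \gamma^2)$ rounds, a uniform mixture $\wh Q$ over $\{h_1, \ldots, h_T\}$ whose pointwise expected cost on every $(x_i, y_i) \in S$ is at most $z + \gamma/3 < \val(\cost)$; feeding $\wh Q$ into the decoder of the previous paragraph returns $\bar h$ with zero empirical $\cost$-loss on $S$. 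Standard uniform convergence over the class of biased-majority aggregations of $T$ members of $\F$ converts zero training cost into a population bound, yielding the sample complexity $m(\eps, \delta) = \tO\bigl(m_0(\gamma/3, \delta/(2T)) \cdot \log(1/\delta)/(\eps \gamma^2)\bigr)$ together with $T = O(\log m / \gamma^2)$ oracle calls.

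The delicate point is the multiplicative-weights step: the update must be \emph{cost-aware} so that both the per-round potential drop is uniformly $\Omega(\gamma^2)$ (independent of $w_\pm$) and the resulting pointwise guarantee aligns with the \emph{asymmetric} decoding threshold $\theta = w_-/(w_+ + w_-)$. Calibrating both quantities simultaneously against $\val(\cost)$---rather than against $\tfrac12$ as in classical AdaBoost---is what couples the update rule to the decoder and recovers the classical $O(\log m / \gamma^2)$ round count in the symmetric limit $w_+ = w_-$. A secondary subtlety is the bookkeeping of the additive slack $\eps$ from the weak-learning oracle, which forces both the multiplicative-weights analysis and the decoding threshold to leave a margin of order $\gamma/3$ on either side.
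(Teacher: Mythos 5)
Your boosting loop and decoder match the paper's construction: \Cref{alg:binary_boost} is Hedge on the per-example costs $\cost(h_t(x_i), y_i)$, and your threshold $\theta = w_-/(w_+ + w_-)$ is exactly equivalent to the paper's comparison $w_- F(x,-1) \gtrless w_+ F(x,+1)$ (since $F(x,-1) + F(x,+1) = 1$). The decoding lemma you state — that pointwise expected cost below $\val(\cost)$ forces the correct-label mass past $\theta$ — is precisely what the paper proves via the contradiction $1 = F(x,-1) + F(x,+1) < \val(\cost)/w_- + \val(\cost)/w_+ = 1$. Your preliminary minimax detour is a correct but unnecessary framing; the paper goes directly to the Hedge regret bound, which is the constructive minimax. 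And you somewhat overstate the ``delicate calibration'' point: once the payoff is $\cost \in [0,1]$, running vanilla Hedge already gives average cost within $\gamma/3$ of $z$ on every example, and the decoding threshold is then \emph{determined} by $\val(\cost)$ rather than needing to be tuned against it.

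The one genuine gap is the generalization step. You invoke ``standard uniform convergence over the class of biased-majority aggregations of $T$ members of $\F$,'' but the theorem assumes nothing about the combinatorial complexity of $\F$ — only that a $(\cost,z)$-learner with sample complexity $m_0$ exists. Uniform convergence over the aggregation class would require a VC-type bound on $\F$ (or on the range of $\A$), which you do not have, and could not yield the stated sample complexity $m(\eps,\delta) = \tO(m_0(\gamma/3,\delta/(2T)) \cdot \ln(1/\delta)/(\eps\gamma^2))$ expressed purely in terms of $m_0$. The paper instead uses a sample compression argument (Theorem~2.8 of \citet{schapire2012boosting}): the output $\widehat h_S$ is fully determined by the $\kappa = \widehat m \cdot T$ examples passed to $\A$, so consistency on $S$ plus a compression bound of size $\kappa$ gives $L_\D^\cost(\widehat h_S) \le (\kappa \ln m + \ln(2/\delta))/(m-\kappa)$ directly. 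You should replace your uniform-convergence step with this compression argument; everything else in your plan is sound.
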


It is immediate to see that \Cref{thm:binary_scalar_boosting_readable} implies the first item of \Cref{thm:intro_binary_boost}; for the second item see \Cref{thm:bin_scalar_LB} below. In the rest of this subsection we prove \Cref{thm:binary_scalar_boosting_readable}. We do so in two steps: first we prove that \Cref{alg:binary_boost} returns a hypothesis consistent with the input sample (\Cref{thm:binary_scalar_boosting}), and then we show generalization by a compression argument (\Cref{lem:bin_scalar_boost_generalization}).

\begin{algorithm}[t]
\caption{Boosting a binary $(\cost,z)$-learner} \label{algorithm:binary:scalar}
\label{alg:binary_boost}
\begin{algorithmic}[1]
\REQUIRE sample $S = (x_i,y_i)_{i=1}^m$; $(\cost, z)$-learner $\A$; parameters $T, \eta, \widehat m$
\vskip3pt
\STATE Initialize: $D_1(i) = 1$  for all $i=1,...,m$.
\FOR{$t = 1, \ldots, T$}
\STATE Compute the distribution $\D_t \triangleq \frac{D_t}{\sum_{i}D_t(i)}$ over $[m]$.
\STATE Draw a set $S_t$ of $\widehat m$ labeled examples i.i.d.\ from $\D_t$ and obtain $h_t = \A(S_t)$.
\STATE For every $i=1,\ldots,m$ let:
\[
D_{t+1}(i) \triangleq D_t(i) \cdot e^{\eta \cdot \cost(h_t(x_i),y_i)} \;.
\]
\ENDFOR
\STATE Let $F: \X\times\Y \to [0,1]$ such that $F(x,y) \triangleq \frac{1}{T}\sum_{t=1}^T \ind{h_t(x) = y}$ for all $x \in \X, y \in \Y$
\RETURN $\widehat h_S: \X \to \Y$ such that, for all $x \in \X$, 
\[
\widehat h_S(x) \triangleq
\begin{cases}
+1  & \text{if } w_- \cdot F(x,-1) < w_+\cdot F(x,+1) \;, \\
-1  & \text{otherwise.}
\end{cases}
\]
\end{algorithmic}
\end{algorithm}

At a high level, \Cref{alg:binary_boost} follows standard boosting procedures. Given a $(\cost,z)$-learner $\A$ and a labeled sample $S$, \Cref{alg:binary_boost} uses $\A$ to obtain a sequence of hypotheses $h_1,\ldots,h_T$ whose average has loss close to $z$ on each single example in $S$.  This can be achieved through any regret-minimizing algorithm, but for the sake of concreteness our algorithm uses a modified version of Hedge \citep{Freund97decision} where the update step re-weights examples as a function of $\cost$. Unlike standard approaches, the final prediction on a point $x$ is not just the majority vote of the $h_t$'s. Instead, the prediction is constructed by comparing the fraction of $h_t$'s returning $-1$, weighted by $w_-$, and the fraction of $h_t$'s returning $+1$, weighted by $w_+$. The smallest weighted fraction wins, and the corresponding label is returned. We show that this ensures correct labeling of the entire sample with the desired probability.
Formally, we have:
\begin{lemma}\label{thm:binary_scalar_boosting}
Let $\Y=\{-1,+1\}$ and let $\cost:\Y^2\to [0,1]$ be a cost function. Let $\F \subseteq \Y^\X$, and let $\A$ be a $(\cost, z)$-learner for $\F$ with margin $\gamma > 0$ and sample complexity $m_0$.
Fix any $f \in \F$, let $\D$ be any distribution over $\X$, and let $S=\{(x_1,y_1),\ldots,(x_m,y_m)\}$ be a multiset of $m$ examples given by i.i.d.\ points $x_1,\dots,x_m \sim \D$ labeled by $f$.
Finally, fix any $\delta \in (0,1)$. If \Cref{alg:binary_boost} is given $S$, oracle access to $\A$, and parameters $T = \ceil*{\frac{18\ln(m)}{\gamma^2}}$, $\eta = \sqrt{\frac{2\ln(m)}{T}}$, and $\widehat m = m_0\bigl(\frac{\gamma}{3}, \frac{\delta}{T}\bigr)$, then \Cref{alg:binary_boost} makes $T$ calls to $\A$ and returns $\widehat h_S:\X\to\Y$ such that with probability at least $1-\delta$:
\[
     \widehat h_S(x_i) = y_i \qquad \forall i =1,\ldots,m \;.
\]
\end{lemma}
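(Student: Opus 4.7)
The plan is to follow the classical multiplicative-weights boosting template, adapted so that the per-example loss is $\ell_t(i)\triangleq\cost(h_t(x_i),y_i)\in[0,1]$ and the final aggregation is the cost-weighted comparison of line~8 rather than a plain majority vote. First I would invoke the $(\cost,z)$-learner round by round: since $\D_t$ is a bona fide distribution over $\X$ (induced by the reweighted empirical distribution on $S$) and the labels on $S$ agree with $f\in\F$, the hypothesis $h_t=\A(S_t)$ trained on $\widehat m=m_0(\gamma/3,\delta/T)$ fresh samples drawn from $\D_t$ satisfies $\E_{i\sim\D_t}[\ell_t(i)]\le z+\gamma/3$ with probability at least $1-\delta/T$. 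A union bound over the $T$ rounds yields, with probability at least $1-\delta$, the aggregate bound $\frac{1}{T}\sum_{t=1}^T\langle\D_t,\ell_t\rangle\le z+\gamma/3$.

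Second, I would apply the standard multiplicative-weights regret bound to the update $D_{t+1}(i)=D_t(i)\exp(\eta\,\ell_t(i))$, which is the loss-maximizing side of Hedge, to conclude that, for every $i\in[m]$,
\[
\frac{1}{T}\sum_{t=1}^T\ell_t(i)\;\le\;\frac{1}{T}\sum_{t=1}^T\langle\D_t,\ell_t\rangle+O\!\left(\sqrt{\tfrac{\ln m}{T}}\right).
\]
The parameter choices $T=\lceil 18\ln(m)/\gamma^2\rceil$ and $\eta=\sqrt{2\ln(m)/T}$ are calibrated so that the additive regret is $O(\gamma)$ and small enough that, combined with Step~1, the right-hand side is uniformly bounded by $\val(\cost)-\gamma'$ for some $\gamma'>0$; in particular one obtains $\frac{1}{T}\sum_t\ell_t(i)\le z+\tfrac{2\gamma}{3}=\val(\cost)-\tfrac{\gamma}{3}$ for every $i\in[m]$.

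Third, I would translate this uniform bound into the statement $\widehat h_S(x_i)=y_i$. A one-line case analysis using $\cost(y,y)=0$ shows that $\ell_t(i)=w_{-y_i}\cdot\ind{h_t(x_i)\neq y_i}$, and hence $\frac{1}{T}\sum_t\ell_t(i)=w_{-y_i}F(x_i,-y_i)$, where $w_{-y_i}\in\{w_+,w_-\}$ denotes the cost charged when the true label is $y_i$ and the prediction is $-y_i$. Assuming for contradiction that $\widehat h_S(x_i)\neq y_i$, the decision rule in line~8 together with $F(x_i,+1)+F(x_i,-1)=1$ forces $w_{-y_i}F(x_i,-y_i)\ge w_{y_i}(1-F(x_i,-y_i))$; rearranging gives $w_{-y_i}F(x_i,-y_i)\ge\frac{w_+w_-}{w_++w_-}=\val(\cost)$, contradicting the bound $\val(\cost)-\gamma/3$ from Step~2. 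Hence $\widehat h_S(x_i)=y_i$ for every~$i$.

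The only genuinely new element relative to standard $0$-$1$ boosting, and thus the main place requiring care, is this last step: the algebraic fact that the very quantity $\val(\cost)=\frac{w_+w_-}{w_++w_-}$ which defines the margin of a cost-sensitive weak learner is also exactly the threshold at which the cost-weighted comparison in line~8 flips its prediction. This alignment between the ``game value'' that certifies non-triviality of the weak learner and the tie-breaking threshold of the aggregator is what makes the boosting analysis close; it is the cost-sensitive generalization of the familiar $\tfrac{1}{2}$ threshold of standard binary boosting, and the first two steps are otherwise routine adaptations of the classical AdaBoost/Hedge analysis to $[0,1]$-valued losses and present no real obstacle.
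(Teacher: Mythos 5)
Your proposal is correct and follows essentially the same three-step structure as the paper's proof: a Hedge regret bound per example, a union bound over the $(\cost,z)$-learner's per-round guarantee to control $\frac{1}{T}\sum_t\langle\D_t,\ell_t\rangle$, and the observation that the aggregate per-example cost falls strictly below $\val(\cost)$, which forces the cost-weighted decision rule to output the true label. The only cosmetic difference is in the last step: you argue directly by contradiction (a wrong prediction forces $w_{-y_i}F(x_i,-y_i)\ge\val(\cost)$), whereas the paper first establishes that exactly one of $w_-F(x,-1)<\val(\cost)$, $w_+F(x,+1)<\val(\cost)$ can hold and then concludes; these are the same algebraic fact stated in contrapositive.
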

\begin{proof}
Fix any $i \in [m]$. Given that $\max_{t,i} \cost(h_t(x_i),y_i) \le 1$ and that $\eta \le 1$, the standard analysis of Hedge \citep{Freund97decision} shows that:
\begin{align}\label{eq:proof:lemma:1}
    \sum_{t=1}^T \cost(h_t(x_i),y_i) &\le \frac{\ln(m)}{\eta} + \frac{\eta}{2} T +
    \sum_{t=1}^T \E_{j \sim \D_t}\Bigl[\cost(h_t(x_j),y_j) \Bigr] \;.
\end{align}
Dividing both sides by $T$, and using the definitions of $\eta$ and $T$ at the right-hand side, yields:
\begin{align}\label{eq:proof:lemma:2}
    \frac{1}{T}\sum_{t=1}^T \cost(h_t(x_i),y_i) &\le \frac{\gamma}{3} +
    \frac{1}{T}\sum_{t=1}^T \E_{j \sim \D_t}\Bigl[\cost(h_t(x_j),y_j) \Bigr] \;.
\end{align}
For every $t=1,\ldots,T$, since $h_t=\A(S_t)$, since $\A$ is a $(\cost,z)$-learner for $\F$, and by the choice of $\widehat m$:
\begin{align}
    \Pr\left(\E_{j \sim \D_t}\Bigl[\cost(h_t(x_j),y_j) \Bigr] > z + \frac{\gamma}{3}\right) \le \frac{\delta}{T} \;.
\end{align}
By averaging over $T$ and taking a union bound over $t=1,\ldots,T$, we have that, with probability at least $1-\delta$,
\begin{align}
    \frac{1}{T} \sum_{t=1}^T \E_{j \sim \D_t}\Bigl[\cost(h_t(x_j),y_j) \Bigr] \le z + \frac{\gamma}{3} \;.\label{eq:binary_avg_loss_bound}
\end{align}

Now suppose \Cref{eq:binary_avg_loss_bound} holds. We shall prove that $\widehat h_S(x_i) = y_i$ for every $i \in [m]$. First, \Cref{eq:proof:lemma:2} together with the definition of $\gamma$ implies that for every $i \in [m]$:
\begin{align} 
    \frac{1}{T}\sum_{t=1}^T \cost(h_t(x_i),y_i) &\le \frac{2}{3}\gamma + z \le \val(\cost) - \frac{\gamma}{3} < \val(\cost) \;.\label{eq:binary_loss_gamma}
\end{align}
Now recall the definition of $F: \X \times \Y \to [0,1]$ from \Cref{alg:binary_boost}:
\begin{align}
    F(x,y) = \frac{1}{T}\sum_{t=1}^T \ind{h_t(x) = y} \qquad  \forall x \in \X, y \in \Y \;.
\end{align}
Clearly, for any $x \in \X$, we have that $F(x,-1)+F(x,+1)=1$.
Now consider any $i \in [m]$. If $y_i=+1$ then by definition of $\cost$ and $F$:
\begin{align}
    \frac{1}{T}\sum_{t=1}^T \cost(h_t(x_i),y_i) = w_{-}\cdot \frac{1}{T}\sum_{t=1}^T \ind{h_t(x_i) = -1} = w_{-}\cdot F(x_i, -1) \;.   
\end{align}
By \Cref{eq:binary_loss_gamma}, then, $\cost_- \cdot F(x_i,-1) < \val(\cost)$.
By a symmetric argument $w_+\cdot F(x_i,+1) < \val(\cost)$ if $y_i=-1$. It remains to show that for every $x \in \X$ exactly one among $w_- \cdot F(x,-1) < \val(\cost)$ and $w_+ \cdot F(x,+1) < \val(\cost)$ holds. As we have just shown, at least one of the inequalities holds, depending on the value of $y_i$. Now assume towards contradiction that both hold. Recall that $\val(\cost)=\frac{w_- w_+}{w_- + w_+}$. Note that $\gamma > 0$ implies $\val(\cost)>0$ and thus $w_-,w_+ > 0$. Then we have:
\begin{align}
1 = F(x,-1) +  F(x,+1) < \frac{\val(\cost)}{w_-} + \frac{\val(\cost)}{w_+} = \frac{w_+}{w_- + w_+} +  \frac{w_-}{w_- + w_+} = 1 \;.
\end{align}
By definition of $\widehat h_S$ then $\widehat h_S(x_i)=y_i$ for all $i \in [m]$, concluding the proof.
\end{proof}

The next lemma shows that, if the size $m$ of the sample in \Cref{thm:binary_scalar_boosting} is large enough, then the hypothesis returned by \Cref{alg:binary_boost} has small generalization error. It is immediate to see that, together with \Cref{thm:binary_scalar_boosting}, this implies \Cref{thm:binary_scalar_boosting_readable}.
\begin{lemma}\label{lem:bin_scalar_boost_generalization}
Assume the hypotheses of \Cref{thm:binary_scalar_boosting}.
For any $\eps,\delta \in (0,1)$, if the size $m$ of the sample given to \Cref{alg:binary_boost} satisfies
\[
    m \ge \frac{\ln(2/\delta)}{\eps} + m_0\left(\frac{\gamma}{3},\frac{\delta}{2T}\right) \cdot T \cdot \left(1+\frac{\ln(m)}{\eps}\right) \;,
\]
then with probability at least $1-\delta$ the output $\widehat h_S$ of \Cref{alg:binary_boost} satisfies $L_\D^\cost(\widehat h_S) \le \epsilon$. Therefore, \Cref{alg:binary_boost} is a $(\cost,0)$-learner for $\F$ with sample complexity $m(\eps,\delta) = \widetilde{O}\Bigl(m_0\bigl(\frac{\gamma}{3}, \frac{\delta}{2T}\bigr) \cdot \frac{\ln(1/\delta)}{\eps\cdot\gamma^2}\Bigr)$.
\end{lemma}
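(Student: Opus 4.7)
The plan is to view \Cref{alg:binary_boost} as a (randomized) sample compression scheme and combine the consistency guarantee of \Cref{thm:binary_scalar_boosting} with a standard compression-based generalization bound. The key observation is that the output hypothesis $\widehat h_S$ is fully determined by the hypotheses $h_1,\dots,h_T$, each of which is the output of $\A$ on a subsample $S_t$ drawn from $S$ according to $\D_t$. Hence $\widehat h_S$ is determined by at most $k \triangleq T\cdot \widehat m = T\cdot m_0\!\left(\frac{\gamma}{3}, \frac{\delta}{2T}\right)$ examples from the input sample, plus the internal randomness of $\A$ and of the sampling steps.

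First, I would invoke \Cref{thm:binary_scalar_boosting} with confidence parameter $\delta/2$ in place of $\delta$ (hence the $\delta/(2T)$ inside $m_0$): this yields that, with probability at least $1-\delta/2$ over the internal randomness, $\widehat h_S$ is consistent with $S$, i.e., $\widehat h_S(x_i) = y_i$ for every $i \in [m]$. Since $\cost(i,i)=0$ and $\cost(\cdot,\cdot)\in[0,1]$, consistency implies that the empirical cost-sensitive loss of $\widehat h_S$ on $S$ is zero and, moreover, that the empirical $0$-$1$ error of $\widehat h_S$ on $S$ is zero and upper-bounds the cost-sensitive loss pointwise. Hence it suffices to bound the generalization $0$-$1$ error of $\widehat h_S$.

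Second, I would apply the standard compression-scheme generalization bound: for any compression scheme of size at most $k$, the probability (over the draw of $S\sim\D^m$ and any auxiliary randomness) that the reconstructed hypothesis is consistent with $S$ yet has true error exceeding $\eps$ is at most $\binom{m}{k}(1-\eps)^{m-k} \le m^k e^{-\eps(m-k)}$. Taking logarithms, the right-hand side is at most $\delta/2$ whenever
\[
  \eps\,(m-k) \geq k \ln m + \ln(2/\delta),
\]
which, after rearranging, is precisely the condition
\[
  m \geq \frac{\ln(2/\delta)}{\eps} + k\left(1+\frac{\ln m}{\eps}\right)
      = \frac{\ln(2/\delta)}{\eps} + m_0\!\left(\frac{\gamma}{3},\frac{\delta}{2T}\right)\cdot T\cdot \left(1+\frac{\ln m}{\eps}\right)
\]
assumed in the statement. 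A union bound over the two failure events (non-consistency and bad generalization given consistency) gives $L_\D^\cost(\widehat h_S)\le \eps$ with probability at least $1-\delta$.

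Finally, to derive the $m(\eps,\delta)=\widetilde{O}\!\bigl(m_0(\gamma/3,\delta/(2T))\cdot \ln(1/\delta)/(\eps\gamma^2)\bigr)$ bound, I would plug in $T=\Theta(\ln(m)/\gamma^2)$ from \Cref{thm:binary_scalar_boosting} and solve the implicit inequality in $m$, absorbing the resulting iterated-logarithm terms into the $\widetilde{O}(\cdot)$. The main obstacle I anticipate is handling the implicit $\ln m$ on the right-hand side of the inequality (because $T$ itself depends on $m$): this is a mild fixed-point calculation that is standard but must be done carefully to ensure the stated sample complexity. Everything else is a routine combination of the consistency lemma with the classical sample-compression bound of Littlestone--Warmuth, extended trivially to randomized compression schemes by conditioning on the internal randomness.
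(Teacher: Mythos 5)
Your proposal follows essentially the same route as the paper: consistency on the sample via \Cref{thm:binary_scalar_boosting} with confidence $\delta/2$, a sample-compression generalization bound with compression size $\kappa = T\cdot\widehat m$, rearranging the resulting inequality to the stated condition on $m$, a union bound, and then plugging in $T = \Theta(\ln m / \gamma^2)$ to solve the fixed-point for the sample complexity. The only surface difference is that you write out the Littlestone--Warmuth counting argument explicitly (and make explicit the step from 0--1 consistency to cost-sensitive loss via $\cost\in[0,1]$), whereas the paper cites Theorem~2.8 of \citet{schapire2012boosting} for the same bound; the two are equivalent and your version is, if anything, a bit more self-contained.
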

\begin{proof}
First, we apply \Cref{thm:binary_scalar_boosting} with $\delta/2$ in place of $\delta$; we obtain that, with probability at least $1-\delta/2$, the hypothesis $\widehat h_S$ is consistent with the labeled sample $S$.
Next, we apply a standard compression-based generalization argument (see e.g., Theorem~2.8 from \citet{schapire2012boosting}).  To this end, note that one can construct a compression scheme for $\widehat h_S$ of size $\kappa$ equal to the total size of the samples on which $\A$ is invoked, that is, $\kappa = \widehat m \cdot T$.
By Theorem~2.8 from \citet{schapire2012boosting} with $\delta/2$ in place of $\delta$ we get that, with probability at least $1-\delta/2$,
\begin{align}
L_\D^\cost(\widehat h_S) \le \frac{\kappa \ln(m) + \ln(2/\delta)}{m-\kappa} \;.
\end{align}
Straightforward calculations show that the right-hand side is at most $\eps$ whenever:
\begin{align}
    m &\ge \frac{\ln(2/\delta)}{\eps} + \kappa \cdot \left(1+\frac{\ln(m)}{\eps}\right) \label{eq:m_ge_m0_0}
    \\ &= \frac{\ln(2/\delta)}{\eps} + m_0\left(\frac{\gamma}{3},\frac{\delta}{2T}\right) \cdot T \cdot \left(1+\frac{\ln(m)}{\eps}\right) \;.\label{eq:m_ge_m0}
\end{align}
A union bound completes the first part of the claim, and shows that \Cref{alg:binary_boost} is a $(\cost,0)$-learner for $\F$.
The condition on the sample complexity of \Cref{alg:binary_boost} then follows immediately by definition of $T = O\bigl(\frac{\ln(m)}{\gamma^2}\bigr)$.
\end{proof}

\paragraph{Remark on adaptive boosting.}
Traditional boosting algorithms, such as the well-known AdaBoost \citep{schapire2012boosting}, do not assume prior knowledge of the margin parameter $\gamma$ and adapt to it dynamically during execution. In contrast, the boosting algorithm in Algorithm \ref{alg:binary_boost}, along with our broader approach, requires an initial estimate of $\gamma$ as input. If this estimate is too large, the algorithm may fail. However, this issue can be addressed through a straightforward binary search procedure: we iteratively adjust our guess, halving it when necessary based on observed outcomes. This adds only a logarithmic overhead of $O(\ln(1/\gamma))$ to the runtime, without affecting sample complexity bounds.

We conclude this subsection with a proof of the second part of \Cref{thm:intro_binary_boost}.
\begin{lemma}\label{thm:bin_scalar_LB}
Let $\X$ be any domain, let $\Y=\{-1,+1\}$, and let $\cost = (w_+, w_-) \in (0,1]^2$ be a cost over $\Y$. If $z \ge \val(\cost)$, then there is an algorithm that is a $(\cost,z)$-learner for every $\F \subseteq \Y^\X$. Moreover, for some domain $\X$ and some $\F \subseteq \Y^\X$ the said learner cannot be boosted to a $(\cost,z')$-learner for any $z' < \val(\cost)$.
\end{lemma}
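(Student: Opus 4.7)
For the first (existence) claim, I take a minimax strategy $\bp^* \in \argmin_{\bp \in \Delta_\Y} \max_{\bq \in \Delta_\Y} \cost(\bp,\bq)$ (which exists by compactness of $\Delta_\Y$) and let $\A^*$ be the algorithm that ignores its input sample and always returns the random-guess hypothesis $h_{\bp^*}$ from \Cref{def:random-guess}, i.e.\ $h_{\bp^*}(x) \sim \bp^*$ independently for each $x$. For any $f \in \F$ and any distribution $\D$ over $\X$, let $\bq_f \in \Delta_\Y$ denote the induced label marginal $\bq_f(y) = \Pr_{x \sim \D}[f(x) = y]$. A direct expansion gives
\[
L_\D^{\cost}(h_{\bp^*}) \eq \cost(\bp^*,\bq_f) \leq \max_{\bq \in \Delta_\Y}\cost(\bp^*,\bq) \eq \val(\cost) \leq z,
\]
so $\A^*$ is a $(\cost,z)$-learner for every $\F$, in fact with sample complexity $m_0 \equiv 0$.

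For the second (non-boostability) claim, I produce a concrete witness. Fix a maximin strategy $\bq^* \in \argmax_{\bq}\min_{\bp}\cost(\bp,\bq)$; by von Neumann's minimax theorem, $\min_{\bp}\cost(\bp,\bq^*) = \val(\cost)$. Take $\X = \{x_+,x_-\}$, $\F = \{f\}$ with $f(x_{\pm}) = \pm 1$, and $\D(x_{\pm}) = \bq^*_{\pm}$, so that the induced label marginal is exactly $\bq^*$. The central observation is that any boosting algorithm $\B$ using $\A^*$ as its weak learner necessarily produces a final predictor with fully oblivious $x$-dependence. Indeed, since $\A^*$ returns an (independent) copy of $h_{\bp^*}$ on every invocation regardless of the subsample on which it is called, for every $x$ the tuple $(h_1(x), \ldots, h_T(x))$ is distributed as $T$ i.i.d.\ draws from $\bp^*$ -- a distribution that does not depend on $x$. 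The boosting definition forces the final predictor to have the form $\bar h_S(x) = g_S\bigl(h_1(x), \ldots, h_T(x)\bigr)$, where $g_S$ may depend on $S$ and on $\B$'s internal randomness but not directly on $x$. It follows that, conditional on $S$ and $\B$'s internal coins, the distribution of $\bar h_S(x)$ is some $\bp_S \in \Delta_\Y$ that is the same for every $x$, and hence
\[
L_\D^{\cost}(\bar h_S) \eq \cost(\bp_S,\bq^*) \geq \min_{\bp \in \Delta_\Y}\cost(\bp,\bq^*) \eq \val(\cost).
\]

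To conclude, for any $z' < \val(\cost)$ I set $\eps = (\val(\cost) - z')/2 > 0$: then $L_\D^{\cost}(\bar h_S) \geq \val(\cost) > z' + \eps$ with probability one over $S$ and $\B$'s internal randomness, so $\B$ is not a $(\cost,z')$-learner for $\F$ for any choice of sample size, number of oracle calls $T$, or aggregation $g_S$. The only subtle step I foresee is making rigorous the obliviousness claim, namely that $\bp_S$ does not depend on $x$; this reduces, however, entirely to the syntactic form of the aggregation in the paper's boosting model, together with the fact that by construction of $\A^*$ the distribution of each $h_i(x)$ is $\bp^*$ independently of $x$.
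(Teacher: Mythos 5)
Your first claim (existence of a trivial $(\cost,z)$-learner) matches the paper's argument exactly: both take a minimax distribution $\bp^*$ and output the random-guess predictor, getting $L_\D^\cost(h_{\bp^*}) = \cost(\bp^*,\bq) \le \val(\cost)$.

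For the non-boostability claim you take a genuinely different route, and the route has a real gap. The paper sidesteps the issue entirely: it picks $\F=\Y^\NN$, which is not PAC learnable, and observes that if a $(\cost,z')$-learner for $\F$ with $z'<\val(\cost)$ existed, then by the already-established first item of \Cref{thm:intro_binary_boost} (equivalently, \Cref{thm:binary_scalar_boosting_readable}) one could further boost it to a $(\cost,0)$-learner, which (since $w_+,w_->0$) is a strong PAC learner for $\F$ --- a contradiction. Thus no $(\cost,z')$-learner for $\F$ exists at all, and in particular $\A^*$ is not boostable. By contrast, you construct a concrete two-point example and argue that any aggregate $\bar h_S(x) = g_S(h_1(x),\dots,h_T(x))$ is distributionally oblivious to $x$. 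The gap is in the step you yourself flag: you assert that $g_S$ ``may depend on $S$ and on $\B$'s internal randomness but not directly on $x$'' and conclude that $(h_1(x),\dots,h_T(x))$ is $T$ i.i.d.\ draws from $\bp^*$ \emph{independently of} $g_S$. But boosting algorithms choose the aggregation $g_S$ adaptively after observing the weak hypotheses evaluated on the training points, so $g_S$ can correlate with the realized values $h_t(x_\pm)$. If one models each $h_t = h_{\bp^*}$ as a \emph{fixed} random function $\X\to\Y$ drawn once, the booster can literally memorize and ``decode'' $(h_1(x_\pm),\dots,h_T(x_\pm))$ from the labeled sample and then choose $g_S$ to invert that map, achieving loss $0$; your conclusion would then be false. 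The argument only survives under the re-randomizing semantics (a randomized hypothesis is a map $\X\to\Delta_\Y$ and each invocation draws a fresh label), in which case the prediction-time draws are independent of $g_S$. That is indeed the semantics the paper uses, but you neither state it nor use it, so your ``obliviousness'' step is not sound as written. A secondary observation: your $\F$ is a singleton, so it is trivially PAC-learnable by an algorithm that ignores $\A^*$; your argument therefore depends critically on restricting to the paper's syntactic aggregation form $\bar h(x)=g(h_1(x),\dots,h_T(x))$, whereas the paper's choice $\F=\Y^\NN$ rules out \emph{all} $(\cost,z')$-learners, model-independently. Filling in the re-randomization argument carefully would repair your proof, but as submitted the central step is incomplete.
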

\begin{proof}
By definition of $\val(\cost)$ there exists a distribution $\bp \in \Delta_\Y$ such that $\cost(\bp,\bq) \le \val(\cost)$ for every $\bq \in \Delta_\Y$ over $\Y$.
Consider then the algorithm that ignores the input and returns a randomized predictor $h_{\bp}$ such that $h_{\bp}(x) \sim \bp$ independently for every given $\x \in \X$. Clearly, for every distribution $\D$ over $\X$,
\begin{align}
    L_\D^{\cost}(h_{\bp}) \le \cost(\bp,\bq) \le \val(\cost) \;,
\end{align}
where $\bq$ is the marginal of $\D$ over $\Y$. This proves that the algorithm is a $(\cost,z)$-learner for every $\F \subseteq \Y^\X$. 
Let us now show that for some $\F \subseteq \Y^\X$ cannot be boosted to a $(\cost,z')$-learner for any $z' < \val(\cost)$. If $\val(\cost)=0$ then this is trivial, as $\cost \ge 0$; thus we can assume $\val(\cost)>0$.
Suppose then that a $(\cost,z')$-learner for $\F$ exists, where $z' < \val(\cost)$. By item (1), it follows that a $(\cost,0)$-learner for $\F$ exists.
Recall that $\val(\cost) = \frac{w_- w_+}{w_- + w_+}$. Since we assumed $\val(\cost)>0$, it follows that $w_-,w_+ > 0$. Then, a $(\cost,0)$-learner is a strong learner in the PAC sense.
Now choose $\F$ to be any non-learnable class; for instance, let $\F=\Y^\NN$ (see \citet[Theorem 5.1]{shalev2014understanding}). Then, no strong PAC learner and therefore no $(\cost,0)$-learner exists for $\F$. Hence a $(\cost,z')$-learner for $\F$ with $z' < \val(\cost)$ does not exist, too.
\end{proof}

\subsection{Multi-objective losses: boosting a $(\costVec,\bz)$-learner}\label{subsec:binary:vector}
This section considers the boosting problem for learners that satisfy multi-objective costs, as captured by the notion of $(\costVec,\bz)$-learners (\Cref{def:vec_z_learner}).
Our main question is then:
\begin{center}
    \it When can a $(\costVec,\bz)$-learner be boosted to a $(\costVec,\bzero)$-learner?
\end{center}
For the scalar case, we showed in \Cref{subsec:binary:scalar} that the threshold to boost $(\cost,z)$-learners is the value of the game $\val(\cost)$, which is the best bound on $\cost$ one can achieve by just tossing a coin---independently of the example $x$ at hand, and even of the specific distribution $\D$ over $\X$. One may thus guess that the same characterization holds for boosting $(\costVec,\bz)$-learners.  It turns out that this guess fails, as we show below. We thus adopt a different and stronger notion of ``trivial learner'', obtained by exchanging the order of the players. It turns out that this is the right notion, in the sense that it is equivalent to being non-boostable.

For the sake of simplicity we assume a specific $\costVec$ that we call \emph{population-driven} cost. This will help conveying the key ideas, without having to deal with the technical details of the more general version of our results (\Cref{sec:multiclass}). The population-driven cost is the one that counts, separately, false negatives and false positives. More precisely, $\costVec = (\cost_-, \cost_+)$ where $\cost_-$ and $\cost_+$ are now two cost functions that for all $i,j \in \Y$ satisfy:
\begin{align}
    \cost_-(i,j) &= \ind{i=-1 \land j=+1} \;,\\
    \cost_+(i,j) &= \ind{i=+1 \land j=-1} \;.
\end{align}
For $\bz=(z_-,z_+) \in \RRp^2$, a $(\costVec,\bz)$-learner then ensures simultaneously a false-negative rate arbitrarily close to $z_-$ and a false-positive rate arbitrarily close to $z_+$.

\subsubsection{Coin attainability}
Let $\costVec$ as above, and let $\bz \in \RRp^2$. As a first attempt at answering our question above, we guess that $\bz$ is non-boostable when it is attainable by a random coin toss in the same way as for scalar costs (\Cref{subsec:binary:scalar}). That is, we may say that $\bz$ is \emph{trivially attainable} w.r.t.\ $\costVec$ if there exists $\bp \in \Delta_\Y$ such that for every $\bq \in \Delta_\Y$ we have $\costVec(\bp,\bq) \le \bz$. Clearly, for any such $\bz$ there would exist a $(\costVec,\bz)$-learner that is not boostable for some class $\F$. If we could also prove the converse, that any $\bz$ \emph{not}
trivially attainable is boostable, then we would have determined that trivial attainability is equivalent non-boostability.

Unfortunately, this turns out to be false. To see why, fix any distribution $\bp=(p_-,p_+)$. Define then $\bq=(q_-,q_+)$ as follows: if $p_- \ge \nicefrac{1}{2}$ then $q_-=0$, otherwise $q_+=0$. It follows immediately that either $p_-(1-q_-) \ge \nicefrac{1}{2}$ or $p_+(1-q_+) \ge \nicefrac{1}{2}$. That is, $\costVec(\bp,\bq)$ is at least $\nicefrac{1}{2}$ in at least one coordinate. As a consequence, no $\bz < (\nicefrac{1}{2},\nicefrac{1}{2})$ is trivially attainable. Thus, if our guess is correct, it should be the case that any $(\costVec,\bz)$-learner with $\bz < (\nicefrac{1}{2},\nicefrac{1}{2})$ is boostable.

We can easily show that that is not the case. Let $\bz=(\nicefrac{1}{4},\nicefrac{1}{4})$ and consider the following $(\costVec,\bz)$-learner. Upon receiving a sufficiently large labeled sample from some distribution $\D$ over $\X$, the learner computes an estimate $\hat \bq=(\hat q_-,\hat q_+)$ of the marginal of $\D$ over $\Y$. The learner then computes $\bp=(p_-,p_+)$ that satisfies:
\begin{align}
    p_- \cdot(1-\hat q_-) \le \frac{1}{4} \quad \text{and} \quad p_+ \cdot(1-\hat q_+) \le \frac{1}{4} \;.
\end{align}
It is not hard to show that such a $\bp$ always exists. The learner thus outputs $h$ such that $h(x) \sim \bp$ for all $x \in \X$ independently. As the number of samples received by the learner increases, the estimate $\hat\bq$ approaches $\bq$, and thus the loss of the learner approaches $\bz$. Clearly, however, such a learner cannot be boosted, as its predictions are independent of the example $x$ at hand. We conclude that there exist $\bz$ that are not trivially attainable and yet are not boostable.

The example above suggest a fix to our notion of ``non-boostable learner''. The fix consists in considering again prediction by a coin $\bp$, where, however, we allow $\bp$ to be a function of $\bq$. This leads to the following definition:
\begin{definition}\label{def:bin_vec_coin}
    Let $\costVec = (w_-,w_+)$ be the population-driven cost and let $\bz = (z_-,z_+) \in \RRp^2$.
    Then, $\bz$ is \emph{coin-attainable} w.r.t.\ $\costVec$ if for every distribution $\bq=(q_-,q_+)$ there exists a distribution $\bp=(p_-,p_+)$ such that:
    \begin{equation*}
        \cost_-(\bp,\bq) = p_- \cdot(1-q_-) \le z_- \quad \text{and} \quad \cost_+(\bp,\bq) = p_+ \cdot(1-q_+) \le z_+ \;.
    \end{equation*}
\end{definition}
\noindent Note how \Cref{def:bin_vec_coin} mirrors the game of \Cref{subsec:binary:scalar}, but with a reverted \emph{order of the players}: here, the predictor plays second.
It should be clear that, if $\bz$ is coin-attainable, then there exists a $(\costVec,\bz)$-learner that is not boostable. That is the learner described above, which gets an estimate $\hat\bq$ of $\bq$ from the sample, and then returns $h$ such that $h(x) \sim \bp$, where $\bp$ satisfies the condition in \Cref{def:bin_vec_coin} with respect to $\hat\bq$. What is not clear is whether the converse holds, too: is a $(\costVec,\bz)$-learner boostable whenever $\bz$ is \emph{not} coin-attainable? It turns out that this is the case, as the next subsection shows.

\subsubsection{Coin-attainability equals non-boostability}
Let us try to boost a $(\costVec,\bz)$-learner $\A$. We can do so by reduction to the scalar case of \Cref{subsec:binary:scalar}.
To this end choose any $\balpha = (\alpha_1,\alpha_2) \in \Delta_\Y$. Define $\balpha \cdot \costVec : \Y^2 \to \RRp$ by:
\begin{align}
    (\balpha \cdot \costVec)(i,j) = \alpha_1 \cdot \cost_-(i,j) + \alpha_2 \cdot \cost_+(i,j) \qquad \forall i,j \in \Y \;.
\end{align}
For conciseness we shall write $\cost = \balpha \cdot \costVec$. In words, $\cost$ is the convex combination of $\cost_-,\cost_+$ according to $\balpha$, and therefore $\cost=\bigl(\begin{smallmatrix}0 & \alpha_1\\\alpha_2 & 0\end{smallmatrix}\bigr)$. Therefore $\cost:\Y^2\to [0,1]$ and $\cost(i,i)=0$ for each $i \in \Y$. Now let $z = \balpha \cdot \bz$. Now we make the following crucial observation: since $\A$ is a $(\costVec,\bz)$-learner, then it is also a $(\cost,z)$-learner. This is immediate to see from the definition of $(\costVec,\bz)$ and the definition of $\cost$. It follows that, by the results of \Cref{subsec:binary:scalar}, $\A$ can be boosted if $z < \val(\cost)$. Summarizing, we can boost $\A$ whenever:
\begin{align}
    \balpha \cdot \bz < \val(\balpha \cdot \costVec) \;.\label{eq:bin_scalarization}
\end{align}
That is, if there is \emph{any} $\balpha \in \Delta_\Y$ satisfying \Cref{eq:bin_scalarization}, then $\A$ can be boosted to a strong learner. Moreover, if that is the case, then one can prove that one can ensure $\balpha > \bzero$. Then, one can easily see that $\A^*$ is a $(\costVec,\bzero)$-learner. Our main question thus becomes:
\begin{center}
    \it Is it true that, if $\bz$ is not coin-attainable, then there is $\balpha \in \Delta_\Y$ such that $\balpha \cdot \bz < \val(\balpha \cdot \costVec)$?
\end{center}
We show that this is indeed the case, and therefore $\bz$ is coin-attainable if and only if $\balpha \cdot \bz \ge \val(\balpha \cdot \costVec)$ for all $\balpha \in \Delta_\Y$. Equivalently, all $(\costVec,\bz)$-learners are boostable if and only if $\balpha \cdot \bz < \val(\balpha \cdot \costVec)$ for some $\balpha \in \Delta_\Y$. As a consequence, every $\bz$ is either coin-attainable or boostable. Formally, we have:
\begin{theorem}\label{thm:bin_sqrtz_duality}
    Let $\Y=\{-1,+1\}$, let $\costVec$ be the population-driven cost, and let $\bz = (z_-,z_+) \in \RRp^2 \setminus \bzero$.
    Then $\bz$ is coin-attainable w.r.t.\ $\costVec$ if and only if $\balpha \cdot \bz \ge \val(\balpha \cdot \costVec)$, where: 
    \begin{align}
        \balpha = \left(\frac{\sqrt{z_+}}{\sqrt{z_-}+\sqrt{z_+}}, \frac{\sqrt{z_-}}{\sqrt{z_-}+\sqrt{z_+}}\right) \in \Delta_\Y\,.
    \end{align}
    Equivalently, $\bz$ is coin-attainable w.r.t.\ $\costVec$ if and only if $\sqrt{z_-}+\sqrt{z_+} \ge 1$.
\end{theorem}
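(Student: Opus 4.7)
My plan is to prove the main equivalence ``$\bz$ coin-attainable iff $\sqrt{z_-}+\sqrt{z_+} \geq 1$'', and to show that, with the specific $\balpha$ in the statement, the inequality $\balpha \cdot \bz \geq \val(\balpha \cdot \costVec)$ collapses to the same algebraic condition. Throughout I take $z_-, z_+ > 0$ and set $s = \sqrt{z_-}+\sqrt{z_+}$; the degenerate case where one coordinate vanishes is handled by direct inspection.

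First I would unpack the scalarization: $\balpha \cdot \costVec$ has matrix $\bigl(\begin{smallmatrix} 0 & \alpha_1 \\ \alpha_2 & 0 \end{smallmatrix}\bigr)$, so by the closed form $\val(\cost) = \frac{w_- w_+}{w_-+w_+}$ from \Cref{subsec:binary:scalar}, together with $\alpha_1 + \alpha_2 = 1$, we get $\val(\balpha \cdot \costVec) = \alpha_1 \alpha_2 = \sqrt{z_- z_+}/s^2$. A short direct calculation factoring $\sqrt{z_- z_+}$ from $\alpha_1 z_- + \alpha_2 z_+$ gives $\balpha \cdot \bz = \sqrt{z_- z_+}$, so the inequality $\balpha \cdot \bz \geq \val(\balpha \cdot \costVec)$ is equivalent to $s \geq 1$. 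This reduces the entire theorem to proving coin-attainability iff $s \geq 1$. The necessity direction is then immediate: coin-attainability scalarizes (for any fixed $\balpha$) to $\max_\bq \min_\bp (\balpha \cdot \costVec)(\bp,\bq) \leq \balpha \cdot \bz$, and by von Neumann's minimax this maximin equals $\val(\balpha \cdot \costVec)$, so the computation above forces $s \geq 1$.

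For sufficiency, I fix $\bq = (q_-,q_+)$ and look for $\bp$ with $p_-(1-q_-) \leq z_-$ and $p_+(1-q_+) \leq z_+$. Writing $p = p_-$ and $q = q_-$, the endpoints $q \in \{0,1\}$ are handled trivially by choosing $p \in \{0,1\}$. For $q \in (0,1)$, a valid $p \in [0,1]$ exists iff the interval $[\max(0, 1-z_+/q),\ \min(1, z_-/(1-q))]$ is nonempty, which reduces to the quadratic inequality $q^2 - q(1+z_+-z_-) + z_+ \geq 0$ holding for all such $q$, i.e., to its discriminant being nonpositive. A short manipulation rewrites the discriminant as $(1-(z_-+z_+))^2 - 4 z_- z_+$.

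The main obstacle is to verify that $(1-(z_-+z_+))^2 \leq 4 z_- z_+$ is exactly the condition $s \geq 1$. I would resolve $|1-(z_-+z_+)| \leq 2\sqrt{z_- z_+}$ by splitting on the sign of $1-(z_-+z_+)$: when $z_-+z_+ \leq 1$ it rearranges to $(\sqrt{z_-}+\sqrt{z_+})^2 \geq 1$, i.e., $s \geq 1$; when $z_-+z_+ > 1$ it rearranges to $(\sqrt{z_-}-\sqrt{z_+})^2 \leq 1$, which holds automatically since $z_\pm \leq 1$ (and incidentally $s > 1$ here). Hence $s \geq 1$ produces a valid $\bp$ for every $\bq$, closing sufficiency and completing the proof.
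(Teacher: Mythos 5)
Your proof is correct and follows essentially the same route as the paper: the algebraic reduction to $\sqrt{z_-}+\sqrt{z_+}\ge 1$, necessity via von Neumann's minimax applied to the scalarized game, and sufficiency via the fixed-$\bq$ feasibility condition for $\bp$. Your quadratic $q^2 - q(1+z_+-z_-)+z_+\ge 0$ in $q=q_-$ is, after the substitution $x=q_+=1-q$, identical to the paper's $a(1-x)+bx-x(1-x)\ge 0$ (with $a=z_-,b=z_+$), so the discriminant analysis is the same fact the paper dispatches with ``can be easily checked to hold whenever $\sqrt{a}+\sqrt{b}\ge 1$.'' Two small notes: (i) your ``i.e., discriminant nonpositive'' is only a priori a sufficient condition for the quadratic to be nonnegative on $(0,1)$, though here it is in fact also necessary (since $f(0)=z_+\ge 0$, $f(1)=z_-\ge 0$ force any negative dip to lie inside $(0,1)$); a one-line remark would close this. (ii) You correctly computed $\val(\balpha\cdot\costVec)=\alpha_1\alpha_2=\sqrt{z_-z_+}/(\sqrt{z_-}+\sqrt{z_+})^2$, whereas the paper's displayed formula $\sqrt{z_-z_+}/(\sqrt{z_-}+\sqrt{z_+})$ drops a factor — a harmless typo, since both give the equivalence with $\sqrt{z_-}+\sqrt{z_+}\ge 1$, but your version is the right one.
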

\begin{proof}
    First of all, by straightforward calculations we determine:
    \begin{align}
        \balpha \cdot \bz &= \sqrt{z_-} \sqrt{z_+} \;,\\
        \val\left(\balpha \cdot \costVec\right) &= \val\left(\begin{smallmatrix}0 & \alpha_1\\\alpha_2 & 0\end{smallmatrix}\right) =\frac{\sqrt{z_-}\sqrt{z_+}}{\sqrt{z_-}+\sqrt{z_+}} \;.
    \end{align}
    This proves that $\balpha \cdot \bz \ge \val(\balpha \cdot \costVec)$ if and only if $\sqrt{z_-}+\sqrt{z_+} \ge 1$.
    
    Now, for the ``only if'' direction, by von Neumann's minimax theorem and by definition of $\balpha$,
    \begin{align}
        V(\balpha\cdot\costVec) &=
        \max_{\bq\in\Delta_\Y} \min_{\bp\in\Delta_\Y} \frac{\sqrt{z_+}}{\sqrt{z_-}+\sqrt{z_+}}\cdot p_-\cdot(1-q_-) + \frac{\sqrt{z_-}}{\sqrt{z_-}+\sqrt{z_+}}\cdot p_+\cdot(1-q_+)
        \\
        &\le \frac{\sqrt{z_+} \cdot z_-}{\sqrt{z_-}+\sqrt{z_+}} + \frac{\sqrt{z_-}\cdot  z_+}{\sqrt{z_-}+\sqrt{z_+}}
        = \sqrt{z_-} \sqrt{z_+}
        = \balpha\cdot\bz \;,
    \end{align}
    where the inequality holds as $\bz$ is coin-attainable.

    For the ``if'' direction, suppose $\balpha \cdot \bz \ge \val(\balpha \cdot \costVec)$, hence $\sqrt{z_-} + \sqrt{z_+} \ge 1$. We claim that this implies $\bz$ is coin-attainable. Fix indeed any $\bq = (q_-,q_+) \in \Delta_\Y$. We consider three cases:  (1) $q_+ \le z_-$, (2) $q_- \le z_+$, and (3) $q_+ > z_-$ and $q_- > z_+$.  If $q_+ \le z_-$ then choose $\bp=(1,0)$, otherwise if $q_- \le z_+$ choose $\bp=(0,1)$. It is immediate to see that this implies $p_-\cdot(1-q_-)\le z_-$ and $p_+\cdot(1-q_+) \le z_+$. Suppose then $q_+ > z_-$ and $q_- > z_+$, which implies $\bq > \bzero$. We claim that:
    \begin{align}
        \frac{z_-}{q_+} + \frac{z_+}{q_-} \ge 1 \;.\label{eq:zq_ge1}
    \end{align}
    Indeed, letting $a=z_-$, $b=z_+$, $x=q_+$, multiplying both sides by $x(1-x)$, and rearranging yields:
    \begin{align}
        a \cdot (1-x) + b \cdot x - x(1-x) \ge 0 \;,
    \end{align}
    which can be easily checked to hold for all $x$ whenever $\sqrt{a}+\sqrt{b} \ge 1$. \Cref{eq:zq_ge1} then implies the existence of $p_-,p_+$ such that $p_-+p_+=1$ and that:
    \begin{align}
        p_- \cdot (1-q_-) &\le \frac{z_-}{q_+} \cdot (1-q_-) = z_- \;,
        \\
        p_+ \cdot (1-q_+) &\le \frac{z_+}{q_-} \cdot (1-q_+) = z_+ \;.
    \end{align}
    We conclude that $\bz$ is coin-attainable.
\end{proof}

\subsection{A duality, and a geometric interpretation}
\begin{figure}[h]
    \centering
    \includegraphics[width=200pt]{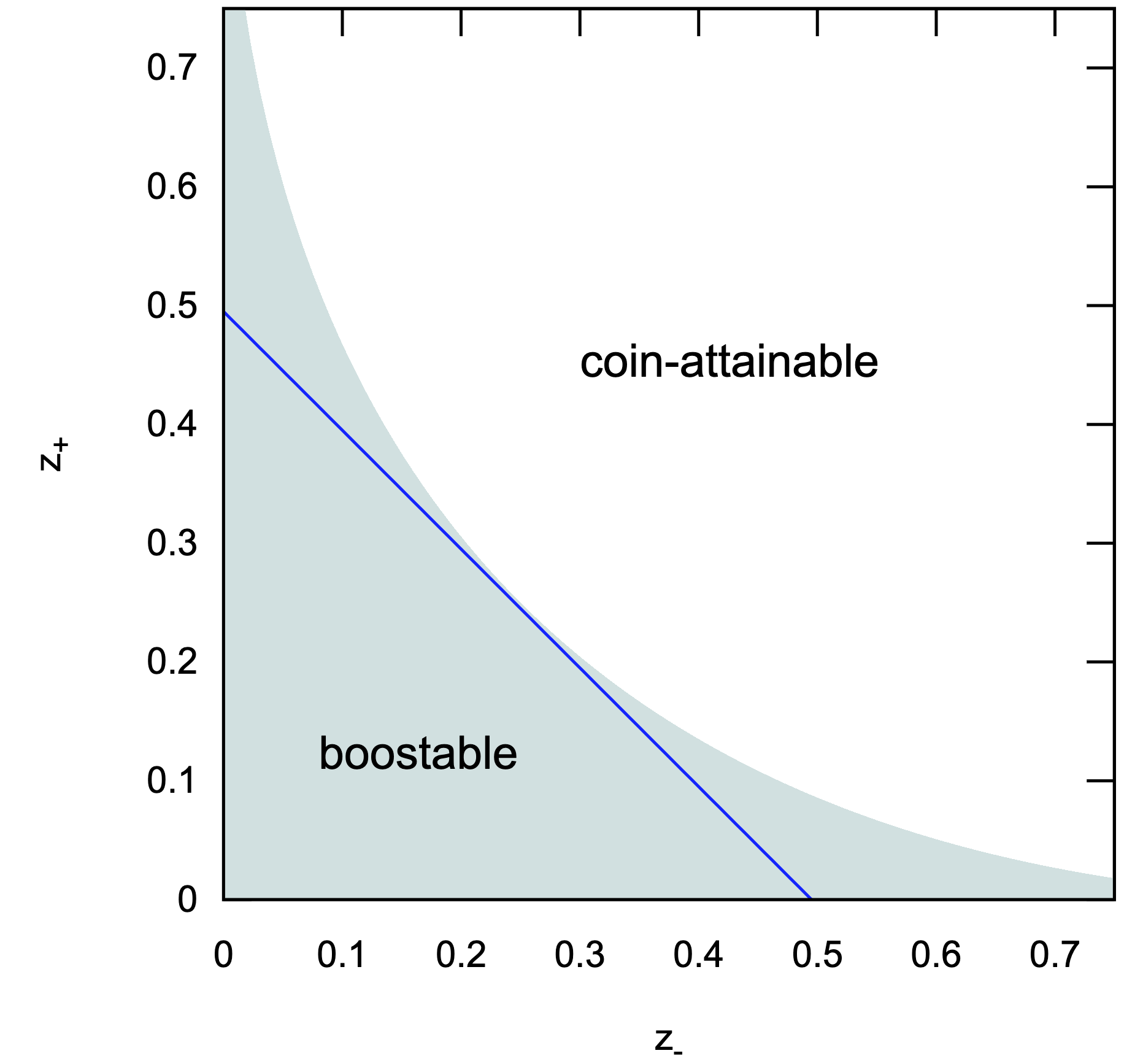}
    \caption{Duality in a picture. For a multi-objective cost $\costVec:\Y^2\to\RRp^2$, the set $\C(\costVec)$ of all coin-attainable vectors $\bz$ is the intersection of all halfspaces in the form $H(\balpha)=\{\bx \in \RR^2 : \balpha \cdot \bx \ge \val(\balpha \cdot \costVec)\}$. As a consequence, the complement of $\C(\costVec)$ is the set of all boostable vectors $\bz$. This is a special case of a more general result, stated in \Cref{sec:multiclass}, that holds for any $k \ge 2$ and any multi-objective cost $\costVec : \Y^2 \to \RRp^r$. The straight blue line is the boundary of a specific $H(\balpha)$: for any $\bz$ in it, every $(\costVec,\bz)$-learner is boostable w.r.t.\ the cost $\balpha \cdot \costVec$. }
    \label{fig:binary-coin_vs_boost}
\end{figure}
\Cref{thm:bin_sqrtz_duality} has an interesting geometric interpretation. In fact, one can easily check that for every $\bz \in \RRp^2$ it holds that $\sqrt{z_-}+\sqrt{z_+} \ge 1$ if and only if $\balpha \cdot \bz \ge \val(\balpha \cdot \costVec)$ for every $\balpha \in \Delta_\Y$, and not just for the single $\balpha$ specified in \Cref{thm:bin_sqrtz_duality}. That is, we have:
\begin{theorem}\label{thm:bin_duality}
    Let $\Y=\{-1,+1\}$, let $\costVec$ be the population-driven cost, and let $\bz \in \RRp^2$. Then $\bz$ is coin-attainable w.r.t.\ $\costVec$ if and only if $\balpha \cdot \bz \ge \val(\balpha \cdot \costVec)$ for every $\balpha \in \Delta_\Y$.
\end{theorem}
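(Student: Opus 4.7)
The plan is to obtain \Cref{thm:bin_duality} as an essentially immediate corollary of \Cref{thm:bin_sqrtz_duality}. The ``for every $\balpha$'' quantifier here is strictly stronger than the ``for a specific $\balpha$'' quantifier used in \Cref{thm:bin_sqrtz_duality}, so the reverse implication is free; the forward implication only requires observing that coin-attainability controls every $\balpha$-scalarized game value, which is a one-line minimax argument.

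For the forward direction I would proceed as follows. Fix any $\balpha \in \Delta_\Y$ and consider the scalarized zero-sum game with payoff matrix $\balpha \cdot \costVec$. By von Neumann's minimax theorem,
\[
\val(\balpha \cdot \costVec) = \max_{\bq \in \Delta_\Y} \min_{\bp \in \Delta_\Y} (\balpha \cdot \costVec)(\bp,\bq).
\]
Coin-attainability of $\bz$ supplies, for each $\bq$, a response $\bp = \bp(\bq)$ with $\costVec(\bp,\bq) \le \bz$ coordinate-wise, so $(\balpha \cdot \costVec)(\bp,\bq) \le \balpha \cdot \bz$; taking the min over $\bp$ and then the max over $\bq$ yields $\val(\balpha \cdot \costVec) \le \balpha \cdot \bz$.

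For the reverse direction, assuming $\balpha \cdot \bz \ge \val(\balpha \cdot \costVec)$ for every $\balpha$, I would simply instantiate this hypothesis at the extremal choice
\[
\balpha^{\star} = \left(\frac{\sqrt{z_+}}{\sqrt{z_-}+\sqrt{z_+}}, \frac{\sqrt{z_-}}{\sqrt{z_-}+\sqrt{z_+}}\right)
\]
from \Cref{thm:bin_sqrtz_duality}, which by that theorem is equivalent to coin-attainability of $\bz$. The degenerate case $\bz = \bzero$ (not covered by the hypothesis $\bz \in \RRp^2 \setminus \bzero$ of \Cref{thm:bin_sqrtz_duality}) is handled for free, since the hypothesis ``for every $\balpha$'' already fails for, e.g., $\balpha = (1/2, 1/2)$ (where $\val(\balpha \cdot \costVec) > 0$), while $\bzero$ is easily checked to not be coin-attainable against $\bq = (1/2, 1/2)$, so both sides of the iff are false and the equivalence holds vacuously.

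No real obstacle is expected: the structural content---the closed form of the extremal scalarization and the characterization $\sqrt{z_-}+\sqrt{z_+} \ge 1$---is already packaged inside \Cref{thm:bin_sqrtz_duality}. The payoff of \Cref{thm:bin_duality} is rather the geometric picture of \Cref{fig:binary-coin_vs_boost}: the coin-attainable region is precisely the intersection $\bigcap_{\balpha \in \Delta_\Y} \{\bx \in \RR^2 : \balpha \cdot \bx \ge \val(\balpha \cdot \costVec)\}$ of halfspaces indexed by scalarizations, which is the binary, population-driven instance of the general multi-objective/cost-sensitive duality pursued in \Cref{sub:intro_mo_equiv} and \Cref{sec:multiclass}.
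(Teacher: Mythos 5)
Your proof is correct, and it takes a genuinely different route from the paper's. The paper does not prove \Cref{thm:bin_duality} directly; it declares it a special case of \Cref{prop:intro_coin_duality}, whose proof (via \Cref{lem:proving_duality} and \Cref{prop:J_dice_duality} in \Cref{sec:duality}) applies von Neumann's minimax theorem to a zero-sum game whose pure strategies are the $r$ objectives versus the $|\Y|$ labels. That argument is agnostic to the structure of $\costVec$ and of $\Y$, and is the one the authors need anyway for the multiclass, multi-objective results. By contrast, you leverage the closed-form characterization specific to the binary population-driven case that was already established in \Cref{thm:bin_sqrtz_duality}: the single extremal scalarization $\balpha^\star = \bigl(\tfrac{\sqrt{z_+}}{\sqrt{z_-}+\sqrt{z_+}}, \tfrac{\sqrt{z_-}}{\sqrt{z_-}+\sqrt{z_+}}\bigr)$ already witnesses the boundary $\sqrt{z_-}+\sqrt{z_+}=1$, so the universal quantifier over $\balpha$ follows for free in the reverse direction, and the forward direction is the one-line minimax observation that coin-attainability upper-bounds $\min_{\bp}(\balpha\cdot\costVec)(\bp,\bq)$ by $\balpha\cdot\bz$ for every $\bq$. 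Your treatment of the degenerate case $\bz=\bzero$ (excluded by the hypotheses of \Cref{thm:bin_sqrtz_duality}) is a necessary and correctly handled detail. The trade-off is clear: your proof is shorter and self-contained within \Cref{subsec:binary:vector}, but it relies on the explicit algebra of the population-driven cost and does not generalize; the paper's route is longer and lives in a separate section, but yields the multiclass, multi-objective \Cref{prop:J_dice_duality} in one stroke. The paper's informal remark preceding the theorem (``one can easily check that\dots'') actually gestures at exactly your argument, so you have, in effect, filled in the ``easy check'' the authors chose to subsume into the general machinery.
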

\noindent \Cref{thm:bin_duality} says that $\bz$ is coin-attainable if and only if there is no way to ``scalarize'' $\costVec$ so as to obtain a boostable learner w.r.t.\ the corresponding scalarized cost $z$. From a geometric perspective, the result can be read as follows. For any $\balpha \in \Delta_\Y$ consider the following halfspace:
\begin{align}
    H(\balpha) = \{\bx \in \RR^2 \,:\, \balpha \cdot \bx \ge \val(\balpha \cdot \costVec) \} \;. \label{eq:bin:Ha}
\end{align}
Let moreover $\C(\costVec)$ be the set of all $\bz$ that are coin-attainable w.r.t.\ $\costVec$. Then \Cref{thm:bin_duality} says:
\begin{align}
    \C(\costVec) = \bigcap_{\balpha \in \Delta_\Y} H(\balpha) \;.
\end{align}
In other words, the set of coin-attainable vectors is precisely the intersection of all halfspaces $H(\balpha)$. It follows that $\C(\costVec)$ is convex: and, indeed, if two vectors $\bz$ and $\bz'$ are both coin-attainable then it is easy to see that $a \cdot \bz + (1-a)\cdot \bz'$ is coin-attainable, too, for every $a \in [0,1]$.
\Cref{fig:binary-coin_vs_boost} gives a pictorial description of this geometric interpretation. The proof of \Cref{thm:bin_duality} is not given here, as it is a special case, for $\costVec$ being the population-driven cost, of \Cref{prop:intro_coin_duality}.

\subsection{Proof of \Cref{thm:binary_MO_boost}}\label{subsec:binary:pf_MO}
\textit{First item}. The proof makes use of the duality results of \Cref{sec:duality}. Assume $(\costVec, \bbz) \notin \C(\costVec)$; that is, $\bz$ is not coin-attainable with respect to $\costVec$. By \Cref{thm:bin_duality} there exists $\balpha \in \Delta_r$ such that $z_{\balpha} <  \val(\cost_{\balpha})$, where $w_{\balpha} = \sum_{i=1}^r \alpha_i w_i$. Thus, $\A$ is a $(w_{\balpha}, z_{\balpha})$-learner for $\F$. By \Cref{thm:bin_scalar_LB}, then, there exists a $(\cost_{\balpha}, 0)$-learner for $\F$. It is immediate to see that, if $\balpha > \bzero$ (that is, $\alpha_\ell>0$ for every $\ell \in [r]$) then 
it is a $(\cost,0)$-learner for $\F$. Thus, we need only to show that we can always assume $\balpha > \bzero$.
Let
\begin{align}
    \balpha(\eps) = (1-\eps)\balpha + \frac{\eps}{r} \cdot \bone \quad \forall \eps \in [0,1]\,.
\end{align}
Note that both $z_{\balpha(\eps)}$ and $\val(\cost_{\balpha(\eps)})$ are continuous functions of $\eps$. Since $z_{\balpha(0)} = z_{\balpha} < \val(\cost_{\balpha}) = \val(\cost_{\balpha(0)})$, it follows that there exists $\eps > 0$ such that $z_{\balpha(\eps)} < \val(\cost_{\balpha(\eps)})$, too.

\textit{Second item}. We show that, if $\bz \in \C(\costVec)$, then there is a trivial learner $\A$ that is a $(\costVec,\bz)$-learner for every $\F=\Y^\X$. By choosing $\F$ to be any non-learnable class this implies that $\A$ is not boostable to a $(\costVec,\bz')$-learner for $\F$ for any $\bz' \notin \C(\costVec)$, for otherwise the first item above would imply the existence of a $(\costVec,\bzero)$-learner for $\F$.
The learner $\A$ works as follows. Given a sample $S$ of $m$ examples drawn i.i.d.\ from $\D$ and labeled by any $f : \X \to \Y$, it estimates the marginal $\bq$ of $\D$ on $\Y$. Denote that estimate by $\hat \bq$.
It is well known that $\A$ can ensure $\hat \bq$ is within total variation distance $\eps$ of $\bq$ with probability at least $1-\delta$ by taking $m=\Theta\bigl(\frac{k+\ln(1/\delta)}{\eps^2}\bigr)$, see for instance \citep[Theorem 1]{canonne2020shortnotelearningdiscrete}.
Assume then this is the case. Let $\bp \in \Delta$ be a distribution such that $\costVec(\bp,\hat\bq) \le \bz$; note that $\bp$ exists by definition of coin attainability.
Then, $\A$ returns a randomized predictor $h_S : \X \to\Y$ such that $h_S(x) \sim \bp$ independently of the input $x$.
Since $\costVec \in [0,1]^r$, then,
\begin{align}
    L_\D^{\costVec}(h_S) \le \costVec(\bq,\hat \bq) + \eps \cdot \bone = \bz + \eps \cdot \bone \,.
\end{align}
This proves that $\A$ is a $(\costVec,\bz)$-learner.

\section{Equivalence: Cost-Sensitive and Multi-Objective}\label{sec:duality}

In this section we examine the connection between cost-sensitive and multi-objective learning. In particular, 
we prove the two equivalence theorems: \Cref{thm:intro_general_duality} which demonstrates a certain equivalence between learning algorithms for a class $\F$, with respect to these different guarantees, 
and \Cref{prop:intro_coin_duality}, which characterizes trivial guarantees in both settings.

 The following lemma is the key component used to prove   \Cref{prop:intro_coin_duality}. 

\begin{lemma}\label{lem:proving_duality}
Let $r \in \mathbb{N}$, let $\costVec = (\cost_1,\ldots,\cost_r)$ where each $\cost_i:\Y^2 \to [0,1]$ is a cost function, and let $\bbz \in [0,1]^r$. 
  Fix any $J \subseteq \Y$, and  $\bq \in \Delta_J$.
  Let $\H \subseteq \Y^\X$ be some fixed hypotheses class. 
    Assume that for every $\balpha \in \Delta_r$, there exists $\bp \in \Delta_\H$ such that $\sum_{i=1}^r \alpha_i \cdot w_i(\bp, \bq) \le  \langle \balpha, \bbz\rangle$. Then, there exists $\bp^* \in \Delta_\H$ such that for all $i=1,...,r$, it holds that $w_i(\bp^*, \bq) \le  z_i$. 
\end{lemma}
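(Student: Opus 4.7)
\textbf{Proof proposal for \Cref{lem:proving_duality}.} The plan is to prove the lemma via a separating-hyperplane argument, viewing the weights $\balpha \in \Delta_r$ provided by the hypothesis as Lagrange multipliers for the simultaneous coordinate-wise constraints $w_i(\bp^*,\bq) \le z_i$. The intuition is identical to that of Blackwell's approachability theorem: the hypothesis says no ``halfspace'' witness rules out approaching $\bz$, and so the convex set of achievable loss vectors must actually reach the downward orthant below $\bz$.

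First I would reformulate the goal geometrically. Define the affine map $\bW:\Delta_\H \to [0,1]^r$ by $\bW(\bp)=\bigl(w_1(\bp,\bq),\ldots,w_r(\bp,\bq)\bigr)$, and let $K \triangleq \bW(\Delta_\H) \subseteq [0,1]^r$. Since each coordinate is linear in $\bp$, $K$ is convex; and since $K \subseteq [0,1]^r$, its closure $\bar K$ is a compact convex subset of $\RR^r$. The target conclusion is equivalent to $K \cap S \ne \emptyset$, where $S \triangleq \{\bu \in \RR^r : u_i \le z_i \text{ for all } i\}$ is the closed downward orthant at $\bz$.

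Next I would argue by contradiction. Suppose $\bar K \cap S = \emptyset$. Then, since $\bar K$ is compact and $S$ is closed convex, the separating hyperplane theorem yields a nonzero $\balpha \in \RR^r$ and a scalar $c$ with $\langle \balpha, \bu\rangle > c \ge \langle \balpha, \bv\rangle$ for all $\bu \in \bar K$ and $\bv \in S$. Because $S$ is unbounded below in every coordinate, any such $\balpha$ must be coordinate-wise nonnegative; normalizing, we may assume $\balpha \in \Delta_r$. Since $\bz \in S$, we have $\langle \balpha, \bz\rangle \le c < \langle \balpha, \bW(\bp)\rangle$ for every $\bp \in \Delta_\H$, i.e.\ $\sum_i \alpha_i w_i(\bp,\bq) > \langle \balpha, \bz\rangle$ uniformly in $\bp$. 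This directly contradicts the hypothesis instantiated at this $\balpha$. Hence $\bar K \cap S$ is nonempty, producing a point $\bu^* \in \bar K$ with $\bu^* \le \bz$ componentwise.

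Finally, I would promote $\bu^*$ to an actual $\bp^* \in \Delta_\H$. Because $\bar K$ equals the convex hull of the compact set $\overline{\bW(\H)} \subseteq [0,1]^r$, Carath\'eodory's theorem expresses $\bu^*$ as a convex combination of at most $r+1$ points of $\overline{\bW(\H)}$, which can be realized by a distribution $\bp^* \in \Delta_\H$ of support size $\le r+1$ (using the convexity of $\Delta_\H$ and, if needed, a standard approximation when a component lies in $\overline{\bW(\H)}\setminus \bW(\H)$). The main obstacle is precisely this attainment step: separation gives a point in the closure $\bar K$ rather than $K$ itself, and one must work in the finite-dimensional image space $[0,1]^r$ together with Carath\'eodory to extract a finitely supported $\bp^*$; the rest of the argument (bilinearity, applicability of separation, and the single-$\balpha$ reduction of the for-all-$\balpha$ hypothesis) is routine.
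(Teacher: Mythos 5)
Your approach is correct in spirit but takes a genuinely different route from the paper. The paper defines an explicit finite zero-sum game: the maximizing player's pure strategies are $[r]$, the minimizing player's pure strategies are $\Y$, and the payoff matrix is $M(\ell,i) = \E_{j\sim\bq}\,\cost_i(\ell,j) - z_i$. The hypothesis says $\max_{\balpha}\min_{\bp} M(\bp,\balpha) \le 0$; von Neumann's minimax theorem then swaps the order of $\min$ and $\max$, and the resulting $\bp^*$ satisfies $\max_{\balpha} M(\bp^*,\balpha) \le 0$, so in particular taking $\balpha=\mathbf{e}_i$ gives $\cost_i(\bp^*,\bq)\le z_i$. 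This is a one-line reduction to a finite matrix game with no closure or Carath\'eodory argument. Your proof instead unpacks the separating-hyperplane/Farkas content that underlies minimax, which is instructive and makes the Blackwell-approachability connection visible (the paper itself only gestures at this connection in \Cref{sec:results}). The two proofs are logically equivalent for finite strategy sets; yours buys a geometric picture, the paper's buys brevity.

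There is one genuine soft spot in your proof, which you partly flag yourself: strict separation only produces $\bu^* \in \bar K$, and passing from $\bar K$ to $K$ is not automatic. If $\H$ is infinite, $K = \bW(\Delta_\H)$ need not be closed, and your Carath\'eodory step only expresses $\bu^*$ as a convex combination of points in $\overline{\bW(\H)}$, not $\bW(\H)$; approximating each such limit point gives $\bp^*$ with $\bW(\bp^*) \le \bz + \eps\mathbf{1}$, not $\bW(\bp^*)\le\bz$. You should close this gap explicitly. One way: observe that the hypothesis in fact only uses the marginal distribution over $\Y$ induced by $\bp$, i.e.\ $w_i(\bp,\bq)$ factors through $\Delta_\Y$ (this is what the paper's proof tacitly assumes by taking the minimizing player's pure strategies to be $\Y$ rather than $\H$). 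Since $\Delta_\Y$ is a compact finite-dimensional simplex, the image $K$ is compact, hence closed, and $\bar K = K$, so $\bu^*$ is exactly attained. Alternatively, if you insist on working with $\Delta_\H$, you would need a hypothesis of compactness on $\bW(\H)$ or a finiteness assumption on the support to get exact attainment. Either way, I'd replace the hand-wave ``a standard approximation when a component lies in $\overline{\bW(\H)}\setminus \bW(\H)$'' with the observation that the relevant strategy simplex is $\Delta_\Y$ (finite-dimensional, compact), which makes the closure operation a no-op.
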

\begin{proof}
    Define the following zero-sum game. 
The pure strategies of the maximizing player are $[r]$, and the pure strategies minimizing player are $\Y$. The payoff matrix is given as follows, with rows corresponding to labels $\ell \in \Y$ and columns to loss-entry $i \in [r]$,
\begin{equation}
    M(\ell, i) =  \E_{j\sim \bq} \ w_i(\ell,j) - z_i.
\end{equation}
Thus, we have,
\begin{equation}
    M(\bp, \balpha) \triangleq \E_{\ell\sim \bp} \ \E_{i\sim \balpha} \ M(\ell, i)
= \sum_{i=1}^r \alpha_i w_i(\bp,\bq) - \langle\balpha,\bbz\rangle.
\end{equation}
Denote, 
\begin{align} 
    \val(\bq) \triangleq 
    \max_{\balpha \in \Delta_r} \min_{\bp \in \Delta_\Y} M(\bp,\balpha).
\end{align}
It remains to show that there exists $\bp$ such for all $i=1,...,r$, we have $w_i(\bp,\bq) \le z_i$. 
By assumption, we know that $\val(\bq) \le 0$, and so by von Neumann’s minimax theorem \citep{neumann1928theorie} we have:
\begin{align} 
    \min_{\bp \in \Delta_\Y} \max_{\balpha \in \Delta_r} M(\bp,\balpha) \le 0.
\end{align}
In particular, there {exists} $\bp^* \in \Delta_\Y$ such that for all $\balpha \in \Delta_r$ it holds that:
$$
\sum_{i=1}^r \alpha_i w_i(\bp^*,\bq) \le \langle\balpha,\bbz\rangle.
$$
Observe that the above holds for all $\balpha$ of the form $\mathbf{e}_i$ for $i=1,...,r$ (i.e., the standard basis of $\RR^r$), and so we get the desired result. 
\end{proof}

\begin{proposition}\label{prop:J_dice_duality}
Let $r \in \mathbb{N}$, and let $\costVec = (\cost_1,\ldots,\cost_r)$ where each $\cost_i:\Y^2 \to [0,1]$ is a cost function, $\bbz \in [0,1]^r$, and $J \subseteq \Y$. Then, the following are equivalent.
\begin{enumerate}
    \item $(\costVec, \bbz)$ is $J$-dice-attainable. 
    \item $\forall \balpha \in \Delta_r$,  it holds that $\langle\balpha,\bbz\rangle \ge  \val_J(w_{\balpha})$, where $w_{\balpha} = \sum_{i=1}^r \alpha_i w_i$. 
\end{enumerate}
\end{proposition}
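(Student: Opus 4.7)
The plan is to prove the two implications separately. The forward direction $(1) \Rightarrow (2)$ is essentially a linearity argument combined with von Neumann's minimax theorem, while the backward direction $(2) \Rightarrow (1)$ follows by a direct reduction to Lemma~\ref{lem:proving_duality}, which has already been proved just above.

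For $(1) \Rightarrow (2)$, I would fix an arbitrary $\balpha \in \Delta_r$ and argue that for every $\bq \in \Delta_J$, the $J$-dice-attainability of $(\costVec, \bbz)$ yields some $\bp = \bp(\bq) \in \Delta_\Y$ with $w_i(\bp,\bq) \le z_i$ for all $i$. Taking the $\balpha$-weighted sum gives
\[
w_\balpha(\bp,\bq) \;=\; \sum_{i=1}^r \alpha_i w_i(\bp,\bq) \;\le\; \langle \balpha,\bbz\rangle.
\]
Hence $\min_{\bp \in \Delta_\Y} w_\balpha(\bp,\bq) \le \langle \balpha,\bbz\rangle$ for every $\bq \in \Delta_J$; taking the maximum over $\bq \in \Delta_J$ and invoking von Neumann's minimax theorem to swap min and max gives $V_J(w_\balpha) = \min_{\bp \in \Delta_\Y} \max_{\bq \in \Delta_J} w_\balpha(\bp,\bq) \le \langle \balpha,\bbz\rangle$, which is the desired inequality.

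For $(2) \Rightarrow (1)$, I would fix any $\bq \in \Delta_J$ and aim to construct $\bp^* \in \Delta_\Y$ such that $w_i(\bp^*,\bq) \le z_i$ simultaneously for all $i$. For every $\balpha \in \Delta_r$, since $\bq \in \Delta_J$ is a feasible strategy for the maximizing player in the game defining $V_J$,
\[
\min_{\bp \in \Delta_\Y} w_\balpha(\bp,\bq) \;\le\; \max_{\bq' \in \Delta_J} \min_{\bp \in \Delta_\Y} w_\balpha(\bp,\bq') \;=\; V_J(w_\balpha) \;\le\; \langle \balpha, \bbz\rangle,
\]
where the last inequality uses the hypothesis. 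Thus for each $\balpha$ there exists $\bp \in \Delta_\Y$ with $\sum_i \alpha_i w_i(\bp,\bq) \le \langle \balpha,\bbz\rangle$. Applying Lemma~\ref{lem:proving_duality} with $\H = \Y$ (so $\Delta_\H = \Delta_\Y$) directly produces the desired $\bp^* \in \Delta_\Y$. Since $\bq \in \Delta_J$ was arbitrary, this establishes $J$-dice-attainability.

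The only step that is nontrivial is the existence of the simultaneous $\bp^*$ in the backward direction, and this is precisely the content of Lemma~\ref{lem:proving_duality}, so the main obstacle has already been handled. The remaining care needed here is only bookkeeping: ensuring minimax is applied with the correct order of players (the $\max$ is over $\Delta_J$ while the $\min$ is over all of $\Delta_\Y$), and observing that the hypothesis in the lemma is applied pointwise for each fixed $\bq$, not uniformly over $\bq$.
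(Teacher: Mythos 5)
Your proposal is correct and follows essentially the same route as the paper: the backward direction $(2)\Rightarrow(1)$ is reduced pointwise in $\bq$ to Lemma~\ref{lem:proving_duality}, and von Neumann's minimax theorem is used to reorder the $\min$ and $\max$ in $\val_J$. The one small difference is that the paper dismisses $(1)\Rightarrow(2)$ as immediate ``by definitions,'' whereas you spell out the weighted-sum-plus-minimax argument explicitly; this is a welcome clarification since that direction does genuinely require the minimax equality (the easy inequality $\max\min\le\min\max$ points the wrong way), but it is not a different approach.
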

Next, we prove  \Cref{prop:J_dice_duality}, which also proves \Cref{prop:intro_coin_duality} as a special case. 
\begin{proof}[Proof of \Cref{prop:J_dice_duality}]
It is easy to see that $1 \implies 2$ holds, by definitions. We prove $2 \implies 1$.
    By assumption,  for every $\balpha \in \Delta_r$  it holds that,
    \begin{equation}
        \val_J(w_{\balpha}) =  \max_{\bq \in {\Delta_J}} \min_{\bp \in \Delta_{\Y}} \sum_{i=1}^r  \alpha_i \cdot w_i(\bp, \bq) \le  \langle \balpha, \bbz\rangle.
    \end{equation}
    In particular, this implies that for any $\bq \in \Delta_J$ and any $\balpha \in \Delta_r$, there exists $\bp \in \Delta_\Y$ such that 
    $$\sum_{i=1}^r \alpha_i \cdot w_i(\bp, \bq) \le  \langle \balpha, \bbz\rangle.$$ Thus, by \Cref{lem:proving_duality} we get that for any $\bq \in \Delta_J$, there exists $\bp^* \in \Delta_\Y$ such that for every $i \in [r]$,
    $$ w_i(\bp^*, \bq) \le  z_i.$$
    Thus, we get that $(\costVec, \bbz)$ is $J$-dice-attainable. 
\end{proof}

We shift our focus to general learning algorithms for a class $\F$. First, recall that a multi-objective learner is, by definition, also a cost-sensitive learner. The existence of a  $(\costVec, \bbz)$-learner implies the existence of a $(w_i,z_i)$-learner for each $i=1,\ldots,r$. Equivalently, the existence of a $(\costVec, \bbz)$-learner implies the existence of a $(w_{\balpha},z_{\balpha})$-learner for $\balpha\in\{\bbe_1,\ldots,\bbe_r\}$ (the canonical basis of $\RR^r)$ where $w_{\balpha} = \sum_{i=1}^r \alpha_i w_i$ and $z_{\balpha} = \langle\balpha,\bbz\rangle$. \\

\Cref{thm:intro_general_duality} shows the other direction holds, i.e., that the existence of a $(w_{\balpha},z_{\balpha})$-learner for all possible $\balpha$ implies the existence of a $(\costVec, \bbz)$-learner.

\begin{algorithm}[h]
\caption{Boosting $(w_{\balpha}, z_{\balpha})$-learners} 
\label{alg:CS_to_MO}
\begin{algorithmic}[1]
\REQUIRE Sample $S = (x_j,y_j)_{j=1}^m$; $(w_{\balpha}, z_{\balpha})$-learners $\A_{\balpha}$ for every $\balpha \in \Delta_r$; parameters $T, \eta, \widehat m$
\vskip3pt
\STATE Initialize: $a_1(i) = 1$  for all $i=1,...,r$, set $U$ to be the uniform distribution over $[m]$.
\STATE Define $M(h, i) =  \ind{\frac{1}{m}\sum_{j=1}^m \ w_i(h(x_j),y_j) > z_i}$ for all $i=1,...,r$ and $h: \X \rightarrow \Y$. 
\FOR{$t = 1, \ldots, T$}
\STATE Compute the distribution $\balpha_t \triangleq \frac{\mathbf{a}_t}{\sum_{i}a_t(i)}$ over $[r]$.
\STATE Draw a set $S_t$ of $\widehat m$ labeled examples i.i.d.\ from $U$ and obtain $h_t = \A_{\balpha_t}(S_t)$.
\STATE For every $i=1,\ldots,r$ let:
\[
a_{t+1}(i) \triangleq a_t(i) \cdot e^{\eta \cdot M(h_t,i)} \;.
\]
\ENDFOR
\RETURN The set of weak predictors $h_1,...,h_T$.
\end{algorithmic}
\end{algorithm}

\begin{proof}[Proof of \Cref{thm:intro_general_duality}]
    It is easy to see that $1 \implies 2$ holds, by definitions. We prove $2 \implies 1$.   
For any $f\in \F$, and distribution $\D$ over $\X$, let $S$ be a set of $m$ examples sampled i.i.d. from $\D$ and labeled by $f$, where $m$ to be determined later. Let $\epsilon,\delta > 0$. 
Then, applying \Cref{alg:CS_to_MO} over $S$ with access to $\A_{\balpha}$ learners, and 
with $T = {100r^2\ln(r)}$, $\eta = \sqrt{\frac{2\ln(r)}{T}}$, $\widehat m = m_0(\frac{1}{10r}, \frac{1}{20r T})$ (where $m_0$ is the sample complexity of $\A_{\balpha}$) yields the following. Fix any $i \in [r]$.  
Given that $\max_{t,i} M(h_t,i) \le 1$ and that $\eta \le 1$, the standard analysis of Hedge \citep{Freund97decision} shows that:
\begin{equation}
    \sum_{t=1}^T M(h_t, i)  \le \frac{\ln(r)}{\eta} + \frac{\eta}{2} T +
    \sum_{t=1}^T \E_{i \sim \balpha_t}\Bigl[M(h_t, i)\Bigr] \;.
\end{equation}
By the guarantee of $\A_{\balpha}$ and the choice of $\widehat m$, and by union bound over all $t \in [T]$ we get that with probability at least $1-\frac{1}{20r}$,  
\begin{equation}
    \frac{1}{T}\sum_{t=1}^T \frac{1}{m}\sum_{j=1}^m \ w_{\balpha_t}(h_t(x_j),y_j) \le \frac{1}{T}\sum_{t=1}^T z_{\balpha_t} + \frac{1}{10r} \;,
\end{equation}
which implies,
\begin{equation}
    \frac{1}{T}\sum_{t=1}^T \E_{i \sim \balpha_t}\Bigl[M(h_t, i) \Bigr] \le   \frac{1}{10r}.
\end{equation}
Thus, we get that with probability $1-\frac{1}{20r}$, 
\begin{equation}
    \frac{1}{T}\sum_{t=1}^T M(h_t, i)  \le \frac{\ln(r)}{T\eta} + \frac{\eta}{2} + \frac{1}{10r} \le \frac{1}{5r} \;.
\end{equation}
By then also taking a union bound over $i=1,...,r$, we have with probability at least $\frac{19}{20}$ for all $i=1,...,r$ it holds that,
\begin{equation} 
   \frac{1}{T}\sum_{t=1}^T M(h_t, i)  \le  \frac{1}{5r}.
\end{equation}
Recall that for each $t$ and $i$ we have that $M(h_t, i) \in \{0,1\}$, and so for all $t\in [T]$ but $\frac{T}{5r}$ it holds that $M(h_t, i)=0$.  We can now define the randomized hypothesis $\Tilde{h}$ for any $x\in\X$ as follows. To compute the value of $\Tilde{h}(x)$,  first we sample a hypothesis $h_t$ by sampling $t \sim U(T)$ uniformly at random, and then predict according to $h_t(x)$. We then have,
\begin{equation} 
     \Pr_{t \sim U(T)}\Big[\sum_{i=1}^r M(h_t, i) \ge 1  \Big]  \le \sum_{i=1}^r \Pr_{t \sim U(T)}\Big[M(h_t, i) = 1  \Big]  \le \frac{1}{5}. 
\end{equation}
Then, by the definition of $M(h,i)$, we have that with probability at least $\frac{19}{20}$,
\begin{equation} 
    \Pr_{\Tilde{h}}\Big[\forall i = 1,...,r,  \ \ \frac{1}{m}\sum_{j=1}^m \ w_i(\Tilde{h}(x_j),y_j) \le z_i \Big] \ge \frac{4}{5}.
\end{equation}
Finally, we apply standard compression-based generalization arguments (\cite{shalev2014understanding}, Theorem 30.2) 
where the sample size is $m = \tilde{O}(\frac{\kappa }{\epsilon^2})$ and where the compression size is $\kappa=\widehat m \cdot T$, we get that with probability at least $1-\delta$ over the sample of $S \sim \D^m$ it holds that:
\begin{equation}\label{eq:boosted_CS_to_MO}
    \Pr_{\tilde{h}}\Big[\forall i = 1,...,r,  \ \ L_{\D}^{w_i}(\tilde{h}) \le z_i +\epsilon \Big] > 1/2.
\end{equation}
Thus, the randomized hypothesis $\Tilde{h}$ satisfies the desired guarantees with a constant probability. 
Overall we get that the procedure described above is a
$(\costVec,\bbz)$-learner for $\F$ in expectation.  \\

Lastly, we remark that instead of a randomized learner whose guarantees hold in expectation, one can de-randomize choosing a single $h_t$ uniformly at random as above. Moreover, the confidence parameter of $1/2$ can easily be made arbitrarily large using standard confidence-boosting techniques. That is, by first splitting the input sample into approximately $q=O(\log(r/\delta))$ non-intersecting equal parts to learn roughly $q$ independent classifiers, each satisfying Equation \eqref{eq:boosted_CS_to_MO}. By the choice of $q$, with $1-\delta/2$ probability, at least one of these classifiers will the satisfy the $1/2$-probability condition given in  Equation \eqref{eq:boosted_CS_to_MO}. 
Then, by Chernoff’s inequality, for each of these classifiers the performance as tested over another independent test-set is close to that over the population distribution. Then, by choosing the one with best empirical errors over the test-set, we get the desired result. 
\end{proof}

\section{Multiclass Classification}\label{sec:multiclass}

In contrast to binary classification, boosting in the multiclass setting does not exhibit a clean dichotomy between ``trivial'' guarantees and ``fully boostable'' guarantees. A line of works \citep{Mukherjee2013, brukhim2021multiclass, brukhim2023improper, Brukhim23simple} has studied multiclass boosting when $\cost$ is the standard 0-1 loss, and has shown a certain \emph{multichotomy} determined by $k-1$ thresholds, $\frac{1}{2}, \frac{2}{3}, \dots, \frac{k-1}{k}$, that replace the single boostability threshold $\nicefrac{1}{2}$ of the binary case.
For each such threshold, any learner achieving a loss below it can be  ``boosted'' to a predictor that rules out \emph{some} incorrect labels, albeit not all. Thus, for every example $x$ the predictor returns a subset of candidate labels, also called a \emph{list}. A learner that beats a threshold $\frac{\ell-1}{\ell}$ above in fact yields a list predictor with lists of size $\ell-1$, and it can be shown that the converse is also true. In this section we generalize these kind of results to the case of arbitrary costs functions, proving that these list boostability and list-versus-loss equivalences hold more broadly.

We start in \Cref{sub:multi_scalar_UB} by describing an algorithm that turns a $(\cost,z)$-learner into a list learner, proving the first part of \Cref{thm:main_multiclass}.
Next, \Cref{subsec:multiclass-list-size} proves some ``list boostability'' results that focus on the list length, based on the framework of {\it List PAC learning} \citep{brukhim2022characterization, charikar2022characterization}.
We turn to multi-objective losses in \Cref{sub:multi_MO}, proving the first item of \Cref{thm:multiclass_MO_boost}.
Finally, \Cref{subsec:multiclass:lower_bounds} concludes by giving the lower bounds of both \Cref{thm:main_multiclass} and \Cref{thm:multiclass_MO_boost}.

\subsection{Cost-sensitive boosting}\label{sub:multi_scalar_UB}
As said, in multiclass classification one cannot in general boost a learner into one that predicts the correct label. However, as prior work has shown, one can boost a learner into a \emph{list learner}. This is a learner whose output hypothesis maps each $x \in \X$ to a set of labels, rather than to one single label. Formally, a \emph{list function} is any map $\mu : \X \to 2^\Y$. A list learner is then a learner that outputs a list function $\mu$. The performance of such a list function $\mu$ is typically measured in two ways. First, one wants $\mu(x)$ to contain the true label $f(x)$ with good probability. Second, one wants $\mu(x)$ to somehow be ``close'' to $f(x)$; for instance, in the case of standard 0-1 cost, this is measured by the length $|\mu(x)|$ of $\mu(x)$. All these guarantees are meant, as usual, over the target distribution $\D$.
One can then ask the following question: given a $(\cost,z)$-learner, how good a list function $\mu$ can one construct?
In order to answer this question in the cost-sensitive setting, we need to generalize list length to a more nuanced measure---one that is based once again on the values of games.
\begin{definition}[$(\cost,z)$-boundedness]\label{def:wz_list} %
Let $\cost: \Y^2 \rightarrow [0,1]$ be a cost function and let $z \ge 0$. A list function $\mu : \X \to 2^\Y$ is \emph{$(\cost,z)$-bounded} if, for every $x \in \X$, the list $\mu(x)$ satisfies $\val_{\!\mu(x)}(\cost) \le z$.
\end{definition}
\begin{definition}[\smash{$(\cost, z)$-list learner}]\label{def:z_list}
Let $\cost: \Y^2 \rightarrow [0,1]$ be a cost function and let $z \ge 0$. An algorithm $\L$ is a \emph{$(\cost, z)$-list learner} for a class $\F \subseteq \Y^\X$ if there is a function $m_\L: (0,1)^2 \rightarrow \mathbb{N}$ such that for every $f \in \F$, every distribution $\D$ over $\X$, and every $\epsilon,\delta \in (0,1)$ the following claim holds. If $S$ is a sample of $m_\L(\epsilon, \delta)$  examples drawn i.i.d.\ from $\D$ and labeled by $f$, then $\L(S)$ returns a $(\cost,z)$-bounded list function $\mu$ such that $\Pr_{x \sim \D}\left[f(x) \notin \mu(x)\right]\leq \epsilon$ with probability at least $1-\delta$.
\end{definition}
\noindent 
The definitions above mirror closely the standard ones used in list learning, where the quality of a list $\mu(x)$ is measured by its length $|\mu(x)|$. The crucial difference is that, now, we are instead using the value of the restricted game $\val_{\!\mu(x)}(\cost)$. To get some intuition why this is the right choice, consider again the case where $\cost$ is the standard 0-1 cost. In that case it is well known that $V_J(\cost)=1-\nicefrac{1}{|J|}$ for every (nonempty) $J \subseteq \Y$.
Then, by \Cref{def:wz_list} and \Cref{def:z_list} a $(\cost,z)$-list learner is one that outputs lists of length at most $\ell = \min(k, \lfloor\frac{1}{1-z}\rfloor)$. That is, for the 0-1 cost \Cref{def:z_list} yields the standard notion of $\ell$-list learner used in prior work on multiclass boosting.
Now, for the special case of the 0-1 cost, \cite{Brukhim23simple} showed that any $(\cost,z)$-learner is equivalent to an $\ell$-list learner, with $\ell$ as defined above, in the sense that each one of them can be converted into the other. We show that this remains true for \emph{every} cost function $\cost$, if one replaces $|\mu(x)|$ with $\val_{\!\mu(x)}(\cost)$, and therefore the notion of $\ell$-list learner with the notion of $(\cost,z)$-list learner. That is, we prove that $(\cost,z)$-learners and $(\cost,z)$-list learners are equivalent. Formally, we prove:
\begin{theorem}\label{thm:multi_wz-list_equivalence}
Let $\Y=[k]$, let $\cost: \Y^2 \rightarrow [0,1]$ be a cost function, let $\F\subseteq \Y^\X$, and let $z \ge 0$. Then:
\begin{itemize}\itemsep0pt
    \item Every $(\cost,z)$-learner for $\F$ can be converted into a $(\cost,z)$-list learner for $\F$.
    \item Every $(\cost,z)$-list learner for $\F$ can be converted into a $(\cost,z)$-learner for $\F$.
\end{itemize}
\end{theorem}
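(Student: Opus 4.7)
The plan is to prove the two implications separately, with the forward (boosting) direction being the substantive one.

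The backward direction---turning a $(\cost,z)$-list learner into a $(\cost,z)$-learner---is essentially by definition. Given the list learner $\L$, on input $S$ run $\L(S)$ to obtain a $(\cost,z)$-bounded list function $\mu$ with $\Pr_{x \sim \D}[f(x) \notin \mu(x)] \le \epsilon$. For each $x$, the condition $\val_{\mu(x)}(\cost) \le z$ (cf.\ \Cref{eq:def_J_game}) is a finite zero-sum game; its optimal minimizer $\bp_x \in \Delta_\Y$ satisfies $\E_{i \sim \bp_x}[\cost(i,j)] \le z$ for every $j \in \mu(x)$. Output the randomized predictor $h(x) \sim \bp_x$, drawn independently across inputs. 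Since $\cost \in [0,1]$, conditioning on whether $f(x) \in \mu(x)$ gives
\[
 L_\D^\cost(h) \,\le\, z \cdot \Pr_x[f(x) \in \mu(x)] + \Pr_x[f(x) \notin \mu(x)] \,\le\, z + \epsilon,
\]
so $h$ is a $(\cost,z)$-learner.

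For the forward direction I would adapt \Cref{alg:binary_boost} to the multiclass list-output setting. Given a $(\cost,z)$-learner $\A$ and sample $S = (x_i,y_i)_{i=1}^m$, run $T$ rounds of the same Hedge-style reweighting: at round $t$ form $\D_t$ from the weights $D_t$, invoke $\A$ on an i.i.d.\ subsample of size $\widehat m$ from $\D_t$ to obtain $h_t$, and update $D_{t+1}(i) \propto D_t(i)\exp\bigl(\eta \cdot \cost(h_t(x_i),y_i)\bigr)$. Exactly as in the proof of \Cref{thm:binary_scalar_boosting}, pointwise Hedge regret combined with a union bound over the $T$ calls to $\A$ yields, for appropriate $T = \Theta(\log(m)/\zeta^2)$, $\eta = \Theta(\sqrt{\log(m)/T})$, and $\widehat m = m_0(\zeta/3, \delta/T)$, that with probability $\ge 1-\delta$,
\[
 \tfrac{1}{T}\sum_{t=1}^T \cost(h_t(x_i),y_i) \,\le\, z + \zeta \qquad \forall i \in [m],
\]
where $\zeta>0$ is an accuracy parameter we are free to choose. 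Letting $F(x,j) \triangleq \tfrac{1}{T}\sum_t \ind{h_t(x)=j}$ so that $F(x,\cdot) \in \Delta_\Y$ and $\E_{i \sim F(x,\cdot)}[\cost(i,j)] = \tfrac{1}{T}\sum_t \cost(h_t(x),j)$, I define the list
\[
 \mu_S(x) \,\triangleq\, \bigl\{\, j \in \Y \;:\; \E_{i \sim F(x,\cdot)}[\cost(i,j)] \le z + \zeta \,\bigr\}.
\]
By the Hedge bound, $y_i \in \mu_S(x_i)$ for every training example, and $F(x,\cdot)$ itself witnesses $\val_{\mu_S(x)}(\cost) \le z+\zeta$. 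A compression-scheme generalization argument identical to \Cref{lem:bin_scalar_boost_generalization}---with compression size $T \widehat m$---lifts the training consistency to $\Pr_{x \sim \D}[f(x) \notin \mu_S(x)] \le \epsilon$ with high probability.

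The main obstacle is reconciling the strict inequality $\val_{\mu(x)}(\cost) \le z$ demanded by $(\cost,z)$-boundedness with the unavoidable $\zeta$-slack produced by boosting. I would resolve this using the key structural fact that $\val_J(\cost)$, as $J$ ranges over subsets of $\Y$, takes values in the finite set of thresholds $\{v_1,\ldots,v_\tau\}$ from \Cref{eq:multiclass_critical_thresholds}. Let $\threshIndex$ be the largest index with $v_\threshIndex \le z$ and set $g \triangleq v_{\threshIndex+1} - z > 0$ (with the convention $v_{\tau+1} = +\infty$). Choosing $\zeta < g$ forces $\val_{\mu_S(x)}(\cost) \le z + \zeta < v_{\threshIndex+1}$; since $\val_{\mu_S(x)}(\cost) \in \{v_1,\ldots,v_\tau\}$, this upgrades the approximate bound to the exact bound $\val_{\mu_S(x)}(\cost) \le v_\threshIndex \le z$. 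Hence $\mu_S$ is genuinely $(\cost,z)$-bounded, and $\A$ has been converted into a $(\cost,z)$-list learner. This discreteness trick is the one new ingredient beyond the binary template; the rest of the argument is a faithful multiclass port of \Cref{subsec:binary:scalar}.
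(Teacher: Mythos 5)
Your proposal is correct and follows essentially the same route as the paper's proof: the forward direction is \Cref{alg:multi_weak_to_list_boost} (a multiclass port of \Cref{alg:binary_boost} with list output) analysed via \Cref{lemma:wl_to_list} and \Cref{lem:multi_scalar_boost_generalization}, then sharpened by precisely the discreteness argument you identify --- the paper packages $g = v_{n+1}-z$ as the margin $\gamma$ of \Cref{def:multi_margin} and sets $\sigma = \frac23\gamma$ in \Cref{thm:wl_to_list}; the backward direction is \Cref{lemma:list_to_wl}, using the same minimax strategy $\bp_{\mu(x)}$ and the same conditioning on $f(x)\in\mu(x)$. One small caveat: the compression step is not literally \Cref{lem:bin_scalar_boost_generalization}, because the quantity being bounded is now $\Pr_{x\sim\D}[f(x)\notin\mu_S(x)]$ for a set-valued predictor; you need the list-valued version of the sample-compression generalization bound (\citet[Theorem~6]{Brukhim23simple}), which the paper invokes for exactly this reason, though the structure of the argument is otherwise unchanged.
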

\noindent \Cref{thm:multi_wz-list_equivalence} is the main technical result of the present sub-section, and implies easily the first item of \Cref{thm:main_multiclass}. And again, the above-mentioned result of \cite{Brukhim23simple} is a special case of \Cref{thm:multi_wz-list_equivalence}, obtained for $\cost$ being the standard 0-1 cost.  
Moreover, one can easily show that all these results do not hold if one  uses $|\mu(x)|$ in place of $\val_{\!\mu(x)}(\cost)$.
Our game-theoretical perspective therefore seems the right point of view for characterizing multiclass boostability. 
Let us see how the first item of \Cref{thm:main_multiclass} follows from \Cref{thm:multi_wz-list_equivalence}. Recall the thresholds defined in \Cref{eq:multiclass_critical_thresholds}:
\begin{equation}%
   0 = v_{1}(\cost) < \ldots < v_{\tau}(\cost) = V(\cost),
\end{equation}
where the first equalities hold because $v_J(\cost) = 0$ for any singleton $J$ and, by convention, for the empty set. Given $z \ge 0$, let $n=n(z)$ be the largest integer such that $v_n(\cost) \le z$.
Now, suppose we have a $(\cost,z)$-learner $\A$ for some $\F$. By the first item of \Cref{thm:multi_wz-list_equivalence}, there exists a $(\cost,z)$-list learner $\L$ for $\F$, too. However, by definition of $n$ every $J \subseteq \Y$ with $\val_J(\cost) \le z$ satisfies $\val_J(\cost) \le v_n$. Therefore, $\L$ actually returns list functions that are $(\cost,v_n+\sigma)$-bounded, for $\sigma>0$ arbitrarily small. That is, $\L$ is actually a $(\cost,v_n)$-list learner for $\F$. By the second item of \Cref{thm:multi_wz-list_equivalence}, then, there exists a $(\cost,v_n)$-learner for $\F$.

The rest of the subsection is therefore devoted to prove \Cref{thm:multi_wz-list_equivalence}: the first item in \Cref{sub:wz_to_wzlist}, and the second one in \Cref{sub:wzlist_to_wz}.

\subsubsection{Turning a $(\cost,z)$-learner into a $(\cost,z)$-list learner}\label{sub:wz_to_wzlist}
We prove the first item of \Cref{thm:multi_wz-list_equivalence} by giving an algorithm, \Cref{alg:multi_weak_to_list_boost}.
The algorithm is very similar to the one for the binary case. And, like for that case, we define a certain \emph{margin} that the learner has, in order to bound the number of ``boosting'' rounds of the algorithm. For the multiclass case, however, the definition is in general slightly different.
\begin{definition}[Margin]\label{def:multi_margin}
    Let $\cost: \Y^2 \to [0,1]$ be a cost function and let $z \ge 0$.
    The margin of a $(\cost,z)$-learner is $\gamma = \min\{\val_J(\cost) - z : J \subseteq \Y, \val_J(\cost) > z\}$, or $\gamma = 0$ whenever the set is empty.
\end{definition}
\noindent Note that $\gamma=0$ only when $z \ge \val(\cost)$. In that case, the first item of \Cref{thm:multi_wz-list_equivalence} holds trivially by just considering the trivial list learner that outputs the constant list function $\mu(x)=\Y$. Hence, in what follows we consider always the case $\gamma > 0$.
We shall prove:
\begin{theorem}[\smash{Weak $\Rightarrow$ List Learning}]\label{thm:wl_to_list}
Let $\F \subseteq \Y^\X$ and let $\cost: \Y^2 \to [0,1]$ be a cost function.
If $\A$ is a $(\cost,z)$-learner for $\F$ with margin $\gamma > 0$, then \Cref{alg:multi_weak_to_list_boost}, with the choice of parameters of \Cref{lemma:wl_to_list} and $\sigma=\frac23\gamma$, is a $(\cost,z)$-list learner for $\F$.
Under the same assumptions, \Cref{alg:multi_weak_to_list_boost} makes $T = O\bigl(\frac{\ln(m)}{\gamma^2}\bigr)$ oracle calls to $\A$, where $m$ is the size of the input sample, and has sample complexity $m(\eps,\delta) = \widetilde{O}\Bigl(m_0\bigl(\frac{\gamma}{3},\frac{\delta}{2T}\bigr) \cdot \frac{\ln(1/\delta)}{\eps \cdot \gamma^2}\Bigr)$.
\end{theorem}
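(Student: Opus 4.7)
}
The plan is to mimic the structure of the binary proof (\Cref{thm:binary_scalar_boosting} together with \Cref{lem:bin_scalar_boost_generalization}), replacing the label-by-label comparison at the end with a list construction driven by the empirical distribution of the weak predictions. Concretely, \Cref{alg:multi_weak_to_list_boost} will run a Hedge-style reweighting over the sample $S=(x_i,y_i)_{i=1}^m$ with the cost-sensitive update $D_{t+1}(i)\propto D_t(i)\cdot e^{\eta\cdot\cost(h_t(x_i),y_i)}$, call the $(\cost,z)$-learner $\A$ on i.i.d.\ subsamples from $D_t$ to obtain $h_1,\dots,h_T$, and then, for every $x\in\X$, output the list
\[
\mu(x) \;=\; \bigl\{ y\in\Y \,:\, \cost(\bp_x,y) \le z+\sigma\bigr\},\qquad
\bp_x \;=\; \frac{1}{T}\sum_{t=1}^T \delta_{h_t(x)}\,,
\]
for a threshold $\sigma=\tfrac23\gamma$.

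The first step of the analysis is to show sample-consistency, that is, $y_i\in\mu(x_i)$ for all $i$. By exactly the same Hedge calculation as in the proof of \Cref{thm:binary_scalar_boosting} (the update uses $\cost\in[0,1]$ and $\eta\le1$, so the standard regret bound applies verbatim), together with the $(\cost,z)$-guarantee of $\A$ applied on each round and a union bound over $t=1,\dots,T$ at confidence $\delta/T$, we obtain with probability at least $1-\delta$ that
\[
\cost(\bp_{x_i},y_i) \;=\; \frac{1}{T}\sum_{t=1}^T \cost(h_t(x_i),y_i) \;\le\; z + \tfrac{\gamma}{3}
\]
for every $i\in[m]$. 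Choosing $T=\lceil 18\ln(m)/\gamma^2\rceil$ and $\eta=\sqrt{2\ln(m)/T}$ as in the binary case, and $\widehat m=m_0(\gamma/3,\delta/T)$, makes the bound go through. Since $z+\gamma/3\le z+\sigma$, this immediately gives $y_i\in\mu(x_i)$.

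The second step, which I expect to be the conceptual heart of the argument, is to show that $\mu$ is $(\cost,z)$-bounded, i.e.\ $\val_{\mu(x)}(\cost)\le z$ for every $x\in\X$. By construction of $\mu(x)$, the distribution $\bp_x$ certifies
\[
\val_{\mu(x)}(\cost) \;=\; \min_{\bp\in\Delta_\Y} \max_{y\in\mu(x)} \cost(\bp,y) \;\le\; \max_{y\in\mu(x)} \cost(\bp_x,y) \;\le\; z+\sigma \;<\; z+\gamma\,.
\]
Here is where the definition of margin (\Cref{def:multi_margin}) enters: by definition, no subset $J\subseteq\Y$ can satisfy $\val_J(\cost)\in(z,z+\gamma)$, so the strict inequality $\val_{\mu(x)}(\cost)<z+\gamma$ forces $\val_{\mu(x)}(\cost)\le z$, as desired. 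This ``snap-to-threshold'' argument is precisely the multiclass analogue of the binary fact that $\val(\cost)=\frac{w_-w_+}{w_-+w_+}$ admits only two regimes for the weighted fractions $F(x,y)$.

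Finally, generalization is obtained by the same compression argument used in \Cref{lem:bin_scalar_boost_generalization}: the output list function $\mu$ is determined by the $T$ subsamples of size $\widehat m$ fed to $\A$, so it admits a compression scheme of size $\kappa=T\cdot\widehat m$. Applying the sample-compression generalization bound (e.g.\ Theorem~2.8 of \citealp{schapire2012boosting}) to the $0/1$-valued event $\{y\notin\mu(x)\}$, with $\delta/2$ in place of $\delta$, and combining with the sample-consistency bound above via a union bound, yields $\Pr_{x\sim\D}[f(x)\notin\mu(x)]\le\eps$ whenever $m\ge \frac{\ln(2/\delta)}{\eps}+\kappa\bigl(1+\frac{\ln m}{\eps}\bigr)$. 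Substituting $T=O(\ln(m)/\gamma^2)$ and $\widehat m=m_0(\gamma/3,\delta/(2T))$ gives the claimed sample complexity $\widetilde O\bigl(m_0(\gamma/3,\delta/(2T))\cdot \ln(1/\delta)/(\eps\gamma^2)\bigr)$. The main subtlety in the whole argument is really the margin-based snap in the second step; everything else is a direct lift of the binary proof.
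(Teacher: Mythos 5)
Your proposal is correct and matches the paper's approach closely: the Hedge analysis plus union bound over rounds to get per-example consistency, the certificate $\bp_x$ witnessing $\val_{\mu(x)}(\cost)\le z+\sigma$, the margin-based ``snap'' from $z+\sigma<z+\gamma$ down to $z$, and a compression-scheme generalization argument. The paper factors this into a lemma proving sample-consistency and $(\cost,z+\sigma)$-boundedness, a separate generalization lemma, and a short theorem applying the margin snap, but the ideas and computations are the same. One minor difference: for the generalization step the paper cites a list-specific compression bound (Theorem~6 of \citet{Brukhim23simple}, adapted to variable-size $(\cost,z)$-bounded lists), whereas you invoke the standard Schapire--Freund compression bound applied directly to the $0/1$ event $\{f(x)\notin\mu(x)\}$; this is also valid, since that argument only uses that the output is a deterministic function of a subsample of size $\kappa=\widehat m\cdot T$ and that the per-point loss is $\{0,1\}$-valued and vanishes on the training sample.
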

\noindent\Cref{thm:wl_to_list} follows immediately from two technical results stated below.
The first result, \Cref{lemma:wl_to_list}, proves that \Cref{alg:multi_weak_to_list_boost} can turn a $(\cost,z)$-learner $\A$ into a list function $\mu_S$ that is $(\cost,z+\sigma)$-bounded and that with probability $1-\delta$ is consistent with $S$ (i.e., $y_i \in \mu(x_i)$ for every $(x_i,y_i) \in S$), for $\sigma,\delta > 0$ as small as desired.
The second result, \Cref{lem:multi_scalar_boost_generalization}, shows that the list function $\mu$ has small generalization error: if the input sample $S$ of \Cref{alg:multi_weak_to_list_boost} is sufficiently large, then $\Pr_{x \sim \D }[f(x) \in \mu(x)] \ge 1-\eps$, where $\eps>0$ can be chosen as small as desired. 
 
\begin{lemma}\label{lemma:wl_to_list}
Let $\Y=[k]$, let $\F \subseteq \Y^\X$, and let $\A$ be a $(\cost, z)$-learner for $\F$ with sample complexity $m_0$.
Fix any $f \in \F$, any distribution $\D$ over $\X$, and any $\sigma,\delta \in (0,1)$. Then, given a multiset $S=\{(x_1,y_1),\ldots,(x_m,y_m)\}$ of $m$ examples formed by i.i.d.\ points $x_1,\dots,x_m \sim \D$ labeled by $f$, oracle access to $\A$, and parameters $T = \ceil*{\frac{8\ln(m)}{\sigma^2}}$, $\eta = \sqrt{\frac{2\ln(m)}{T}}$, and $\widehat m = m_0\bigl(\frac{\sigma}{2},\frac{\delta}{T}\bigr)$, \Cref{alg:multi_weak_to_list_boost} returns a list function $\mu_S$ that is $(\cost,z+\sigma)$-bounded and that, with probability at least $1-\delta$, satisfies:
\[
    y_i \in \mu_S(x_i) \qquad \forall i=1,\dots,m \;.
\]
\end{lemma}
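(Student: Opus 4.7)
The plan is to mirror the proof of the binary scalar case (\Cref{thm:binary_scalar_boosting}), replacing the final binary decision rule by a list-construction rule that exploits the game-theoretic definition of $(\cost,z)$-boundedness. The idea is to run Hedge on the examples $\{(x_i,y_i)\}_{i=1}^m$ using the per-round loss $\cost(h_t(x_i),y_i)$, invoke the $(\cost,z)$-learner $\A$ at each round on a sample drawn from the current Hedge distribution $\D_t$, and then define $\mu_S(x)$ via the empirical distribution of the obtained weak predictions on $x$.

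First, standard Hedge analysis (as in \Cref{thm:binary_scalar_boosting}) yields, for every $i \in [m]$,
\[
\frac{1}{T}\sum_{t=1}^T \cost(h_t(x_i),y_i) \leq \frac{\ln(m)}{\eta T} + \frac{\eta}{2} + \frac{1}{T}\sum_{t=1}^T \E_{j\sim \D_t}\big[\cost(h_t(x_j),y_j)\big].
\]
With $T = \lceil 8\ln(m)/\sigma^2\rceil$ and $\eta=\sqrt{2\ln(m)/T}$ the first two terms sum to at most $\sigma/2$. Each call to $\A$ uses $\widehat{m} = m_0(\sigma/2,\delta/T)$ samples from $\D_t$, so by definition of a $(\cost,z)$-learner and a union bound over $t=1,\dots,T$, with probability at least $1-\delta$ the inner expectation $\E_{j\sim\D_t}[\cost(h_t(x_j),y_j)]$ is bounded by $z+\sigma/2$ for every $t$. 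Averaging and combining yields
\[
\frac{1}{T}\sum_{t=1}^T \cost(h_t(x_i),y_i) \leq z+\sigma \qquad \forall i\in[m].
\]

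Next, I define the empirical prediction distribution $\bp_S(x)\in\Delta_\Y$ by $\bp_S(x)(\ell) = \frac{1}{T}\sum_{t=1}^T \ind{h_t(x)=\ell}$ for each $\ell\in\Y$, and set
\[
\mu_S(x) \triangleq \bigl\{y \in \Y \,:\, \cost(\bp_S(x), y) \leq z+\sigma \bigr\},
\]
where $\cost(\bp_S(x),y) = \E_{\ell\sim\bp_S(x)}[\cost(\ell,y)]$. Sample consistency is then immediate: for each $i$, $\cost(\bp_S(x_i),y_i) = \frac{1}{T}\sum_t \cost(h_t(x_i),y_i) \leq z+\sigma$, hence $y_i\in\mu_S(x_i)$.

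Finally, I verify the $(\cost,z+\sigma)$-boundedness of $\mu_S$. For any $x \in \X$, using $\bp_S(x)$ as a specific feasible distribution for the minimizing player in the game restricted to $\mu_S(x)$,
\[
\val_{\!\mu_S(x)}(\cost) \leq \max_{\bq\in\Delta_{\mu_S(x)}} \cost(\bp_S(x),\bq) = \max_{y \in \mu_S(x)} \cost(\bp_S(x),y) \leq z+\sigma,
\]
where the equality uses linearity of $\cost(\bp_S(x),\cdot)$ on the simplex and the last inequality is the very definition of $\mu_S(x)$. Thus $\mu_S$ is $(\cost,z+\sigma)$-bounded deterministically, while consistency with the labeled sample holds with probability at least $1-\delta$, as claimed. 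I do not anticipate a genuine obstacle: the only subtlety compared to the binary case is recognizing that the right generalization of the binary decision rule is the sublevel-set definition of $\mu_S(x)$, which is tailored precisely so that the witness $\bp_S(x)$ simultaneously certifies consistency (because it averages the empirical losses) and boundedness (because it majorizes the minimax value of the restricted game).
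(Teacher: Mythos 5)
Your proof is correct and follows essentially the same route as the paper's: the same Hedge regret bound with the same parameter calculation showing $\frac{\ln(m)}{\eta T}+\frac{\eta}{2}\le\frac{\sigma}{2}$, the same union bound over the $T$ calls to $\A$, and the same observation that the empirical prediction distribution $\bp_S(x)=F(x,\cdot)$ simultaneously certifies sample consistency (it equals the averaged empirical loss) and $(\cost,z+\sigma)$-boundedness (it upper-bounds the minimax value of the game restricted to $\mu_S(x)$). The only cosmetic difference is the order in which consistency and boundedness are verified.
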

\begin{proof}
    First, we prove that $\mu_S$ is $(\cost,z+\sigma)$-bounded.
    Fix any $x \in \X$. Observe that $F(x,\cdot)$ is a distribution over $\Y$; denote it by $\bp^x \in \Delta_\Y$, so that $p^x_\ell = F(x,\ell)$ for every label $\ell \in \Y$.
    Thus, by the definition of $\mu_S$, the sum involved in the condition for including a certain label $y \in \Y$ in the list $\mu_S(x)$ is
    \begin{equation}
      \sum_{\ell\in \Y} F(x,\ell) \cdot \cost(\ell,y) = \cost(\bp^x,y) \;.
    \end{equation}
    Then, observe that
    \begin{align}
        \val_{\!\mu_S(x)}(\cost) = \min_{\bp \in \Delta_\Y} \max_{\bq \in \Delta_{\mu_S(x)}} \cost(\bp,\bq) \le \max_{\bq \in \Delta_{\mu_S(x)}} \cost(\bp^x,\bq) = \max_{y \in \mu_S(x)} \cost(\bp^x,y) \le  z + \sigma \;,
    \end{align}
    where the last inequality holds by definition of $\mu_S(x)$.

    We now turn to proving that with probability at least $1-\delta$ we have $y_i \in \mu_S(x_i)$ for all $i\in[m]$. The proof is similar to that of \Cref{thm:binary_scalar_boosting}.
    For any given $i \in [m]$, the analysis of Hedge and the definitions of $\eta$ and $T$ yield:
    \begin{equation}\label{eq:multiclass_hedge_bound_finer}
        \frac{1}{T} \sum_{t=1}^T \cost(h_t(x_i),y_i) \le \frac{\sigma}{2} + \frac{1}{T} \sum_{t=1}^T \E_{j \sim \D_t}\Bigl[\cost(h_t(x_j),y_j)\Bigr] \;.
    \end{equation}
    Furthermore, since $\A$ is a $(\cost, z)$-learner for $\F$, and given the choices of $\widehat m$ and $h_t = \A(S_t)$, by a union bound over $t \in [T]$ we have that with probability at least $1 - \delta$:
    \begin{equation}\label{eq:multiclass_avg_loss_bound_finer}
        \frac{1}{T} \sum_{t=1}^T \E_{j \sim \D_t}\Bigl[\cost(h_t(x_j),y_j)\Bigr] \le z + \frac{\sigma}{2} \;.
    \end{equation}
    
    Conditioning on \Cref{eq:multiclass_avg_loss_bound_finer}, we prove that $y_i \in \mu_S(x_i)$ for every $i \in [m]$.
    Consider the function $F: \X \times \Y \to [0,1]$ defined in \Cref{alg:multi_weak_to_list_boost}.
    By \Cref{eq:multiclass_hedge_bound_finer,eq:multiclass_avg_loss_bound_finer}, we obtain for every $i \in [m]$ that
    \begin{equation}
        \sum_{\ell \in \Y} F(x_i, \ell) \cdot \cost(\ell,y_i) = \frac{1}{T} \sum_{t=1}^T \cost(h_t(x_i),y_i) \le z + \sigma \;.
    \end{equation}
    which in turn implies that $y_i \in \mu_S(x_i)$ by construction of $\mu_S$. This concludes the proof.
\end{proof}

\begin{algorithm}[t]
\caption{Boosting a $(\cost,z)$-learner to an $(\cost,z+\sigma)$-list learner} 
\label{alg:multi_weak_to_list_boost}
\begin{algorithmic}[1]
\REQUIRE Sample $S = (x_i,y_i)_{i=1}^m$;  $(\cost, z)$-learner $\A$; parameters $T, \eta, \widehat m, \sigma$
\vskip3pt
\STATE Initialize: $D_1(i) = 1$  for all $i=1,...,m$.
\FOR{$t = 1, \ldots, T$}
\STATE Compute the distribution $\D_t \triangleq \frac{D_t}{\sum_{i}D_t(i)}$ over $S$
\STATE Draw a set $S_t$ of $\widehat m$ labeled examples i.i.d.\ from $\D_t$ and obtain $h_t = \A(S_t)$.
\STATE For every $i=1,\ldots,m$ let:
\[
D_{t+1}(i) \triangleq D_t(i) \cdot e^{\eta \cdot \cost(h_t(x_i),y_i)} \;.
\]
\ENDFOR
\STATE Let $F: \X\times\Y \to [0,1]$ be such that $F(x,y) \triangleq \frac{1}{T}\sum_{t=1}^T \ind{h_t(x) = y}$ for all $x \in \X$, $y \in \Y$.
\RETURN $\mu_S : \X \to 2^\Y$ such that, for all $x \in \X$, 
\[
\mu_S(x) \triangleq \Biggl\{y \in \Y : \sum_{\ell\in \Y} F(x,\ell) \cdot \cost(\ell,y) \le z + \sigma \Biggr\} \;.
\]
\end{algorithmic}
\end{algorithm}

In a similar way as \Cref{lem:bin_scalar_boost_generalization} does for the binary case, we show that the list function $\mu_S$ returned by \Cref{alg:multi_weak_to_list_boost} has small generalization error provided that the sample $S$ is sufficiently large.
The main idea to prove generalization is again via a sample compression scheme, but relying instead on a novel result by \citet{Brukhim23simple} for multiclass classification; note that, while their result from \citet{Brukhim23simple} is stated in terms of list functions whose outputs are $s$-uples in $\Y^s$ for some $s < k$, their same proof applies to the $(\cost,z)$-bounded list functions output by our \Cref{alg:multi_weak_to_list_boost}.

\begin{lemma}\label{lem:multi_scalar_boost_generalization}
        Assume the setting of \Cref{lemma:wl_to_list}.
    For any $\eps,\delta \in (0,1)$, if the size $m$ of the sample given to \Cref{alg:multi_weak_to_list_boost} satisfies
    \[
        m \ge \frac{\ln(2/\delta)}{\eps} + m_0\left(\frac{\sigma}{2}, \frac{\delta}{2T}\right) \cdot T \cdot \left(1 + \frac{\ln(m)}{\eps}\right) \;,
    \]
    then the output $\mu_S$ of \Cref{alg:multi_weak_to_list_boost} satisfies $\Pr_{x \sim \D}[f(x) \notin \mu_S(x)] \le \eps$ with probability at least $1-\delta$.
    Therefore, \Cref{alg:multi_weak_to_list_boost} is a $(\cost, z+\sigma)$-list learner for $\F$ with sample complexity $m(\eps,\delta) = \widetilde{O}\Bigl(m_0\bigl(\frac{\sigma}{2},\frac{\delta}{2T}\bigr) \cdot \frac{\ln(1/\delta)}{\eps\cdot\sigma^2}\Bigr)$.
    \end{lemma}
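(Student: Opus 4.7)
The plan is to mirror the proof of \Cref{lem:bin_scalar_boost_generalization} from the binary case, replacing the compression-based generalization argument with its multiclass list-learning counterpart.

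First, I would invoke \Cref{lemma:wl_to_list} with confidence parameter $\delta/2$ in place of $\delta$. This yields that, with probability at least $1-\delta/2$ over the draw of $S$ and of the internal randomness of \Cref{alg:multi_weak_to_list_boost}, the returned list function $\mu_S$ is $(\cost,z+\sigma)$-bounded and consistent with the training sample, in the sense that $y_i \in \mu_S(x_i)$ for every $i \in [m]$.

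Next, I would observe that $\mu_S$ admits a sample compression scheme of size $\kappa = \widehat m \cdot T$: indeed, $\mu_S$ is fully determined by the $T$ weak hypotheses $h_1,\ldots,h_T$, each of which is in turn determined by the subsample $S_t \subseteq S$ of size $\widehat m$ on which $\A$ was invoked (plus $\A$'s internal randomness, which we can treat as shared). The key step is then to apply the compression-based generalization bound for $(\cost,z)$-bounded list functions; such a bound is provided by the analysis of \citet{Brukhim23simple}, which, although originally stated for $s$-sized lists in $\Y^s$, applies verbatim to $(\cost, z+\sigma)$-bounded list functions because the proof uses only the compression structure and the property that training consistency is preserved (namely, $y_i \in \mu_S(x_i)$ for training points). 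Applying it with confidence $\delta/2$ gives that, with probability at least $1-\delta/2$,
\[
\Pr_{x \sim \D}\bigl[f(x) \notin \mu_S(x)\bigr] \le \frac{\kappa \ln(m) + \ln(2/\delta)}{m-\kappa} \;.
\]

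From here I would do a routine calculation to check that the right-hand side is at most $\eps$ whenever
\[
    m \ge \frac{\ln(2/\delta)}{\eps} + \kappa \cdot \left(1+\frac{\ln(m)}{\eps}\right)
    = \frac{\ln(2/\delta)}{\eps} + m_0\!\left(\tfrac{\sigma}{2},\tfrac{\delta}{2T}\right) \cdot T \cdot \left(1+\frac{\ln(m)}{\eps}\right) \;,
\]
and conclude via a union bound over the two failure events that $\Pr_{x \sim \D}[f(x) \notin \mu_S(x)] \le \eps$ with probability at least $1-\delta$. Finally, substituting $T = O(\ln(m)/\sigma^2)$ and solving the implicit inequality in $m$ in the usual way (the $\ln(m)/\eps$ factor on the right only contributes polylogarithmic factors) yields the stated sample complexity $m(\eps,\delta) = \widetilde{O}\bigl(m_0(\sigma/2,\delta/(2T)) \cdot \ln(1/\delta)/(\eps \sigma^2)\bigr)$, which together with the $(\cost,z+\sigma)$-boundedness from \Cref{lemma:wl_to_list} shows that \Cref{alg:multi_weak_to_list_boost} is a $(\cost,z+\sigma)$-list learner for $\F$.

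The only non-routine step is the second one: verifying that a compression-based generalization guarantee of the required form does apply to $(\cost,z)$-bounded list functions. I expect this to go through essentially as in \citet{Brukhim23simple}, because the notion of ``training consistency'' used there ($y_i$ lies in the predicted list) is exactly the one we obtain from \Cref{lemma:wl_to_list}, and the compression argument itself is oblivious to how the list is produced.
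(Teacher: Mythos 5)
Your proposal matches the paper's proof essentially step for step: apply \Cref{lemma:wl_to_list} with $\delta/2$, build a compression scheme of size $\kappa = \widehat m \cdot T$, invoke the list-compression generalization bound from \citet{Brukhim23simple} (noting, exactly as the paper does, that their argument carries over from $s$-tuples to $(\cost,z+\sigma)$-bounded list functions), do the arithmetic on $m$, union bound, and read off the sample complexity from $T = O(\ln(m)/\sigma^2)$. Nothing to flag.
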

\begin{proof}
    By analogy with the proof of \Cref{lem:bin_scalar_boost_generalization}, we first apply \Cref{lemma:wl_to_list} with $\delta/2$ in place of $\delta$ in order to obtain an $(\cost,z+\sigma)$-bounded list function $\mu_S$ consistent with $S$ with probability at least $1-\delta/2$.
    We then apply a compression-based generalization argument.
    To do so, we remark once again that one can construct a compression scheme for $\mu_S$ of size equal to the total size of the samples on which $\A$ operates, which is equal to $\kappa = \widehat m \cdot T$. %
    The main difference with the binary case is that we rely on the generalization bound for a sample compression scheme for lists as per Theorem~6 of \citet{Brukhim23simple}, with $\delta/2$ in place of $\delta$;
    we can employ this generalization bound thanks to the consistency of $\mu_S$ with $S$ (with sufficiently large probability).
    Then, this implies that
    \begin{equation}
        \Pr_{x \sim \D}\bigl[f(x) \notin \mu_S(x)\bigr] \le \frac{\kappa \ln(m) + \ln(2/\delta)}{m - \kappa}
    \end{equation}
    holds with probability at least $1-\delta/2$.
    By similar calculations as in \Cref{eq:m_ge_m0_0,eq:m_ge_m0}, and replacing the values of $\kappa$ and $\widehat m$, the right-hand side of the above inequality can be easily show to be at most $\eps$ whenever
    \begin{equation}\label{eq:m_ge_m0_multi}
        m \ge \frac{\ln(2/\delta)}{\eps} + m_0\left(\frac{\sigma}{2}, \frac{\delta}{2T}\right) \cdot T \cdot \left(1 + \frac{\ln(m)}{\eps}\right) \;.
    \end{equation}
    A union bound concludes the proof for the first part of the claim, since it shows that \Cref{alg:multi_weak_to_list_boost} is an $(\cost,z+\sigma)$-list learner for $\F$.
    The claim on the sample complexity of \Cref{alg:multi_weak_to_list_boost} then follows by straightforward calculations and the definition of $T = O\bigl(\frac{\ln(m)}{\sigma^2}\bigr)$.
    \end{proof}

At this point, we have all the ingredients and tools to prove that we can construct a $(\cost,z)$-list learner from a $(\cost,z)$-learner. 
\begin{proof}[Proof of \Cref{thm:wl_to_list}]
    Consider \Cref{alg:multi_weak_to_list_boost} under the same assumptions of \Cref{lemma:wl_to_list}, and set $\sigma = \frac23\gamma$ as per assumption.
    By \Cref{lemma:wl_to_list} and \Cref{lem:multi_scalar_boost_generalization}, we know that \Cref{alg:multi_weak_to_list_boost} is a $(\cost,z+\frac23\gamma)$-list learner with sample complexity $m(\eps,\delta) = \widetilde{O}\bigl(m_0\bigl(\frac{\gamma}{3},\frac{\delta}{2T}\bigr) \cdot\frac{\ln(1/\delta)}{\eps\cdot\gamma^2}\bigr)$ that performs $O\bigl(\frac{\ln(m)}{\gamma^2}\bigr)$ oracle calls to the $(\cost,z)$-learner $\A$.
    Moreover, we can immediately conclude that \Cref{alg:multi_weak_to_list_boost} is also a $(\cost,z)$-list learner because $z \ge v_n$ and $z+\frac23\gamma < z + \gamma = v_{n+1}$, by definitions of $n=n(z)$ and $\gamma$.
\end{proof}

\subsubsection{Turning a $(\cost,z)$-list learner into a $(\cost,z)$-learner}\label{sub:wzlist_to_wz}

We prove that every $(\cost,z)$-list learner can be converted into a $(\cost,z)$-learner. 

\begin{lemma}[\smash{List $\Rightarrow$ Weak Learning}]\label{lemma:list_to_wl}
There exists an algorithm $\B$ that for every $\F \subseteq \Y^\X$ satisfies what follows. Let $\cost: \Y^2 \rightarrow [0,1]$ be a cost function and let $z \ge 0$. Given oracle access to a $(\cost,z)$-list learner $\L$ for $\F$ with sample complexity $m_{\L}$, algorithm $\B$ is a $(\cost, z)$-learner for $\F$ with sample complexity $m_\L$ that makes one oracle call to $\L$.
\end{lemma}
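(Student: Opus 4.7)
} The plan is to define $\B$ as the algorithm that runs $\L$ once on its input sample $S$ to obtain a list function $\mu_S$, and then returns a randomized predictor $h_S:\X \to \Y$ that, on input $x$, samples its prediction from a distribution $\bp_x \in \Delta_\Y$ achieving the minimax value $\val_{\!\mu_S(x)}(\cost)$. Concretely, by \Cref{def:wz_list} the $(\cost,z)$-boundedness of $\mu_S$ means that for every $x \in \X$,
\begin{equation}
\val_{\!\mu_S(x)}(\cost) \eq \min_{\bp \in \Delta_\Y}\, \max_{\bq \in \Delta_{\mu_S(x)}} \cost(\bp,\bq) \le z,
\end{equation}
so we may pick $\bp_x$ as any minimizer, and this guarantees $\cost(\bp_x, y) \le z$ for every single label $y \in \mu_S(x)$.

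The key step is then a simple conditional expectation argument. Fix $f \in \F$ and a distribution $\D$ on $\X$, and condition on the (high-probability) event that $\mu_S$ satisfies $\Pr_{x \sim \D}[f(x) \notin \mu_S(x)] \le \eps$. For any $x$ with $f(x) \in \mu_S(x)$, the randomized prediction $h_S(x) \sim \bp_x$ has expected cost $\E[\cost(h_S(x), f(x))] \le z$ by the choice of $\bp_x$; for any $x$ with $f(x) \notin \mu_S(x)$, we trivially bound $\E[\cost(h_S(x), f(x))] \le 1$ since $\cost$ takes values in $[0,1]$. Splitting the overall expectation according to whether $f(x) \in \mu_S(x)$ yields
\begin{equation}
L_\D^\cost(h_S) \leq z \cdot \Pr_{x \sim \D}\!\bigl[f(x) \in \mu_S(x)\bigr] + 1 \cdot \Pr_{x \sim \D}\!\bigl[f(x) \notin \mu_S(x)\bigr] \leq z + \eps,
\end{equation}
as required.

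Setting $m_\B(\eps,\delta) = m_\L(\eps,\delta)$ then makes $\B$ a $(\cost,z)$-learner for $\F$ with a single oracle call to $\L$, since the event that $\mu_S$ satisfies the error bound on $\Pr_x[f(x) \notin \mu_S(x)]$ happens with probability at least $1-\delta$ over the draw of $S$. No real obstacle arises here: the bound $\cost(\bp_x, y) \le z$ for every $y \in \mu_S(x)$ is an immediate consequence of the inner maximum in the definition of $\val_{\!\mu_S(x)}(\cost)$ ranging over \emph{all} distributions in $\Delta_{\mu_S(x)}$, including the Dirac distribution on $y$, so no appeal to the minimax theorem is needed in this direction; the minor caveat worth noting is that $h_S$ is randomized, which is consistent with the output format permitted by \Cref{def:w_z_learner}.
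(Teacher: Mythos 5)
Your proof is correct and takes essentially the same approach as the paper's: run $\L$ once to get a $(\cost,z)$-bounded list $\mu_S$, predict on each $x$ by sampling from a minimizer of $\val_{\!\mu_S(x)}(\cost)$, and split the loss according to whether $f(x) \in \mu_S(x)$. Your bookkeeping is slightly more direct (conditioning pointwise on $x$ rather than on the event $\mu_S(x)=J$ as the paper does), but the underlying argument is identical.
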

\begin{proof}
Fix any $f \in \F$ and any distribution $\D$ over $\X$, and suppose $\B$ is given a sample $S$ of size $m \ge m_\L(\epsilon,\delta)$ consisting of examples drawn i.i.d.\ from $\D$ and labeled by $f$. First, $\B$ calls $\L$ on $S$. By hypothesis, then, $\L$ returns a $(\cost,z)$-bounded $\mu_S: \X \to \Y^\X$ that with probability at least $1 - \delta$ satisfies:
\begin{equation}
    \Pr_{x \sim \D}\bigl[f(x) \notin \mu_S(x)\bigr] \le \epsilon \;.\label{eq:pr_fx_ne_mu_finer}
\end{equation}
Conditioning on the event above from this point onwards, we give a randomized predictor $h_S$ such that $L_{\D}^\cost(h_S) \le z + \eps$.
First, for any nonempty $J \subseteq \Y$, let $\bp_J \in \Delta_\Y$ be the minimax distribution achieving the value of the game restricted to $J$, i.e.,
\begin{align}
    \bp_J = \arg \min_{\bp \in \Delta_\Y} \left( \max_{\bq \in \Delta_J} \cost(\bp,\bq) \right) \;.
\end{align}
Note that $\bp_J$ can be efficiently computed via a linear program.
Then, simply define $h_S(x) \sim \bp_{\mu_S(x)}$.

Let us analyse the loss of $h_S$ under $\D$. First, by the law of total expectation, and since $\norm{\cost}_{\infty} \le 1$,
\begin{align}
    L_{\D}^\cost(h_S) &\le \Pr_{x \sim D}[f(x) \notin \mu_S(x)] + \sum_{J} \Pr_{x \sim \D}[\mu_S(x)=J \wedge f(x) \in J] \cdot L_{\D_J}^\cost(h_S) \;,\label{eq:LDcost_hS_finer}
\end{align}
where the summation is over all $J \subseteq \Y$ with $\Pr_{x\sim \D}[\mu_S(x)=J]>0$, and $\D_J$ is the distribution obtained from $\D$ by conditioning on the event $\mu_S(x)=J \wedge f(x) \in J$.
Consider the right-hand side of \Cref{eq:LDcost_hS_finer}. By \Cref{eq:pr_fx_ne_mu_finer}, the first term is at most $\eps$. For the second term, denote by $\bq_J$ the marginal of $\D_J$ over $\Y$; note that, crucially, $\bq_J \in \Delta_J$. Therefore, by definition of $h_S$:
\begin{align}
    L_{\D_J}^\cost(h_S) = \cost(\bp_J,\bq_J) \le \max_{\bq \in \Delta_J} \cost(\bp_J,\bq) = \val_J(\cost) \le z \;,\label{eq:LDJ_z_finer}
\end{align}
where the last inequality holds as $J=\mu(x)$ and $\mu$ is $(\cost,z)$-bounded. Using \Cref{eq:pr_fx_ne_mu_finer,eq:LDJ_z_finer} in \Cref{eq:LDcost_hS_finer} shows that $L_{\D}^\cost(h_S) \le z + \eps$.
\end{proof}

\subsection{Multi-objective losses}\label{sub:multi_MO}
In this sub-section we prove the first item of \Cref{thm:multiclass_MO_boost}. Coupled with \Cref{lemma:LB__multiclass_MO_boost} below, this proves \Cref{thm:multiclass_MO_boost}.
Both results require the following definition. Let $\excl(\costVec,\bz)$ denote the family of all subsets of $\Y$ which are {\it avoided} by $(\costVec,\bz)$, i.e., 
$$
\excl(\costVec,\bz) \triangleq \{ J \subseteq \Y : \bz \notin D_J(\costVec)\}. 
$$
The algorithm proposed below then scales with $N= |\excl(\costVec,\bz)|$, which may be exponential in $|\Y|$. We remark that for our purposes, it also suffices to consider only the {\it minimally-sized} avoided lists in $\excl(\costVec,\bz)$, i.e., if $J \subseteq J'$ and $\bz \notin D_J(\costVec)$ then $J' \notin \excl(\costVec,\bz)$. However, in the worst case $N$ remains of the same order and so we keep the definition as given above for simplicity. \Cref{lemma:LB__multiclass_MO_boost} will then give a lower bound showing that this condition is, in some sense, necessary.

\begin{theorem}\label{thm:UB_multiclass_MO_boost}
    Let $\Y=[k]$, let $\costVec = (\cost_1,\ldots,\cost_r)$ where each $\cost_i: \Y^2 \rightarrow [0,1]$ is a cost function, and let $\bbz \in [0,1]^r$. Let $\bbz'  \in [0,1]^r$ such that $\bbz \preceq_{\costVec} \bbz'$.  Assume there exists a $(\costVec, \bbz)$-learner for a class $\F \subseteq \Y^\X$. Then, there exists a $(\costVec, \bbz')$-learner for $\F$. 
\end{theorem}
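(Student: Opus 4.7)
The plan is to construct the desired $(\costVec, \bbz')$-learner through an intermediate list-learning step, lifting the cost-sensitive list-versus-weak equivalence of \Cref{sub:multi_scalar_UB} to the multi-objective setting. Concretely, the goal is to produce a list function $\mu : \X \to 2^\Y$ satisfying two properties: (a) $f(x) \in \mu(x)$ with probability at least $1-\eps$ over $x \sim \D$, and (b) $\mu(x) \notin \excl(\costVec,\bbz')$ for every $x$, i.e., $\bbz' \in D_{\mu(x)}(\costVec)$. Given such a $\mu$, the final predictor samples $y \sim \bp_{\mu(x)}$, where $\bp_{\mu(x)} \in \Delta_\Y$ witnesses the $\mu(x)$-dice-attainability of $\bbz'$ (computable via a linear program, analogously to \Cref{lemma:list_to_wl}). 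Whenever $f(x) \in \mu(x)$, the Dirac mass on $f(x)$ lies in $\Delta_{\mu(x)}$ and hence $\E_{y \sim \bp_{\mu(x)}}[w_i(y,f(x))] \le z'_i$; the rare event $f(x)\notin\mu(x)$ contributes at most $\eps$ because $\norm{w_i}_\infty \le 1$, yielding the desired $(\costVec,\bbz')$-guarantee.

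For the construction of $\mu$, the starting observation is that $\bbz \preceq_{\costVec} \bbz'$ is equivalent to $\excl(\costVec,\bbz') \subseteq \excl(\costVec,\bbz)$, so every $J \in \excl(\costVec,\bbz')$ satisfies $\bbz \notin D_J(\costVec)$. By \Cref{prop:J_dice_duality}, for each such $J$ there exists $\balpha_J \in \Delta_r$ with $\langle \balpha_J, \bbz\rangle < \val_J(w_{\balpha_J})$. Since the hypothesized $(\costVec,\bbz)$-learner $\A$ is in particular a $(w_{\balpha_J}, \langle \balpha_J,\bbz\rangle)$-learner with positive margin $\gamma_J \triangleq \val_J(w_{\balpha_J})-\langle \balpha_J,\bbz\rangle$ in the sense of \Cref{def:multi_margin}, the next step is to invoke the cost-sensitive boosting-to-list procedure of \Cref{thm:wl_to_list} (\Cref{alg:multi_weak_to_list_boost}) to obtain a list function $\mu_J$ that is $(w_{\balpha_J}, \langle\balpha_J,\bbz\rangle)$-bounded, and that with high probability satisfies $f(x) \in \mu_J(x)$ on a $(1-\eps/N)$-mass of $\D$, where $N \triangleq |\excl(\costVec,\bbz')| \le 2^k$.

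The combinatorial heart of the argument is to then set $\mu(x) \triangleq \bigcap_{J \in \excl(\costVec,\bbz')} \mu_J(x)$. From $\val_{\mu_J(x)}(w_{\balpha_J}) \le \langle\balpha_J,\bbz\rangle < \val_J(w_{\balpha_J})$ and the monotonicity of $\val_{(\cdot)}(w)$ under set inclusion, I deduce $J \not\subseteq \mu_J(x)$ for every $J \in \excl(\costVec,\bbz')$; since $\mu(x) \subseteq \mu_J(x)$, this propagates to $J \not\subseteq \mu(x)$. Combined with the upward-closedness of $\excl(\costVec,\bbz')$ under supersets---a consequence of the antitonicity $J \subseteq J' \Rightarrow D_{J'}(\costVec) \subseteq D_J(\costVec)$---this forces $\mu(x) \notin \excl(\costVec,\bbz')$, establishing property (b). Property (a) follows from a union bound over the $N$ events $\{f(x) \in \mu_J(x)\}$, completing the reduction.

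The main obstacle I anticipate is the factor $N = |\excl(\costVec,\bbz')|$, which can be as large as $2^k$ in the worst case and inflates both the number of oracle calls to $\A$ and the sample complexity through the $\eps/N$ and $\delta/N$ parameters fed to each boosted list learner; for the purpose of establishing the existence statement in the theorem this overhead is inconsequential, and it can be reduced by restricting attention to the minimal sets in $\excl(\costVec,\bbz')$ as observed just before the theorem statement. A secondary technical point is that the margins $\gamma_J$ may vary with $J$; since the index set is finite, one can set $\gamma \triangleq \min_J \gamma_J > 0$ and use it uniformly when invoking \Cref{thm:wl_to_list}, preserving a positive lower bound on the margin throughout.
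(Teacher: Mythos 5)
Your construction of the intermediate list function $\mu(x) = \bigcap_{J\in\excl(\costVec,\bbz')}\mu_J(x)$ is exactly the paper's (it is \Cref{lemma:MCMO_to_LL} verbatim), and the combinatorial argument that $J\not\subseteq\mu(x)$ for every $J \in \excl(\costVec,\bbz')$ is correct. The gap is in the final conversion from the list function back to a $(\costVec,\bbz')$-learner. You want to sample $y \sim \bp_{\mu(x)}$ for some \emph{fixed} $\bp_{\mu(x)}$ and argue that since $\delta_{f(x)} \in \Delta_{\mu(x)}$, the dice-attainability of $\bbz'$ over $\mu(x)$ gives $w_i(\bp_{\mu(x)},\delta_{f(x)}) \le z'_i$. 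But \Cref{def:dice_attainable_J} has quantifier order $\forall\bq\,\exists\bp$: the witnessing distribution $\bp$ is allowed to depend on $\bq$. What your argument needs is a single $\bp_J$ satisfying $w_i(\bp_J, j) \le z'_i$ for every $j \in J$ and $i\in[r]$, i.e.\ $\exists\bp\,\forall\bq$. For scalar costs the two quantifier orders coincide by von Neumann's Minimax Theorem — that is exactly why \Cref{lemma:list_to_wl} works, which is the result you cite "by analogy." For vector-valued costs no such swap exists, and the conclusion genuinely fails: take $\Y=\{-1,+1\}$, the population-driven $\costVec=(\cost_-,\cost_+)$, $J=\Y$, and $\bbz'=(\tfrac14,\tfrac14)$. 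Then $\bbz'$ is $\Y$-dice-attainable (since $\sqrt{1/4}+\sqrt{1/4}=1$, cf.\ \Cref{thm:bin_sqrtz_duality}), yet the feasibility sets $\{\bp: p_+\le\tfrac14\}$ (for $\bq=\delta_{-1}$) and $\{\bp: p_-\le\tfrac14\}$ (for $\bq=\delta_{+1}$) have empty intersection in $\Delta_\Y$. So no fixed $\bp_{\mu(x)}$ exists, and the predictor cannot choose $\bp$ as a function of $f(x)$ since that is precisely what it does not know.

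This is not a cosmetic issue; it is the same $\forall\bq\,\exists\bp$ vs.\ $\exists\bp\,\forall\bq$ distinction the paper spends \Cref{subsec:binary:vector} explaining, and it is the reason the paper cannot close this theorem with a direct analogue of \Cref{lemma:list_to_wl}. The paper's detour is the fix: first observe, as you do, that the $\excl(\costVec,\bbz')$-list learner is simultaneously a $(w_{\balpha}, z'_{\balpha})$-list learner for \emph{every} $\balpha\in\Delta_r$ (because $V_J(w_\balpha) > z'_\balpha$ would force $J\in\excl(\costVec,\bbz')$ by \Cref{prop:J_dice_duality}), then apply the \emph{scalar} list-to-weak conversion of \Cref{lemma:list_to_wl} to obtain a family of $(w_\balpha, z'_\balpha)$-learners, and finally aggregate them into a single $(\costVec,\bbz')$-learner via \Cref{thm:intro_general_duality}, whose proof runs Hedge over the $r$ objectives (\Cref{alg:CS_to_MO}). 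Two ways to salvage your write-up: (i) insert this aggregation step, which you have effectively re-derived everything up to; or (ii) have the final predictor estimate, for each list $J$ in the range of $\mu$, the conditional marginal $\bq_J$ of the true label given $\{\mu(x)=J,\ f(x)\in J\}$, and use a $\bp_J$ chosen against $\hat\bq_J$ — this works but requires a careful treatment of estimation error for the (possibly rare) events $\{\mu(x)=J\}$ and a confidence-boosting argument, so option (i) is cleaner.
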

\begin{proof}
By assumption, for every $J \subseteq \Y$ such that $J \in \excl(\costVec,\bz')$ it holds that  $J \in \excl(\costVec,\bz)$.  Let $\balpha \in \Delta_r$, and denote $\cost_{\balpha} = \balpha \cdot \costVec$ and $z'_{\balpha} = \balpha \cdot \bz'$. Then, by \Cref{lemma:MCMO_to_MCCS} we get that the $(\costVec, \bbz)$-learner can be used to obtain a $(\cost_{\balpha}, z'_{\balpha})$-learner for $\F$. Since this holds for every $\balpha  \in \Delta_r$, then by \Cref{thm:intro_general_duality} this implies the existence of a $(\costVec,\bz')$-learner as well, which completes the proof.
\end{proof}

\begin{lemma}\label{lemma:MCMO_to_MCCS}
        Let $\costVec = (\cost_1,\ldots,\cost_r)$ where each $\cost_i: \Y^2 \rightarrow [0,1]$ is a cost function, and let $\bbz, \bbz' \in [0,1]^r$  such that $\bbz \preceq_{\costVec} \bbz'$. Assume there exists a $(\costVec, \bbz)$-learner $\A$ for a class $\F \subseteq \Y^\X$. Then, there exists a $(\cost_{\balpha}, z'_{\balpha})$-learner for $\F$, for every $\balpha \in \Delta_r$.
\end{lemma}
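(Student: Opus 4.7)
The plan is to build a multi-objective analogue of the scalar list learner of \Cref{sub:multi_scalar_UB} from $\A$, transfer the guarantee from $\bbz$ to $\bbz'$ via the partial order $\bbz \preceq_\costVec \bbz'$ together with \Cref{prop:J_dice_duality}, and finally invoke \Cref{lemma:list_to_wl} to extract the desired $(\cost_\balpha, z'_\balpha)$-learner for any fixed $\balpha$.

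For the construction, I would generalize \Cref{alg:multi_weak_to_list_boost} by running a vector-valued variant of Hedge on the sample weights. At each round $t$, call $\A$ on a sample drawn from the current weights to obtain $h_t$; since $\A$ is a $(\costVec,\bbz)$-learner, the bound $\E[\cost_i(h_t(x),f(x))] \le z_i + \eps$ holds simultaneously for every coordinate $i$. Designing the weight updates so that a single resulting sequence $h_1, \ldots, h_T$ satisfies
\[
    \frac{1}{T}\sum_{t=1}^T \cost_i(h_t(x_j), y_j) \le z_i + \sigma \qquad \text{for every } j \in [m] \text{ and } i \in [r]
\]
is the crucial step; it can be realized by combining a meta-Hedge over the coordinates (in the spirit of \Cref{alg:CS_to_MO}) with the per-sample Hedge of \Cref{alg:multi_weak_to_list_boost}. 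Setting $F(x,\ell) = \frac{1}{T}\sum_t \mathbb{I}\{h_t(x)=\ell\}$ and
\[
    \mu_S(x) = \Bigl\{ y \in \Y \;:\; \sum_{\ell \in \Y} F(x,\ell)\,\cost_i(\ell,y) \le z_i + \sigma \text{ for all } i \in [r] \Bigr\},
\]
sample consistency $y_j \in \mu_S(x_j)$ is immediate, and a compression-based generalization argument analogous to \Cref{lem:multi_scalar_boost_generalization} gives $\Pr_{x \sim \D}[f(x) \notin \mu_S(x)] \le \eps$ with high probability. Using $\bp = F(x,\cdot)$ as a common witness, linearity of expectation yields $\val_{\mu_S(x)}(\cost_\balpha) \le z_\balpha + \sigma$ for every $\balpha \in \Delta_r$, which by \Cref{prop:J_dice_duality} is equivalent to $\bbz + \sigma\bone \in D_{\mu_S(x)}(\costVec)$.

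To finish, I push this guarantee to $\bbz'$ and extract a weak learner. Since $\Y$ is finite, there are only finitely many $J \subseteq \Y$ in $\excl(\costVec, \bbz)$, and by \Cref{prop:J_dice_duality} each such $J$ carries a strictly positive gap $\max_\balpha(\val_J(\cost_\balpha) - z_\balpha) > 0$; picking $\sigma$ strictly smaller than the minimum of these gaps forces $\mu_S(x) \notin \excl(\costVec, \bbz)$, i.e.\ $\bbz \in D_{\mu_S(x)}(\costVec)$, for every $x$. The hypothesis $\bbz \preceq_\costVec \bbz'$ then yields $\bbz' \in D_{\mu_S(x)}(\costVec)$, and one more appeal to \Cref{prop:J_dice_duality} gives $\val_{\mu_S(x)}(\cost_\balpha) \le z'_\balpha$ for every $\balpha$. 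Consequently $\mu_S$ is a $(\cost_\balpha, z'_\balpha)$-bounded list function in the sense of \Cref{def:wz_list}, so \Cref{lemma:list_to_wl} converts it into the desired $(\cost_\balpha, z'_\balpha)$-learner for $\F$. The main technical obstacle is producing a single sequence $h_1, \ldots, h_T$ (and hence a single $F$) that simultaneously controls all $r$ coordinates on every sample point---this unification is essential so that $\bp = F(x,\cdot)$ serves as a common witness for dice-attainability. A secondary subtlety is that $\bbz \preceq_\costVec \bbz'$ is a discrete relation and does not automatically lift to $\bbz + \sigma\bone \preceq_\costVec \bbz' + \sigma\bone$; the finiteness of $\Y$ is precisely what allows calibrating $\sigma$ to bridge this mismatch.
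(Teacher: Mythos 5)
Your high-level architecture roughly aligns with the paper's: construct an intermediate list learner, push the guarantee from $\bbz$ to $\bbz'$ via the partial order and \Cref{prop:J_dice_duality}, and extract a scalar learner with \Cref{lemma:list_to_wl}. However, the step you yourself flag as the ``main technical obstacle''---a single boosted sequence $h_1,\dots,h_T$ with $\frac1T\sum_t\cost_i(h_t(x_j),y_j)\le z_i+\sigma$ simultaneously for every sample $j$ and every coordinate $i$---is not established, and it does not follow from splicing \Cref{alg:CS_to_MO} with \Cref{alg:multi_weak_to_list_boost}. The $(\costVec,\bbz)$ guarantee controls $\E_{j\sim\D}[\cost_i(h)]$ for each $i$ given a single distribution $\D$ over the sample, hence controls $\E_{j\sim\D}\E_{i\sim\balpha}[\cost_i(h)-z_i]$ only for \emph{product} distributions $\D\otimes\balpha$ over sample--coordinate pairs. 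A per-sample, per-coordinate Hedge (over $[m]\times[r]$, or any nesting thereof) confronts the oracle with non-product distributions over pairs, and the weak learner need not answer those: the mixed loss $\E_{(j,i)\sim\lambda}[\cost_i(h)-z_i]$ can remain positive for every achievable $h$ once $\lambda$ weights different coordinates differently on different samples. Moreover, the coordinate-Hedge of \Cref{alg:CS_to_MO} only yields a guarantee on the sample-\emph{averaged} losses, which is strictly weaker than what your list construction needs.

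The paper avoids this in \Cref{lemma:MCMO_to_LL}: for each $J\in\excl(\costVec,\bbz')$ (hence $J\in\excl(\costVec,\bbz)$ by $\bbz\preceq_{\costVec}\bbz'$), \Cref{prop:J_dice_duality} produces a \emph{fixed} scalarization $\balpha_J$ with $z_{\balpha_J}<\val_J(\cost_{\balpha_J})$, so $\A$ can be treated as a scalar $(\cost_{\balpha_J},z_{\balpha_J})$-learner with positive margin against $J$. The paper then runs the scalar \Cref{alg:multi_weak_to_list_boost} independently for each such $\balpha_J$, takes the intersection of the resulting list functions, checks the intersected list is $(\cost_{\balpha},z'_{\balpha})$-bounded for every $\balpha$, and only then invokes \Cref{lemma:list_to_wl}. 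This uses nothing beyond the scalar boosting machinery and never needs simultaneous per-sample, per-coordinate control; it is the route your proposal should take. Your $\sigma$-calibration and the lift from $\bbz$ to $\bbz'$ via the partial order are in the right spirit and reappear, in disaggregated per-$J$ form (with a separate margin $\gamma_J$ for each $J$), in the paper's argument.
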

\begin{proof}
 First, by \Cref{lemma:MCMO_to_LL} we get that $\A$ can be used to obtain an algorithm  $\L$ that is a $\excl(\costVec, \bbz')$-list learner for $\F$. 
Next, we show that $\L$ can be boosted to a $(\cost_{\balpha},z'_{\balpha})$-learner. Fix any $\balpha \in \Delta_r$. First, we argue that $\L$ is in fact a $(\cost_{\balpha}, z'_{\balpha})$-list learner (see 
\Cref{def:z_list}). This holds since for every $J \subseteq \Y$ such that $z'_{\balpha} < V_J(\cost_{\balpha})$, it must be that $J \in  \excl(\costVec, \bbz')$, by definition and by \Cref{prop:J_dice_duality}. In particular, for any $\mu$ list function outputted by $\L$, and for every $x \in \X$ it holds that $J \not\subseteq \mu(x)$ and so $V_{\mu(x)}(\cost_{\balpha}) \le z'_{\balpha}$. 

Thus, we have established that $\L$ is in fact a $(\cost_{\balpha}, z'_{\balpha})$-list learner. 
Lastly, by \Cref{lemma:list_to_wl} we get that it can also be converted into a $(\cost_{\balpha}, z'_{\balpha})$-learner, as needed.
\end{proof}

\begin{definition}[\smash{$\excl(\costVec, \bbz)$-list learner}] 
An algorithm $\L$ is a \emph{$\excl(\costVec, \bbz)$-list learner} for a class $\F \subseteq \Y^\X$ if there is a function $m_\L: (0,1)^2 \rightarrow \mathbb{N}$ such that for every $f \in \F$, every distribution $\D$ over $\X$, and every $\epsilon,\delta \in (0,1)$ the following claim holds. If $S$ is a sample of $m_\L(\epsilon, \delta)$  examples drawn i.i.d.\ from $\D$ and labeled by $f$, then $\L(S)$ returns a list function $\mu: \X \rightarrow 2^\Y$ such that 
$\Pr_{x \sim \D}\left[f(x) \notin \mu(x)\right]\leq \epsilon$ with probability   $1-\delta$, and such that for every $x \in \X$ and every $J \in \excl(\costVec, \bbz)$ it holds that $J \not\subseteq \mu(x)$.
\end{definition}
\noindent 

\begin{lemma}\label{lemma:MCMO_to_LL}
        Let $\Y=[k]$, let $\costVec = (\cost_1,\ldots,\cost_r)$ where each $\cost_i: \Y^2 \rightarrow [0,1]$ is a cost function, and let $\bbz, \bbz' \in [0,1]^r$  such that $\bbz \preceq_{\costVec} \bbz'$.  Assume there exists a $(\costVec, \bbz)$-learner $\A$ for a class $\F \subseteq \Y^\X$. Then, there exists a $\excl(\costVec, \bbz')$-list learner for $\F$. 
\end{lemma}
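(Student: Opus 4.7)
The plan is to reduce, for each ``bad'' subset $J \in \excl(\costVec,\bbz')$, the multi-objective guarantee to a scalar cost-sensitive guarantee with positive margin on $J$, apply the weak-to-list conversion of \Cref{thm:wl_to_list} to rule out $J$, and then intersect the resulting list functions.

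First, from $\bbz \preceq_{\costVec} \bbz'$ the contrapositive of \Cref{eq:partial_order_MC_MO} immediately gives $\excl(\costVec,\bbz') \subseteq \excl(\costVec,\bbz)$. Fix then any $J \in \excl(\costVec,\bbz')$. Since $\bbz \notin D_J(\costVec)$, \Cref{prop:J_dice_duality} yields a distribution $\balpha_J \in \Delta_r$ such that $z_{\balpha_J} < \val_J(\cost_{\balpha_J})$, where $\cost_{\balpha_J} \triangleq \sum_{i=1}^r \alpha_{J,i}\cost_i$ and $z_{\balpha_J} \triangleq \langle \balpha_J, \bbz \rangle$.

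Next, I would observe that the given $(\costVec,\bbz)$-learner $\A$ is automatically a $(\cost_{\balpha_J}, z_{\balpha_J})$-learner: if $h$ satisfies $L_\D^{\cost_i}(h) \le z_i + \eps$ for each $i$, then by linearity of expectation $L_\D^{\cost_{\balpha_J}}(h) \le z_{\balpha_J} + \eps$. Moreover, per \Cref{def:multi_margin} this cost-sensitive learner has margin at least $\val_J(\cost_{\balpha_J}) - z_{\balpha_J} > 0$. Applying \Cref{thm:wl_to_list} with any slack $\sigma_J \in (0, \val_J(\cost_{\balpha_J}) - z_{\balpha_J})$ then yields an algorithm $\L_J$ that, on a sufficiently large sample from $\D$, outputs with high probability a $(\cost_{\balpha_J}, z_{\balpha_J} + \sigma_J)$-bounded list function $\mu_J$ satisfying $\Pr_{x\sim\D}[f(x) \notin \mu_J(x)] \le \eps/N$, where $N \triangleq |\excl(\costVec,\bbz')|$. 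The choice of $\sigma_J$ guarantees $J \not\subseteq \mu_J(x)$ for every $x \in \X$: if we had $J \subseteq \mu_J(x)$, then by monotonicity of the restricted game value under set inclusion we would get $\val_{\mu_J(x)}(\cost_{\balpha_J}) \ge \val_J(\cost_{\balpha_J}) > z_{\balpha_J} + \sigma_J$, contradicting the $(\cost_{\balpha_J}, z_{\balpha_J} + \sigma_J)$-boundedness of $\mu_J$.

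Finally, the desired list learner $\L$ runs each $\L_J$ on an independent subsample (with confidence $\delta/N$) and outputs $\mu(x) \triangleq \bigcap_{J \in \excl(\costVec,\bbz')} \mu_J(x)$. Since $\mu(x) \subseteq \mu_J(x)$ for every $J$, no element of $\excl(\costVec,\bbz')$ is contained in $\mu(x)$, so the list-avoidance condition is satisfied; and a union bound over the $N$ events $\{f(x) \notin \mu_J(x)\}$ and over the $N$ confidence events yields $\Pr_{x\sim\D}[f(x) \notin \mu(x)] \le \eps$ with probability at least $1 - \delta$. Hence $\L$ is an $\excl(\costVec,\bbz')$-list learner for $\F$. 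The main obstacle is bookkeeping rather than a conceptual one: $N$ may be exponential in $k$, so the sample complexity and running time of this reduction scale with $N$. Since the lemma only asserts existence, this is acceptable; the conceptual crux is the use of \Cref{prop:J_dice_duality} to convert the vector-valued non-attainability witness $J \in \excl(\costVec,\bbz)$ into a scalar weighting $\balpha_J$ providing the necessary cost-sensitive margin on $J$.
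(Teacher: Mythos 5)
Your proof is correct and takes essentially the same route as the paper: use $\bbz \preceq_{\costVec} \bbz'$ to get $\excl(\costVec,\bbz') \subseteq \excl(\costVec,\bbz)$, scalarize each bad $J$ via \Cref{prop:J_dice_duality} to obtain a $(\cost_{\balpha_J}, z_{\balpha_J})$-learner with $z_{\balpha_J} < \val_J(\cost_{\balpha_J})$, apply the weak-to-list conversion with a slack small enough to rule out $J$, and intersect the resulting list functions with a union bound over $N = |\excl(\costVec,\bbz')|$. One small imprecision: you assert the margin is ``at least'' $\val_J(\cost_{\balpha_J}) - z_{\balpha_J}$, but \Cref{def:multi_margin} takes a \emph{minimum} over all subsets with value exceeding $z_{\balpha_J}$, so the margin is only bounded \emph{above} by that quantity (and bounded below by $0$); this does not affect the argument since you then choose the slack $\sigma_J$ directly relative to $\val_J(\cost_{\balpha_J}) - z_{\balpha_J}$ rather than relying on the margin itself (note: to use an arbitrary slack you should invoke \Cref{lemma:wl_to_list} directly rather than \Cref{thm:wl_to_list}, which fixes $\sigma = \tfrac{2}{3}\gamma$; either works). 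The only other difference is cosmetic: the paper runs all $N$ scalarized learners on the \emph{same} sample and applies a single compression-based generalization argument to the intersected list function, whereas you run on independent subsamples with per-$J$ accuracy $\eps/N$ and union bound; both are valid.
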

\begin{proof}
First, fix any $J \in \excl(\costVec, \bbz')$. By assumption that $\bbz \preceq_{\costVec} \bbz'$, we also have that  $J \in \excl(\costVec, \bbz)$. In particular, we have that $(\costVec, \bbz)$ is not $J$-dice-attainable. Then, by \Cref{prop:J_dice_duality} we have that there exist $\balpha \in \Delta_r$ such that, 
\begin{equation}\label{eq:some_ineq_in_some_lemma}
    z_{\balpha} = \langle\balpha,\bbz\rangle <  \val_{J}(w_{\balpha}),
\end{equation}
where $w_{\balpha} = \sum_{i=1}^r \alpha_i w_i$. Thus, we have that $\A$ is also a $(w_{\balpha}, z_{\balpha})$-learner for $\F$, and that $z_{\balpha} <  \val_{J}(w_{\balpha})$. We denote this learning algorithm by $\A_J$. 
Notice that we can repeat the above process for any such $J$.  Thus, we  obtain different learning algorithms $\A_J$ for each $J \in \excl(\costVec, \bbz')$.\\

Next, we will describe the construction of the $\excl(\costVec, \bbz')$-list learning algorithm. 
Fix any $\delta',\epsilon' > 0$.  
For every learner $\A_J$, apply  \Cref{alg:multi_weak_to_list_boost} with parameters $\sigma_{J} = \frac23\gamma_{J}$, where $\gamma_{J}$ is the margin of $\A_J$, our $(w_{\balpha_J}, z_{\balpha_J})$-learner (see \Cref{def:multi_margin}). We also set the parameters of $\A_J$ to be $\epsilon = \epsilon'$, and $\delta = \delta'/2N$, for $N = |\excl(\costVec, \bbz')|$.  The remaining parameters for \Cref{alg:multi_weak_to_list_boost} are set as in \Cref{lemma:wl_to_list}. 

For each such run of the algorithm, we obtain a list function.
Let $\mu_{J_1},...,\mu_{J_N}$ denote all list functions obtained by this procedure. Finally, return the list function $\mu$ defined by $$
\mu(x)=\bigcap_{n=1}^N \mu_{J_n}(x),
$$ for every $x \in \X$. We will now show that this algorithm satisfies the desired guarantees, and is indeed a $\excl(\costVec, \bbz')$-list learner. \\

First, by \Cref{lemma:wl_to_list} and union bound we get that with probability at least $1-\delta/2$, for each $n\le N$ it holds that:
$$
y_i \in \mu_{J_n}(x_i) \qquad \forall i=1,\dots,m \;.
$$
Thus, in particular, with probability at least $1-\delta/2$ we also have that $y_i \in \mu(x_i)$ for all $i \in [m]$.
Moreover, by \Cref{lemma:wl_to_list} we have that all $\mu_{J_n}$ are $(w_{\balpha}, z_{\balpha} + \sigma_{J_n})$-bounded. That is, 
for each $n\le N$ and every $x\in \X$ it holds that $\val_{\!\mu_{J_n}(x)}(w_{\balpha}) \le z_{\balpha} + \sigma_{J_n} = z_{\balpha} + \frac23\gamma_{J_n}$. Now recall that by the definition of the margin, and by \Cref{eq:some_ineq_in_some_lemma} we in fact have that:
$$
\val_{\!\mu_{J_n}(x)}(w_{\balpha}) < \val_{\!J_n}(w_{\balpha}).
$$
This then implies that for every $x \in \X$ it holds that $J_n \not\subseteq \mu_{J_n}(x)$.
Thus, in particular,  we also get that the final list function $\mu$ satisfies that for every $x \in \X$ it holds that $J_n \not\subseteq \mu(x)$. Lastly, by following the same compression-based generalization analysis as in \Cref{lem:multi_scalar_boost_generalization}, we obtain that the above procedure is in fact a $\excl(\costVec, \bbz')$-list learning algorithm, with sample complexity  $m(\eps,\delta) = \widetilde{O}\Bigl(m_0\bigl({\gamma^*},\frac{\delta}{NT}\bigr) \cdot \frac{\ln(N/\delta)}{\eps\cdot{\gamma^*}^2}\Bigr)$, where $\gamma^* = \min_{n \in [N]}\gamma_{J_n}$. 

\end{proof}

\subsection{Multiclass boosting via $\listSize$-list PAC learning}\label{subsec:multiclass-list-size}
In this section we aim to convert weak learners to list learners, with a fixed list size. This is, in some sense, a coarsening of the previous subsection, which also subsumes previous work by \cite{Brukhim23simple}, where the authors demonstrate $k-1$ thresholds which determine the achievable list sizes.
To that end, for every $\listSize=2,\ldots,k$ define the following coarsening of $\listSize$-the critical thresholds of $\cost$:
\begin{align}\label{eq:multiclass_critical_thresholds_min_max}
   v_{\underline{\listSize}}(\cost) \triangleq \min \left\{ \val_J(\cost) : J \in {[k] \choose \listSize}\right\}, \qquad  v_{\overline{\listSize}}(\cost) \triangleq \max \left\{ \val_J(\cost) : J \in {[k] \choose \listSize}\right\} \;.
\end{align}
Clearly, for all $\listSize \in \{2,...,k\}$ we have that $v_{\underline{\listSize}}(\cost) \le v_{\overline{\listSize}}(\cost)$.
It is also easy to see that $0 \le v_{\underline{2}}(\cost) \le \ldots \le v_{\underline{k}}(\cost)$, and $0 \le v_{\overline{2}}(\cost) \le \ldots \le v_{\overline{k}}(\cost)$. When clear from context, we omit $\cost$ and denote the thresholds by $v_{\underline{\listSize}}$ and $v_{\overline{\listSize}}$. We remark that the two types of thresholds $v_{\underline{\listSize}}$  and $v_{\overline{\listSize}}$ are useful for different ways of boosting as specified in the remainder of this section.

Before stating the theorem, we need to first describe what it means to ``partially'' boost. The results given in this section are based on the framework of \emph{List PAC learning} \citep{brukhim2022characterization, charikar2022characterization}. The connection between 
list learning and multiclass boosting was demonstrated by prior works \citep{Brukhim23simple, brukhim2023improper}. Here we strengthen this link and
generalize previous results to hold for arbitrary costs.  We start with introducing list learning in Definition \ref{def:list_pac}, followed by the statement of \Cref{thm:main_multiclass_list_sizes}.

\begin{definition}[\smash{$\listSize$-List PAC Learning \citep{brukhim2022characterization, charikar2022characterization}}]\label{def:list_pac}
An algorithm $\L$ is a $\listSize$-list PAC learner for a class $\F \subseteq \Y^\X$ if the following holds. 
For every distribution $\D$ over $\X$ and target function $f \in \F$, and every $\epsilon, \delta > 0$, if $S$ is a set of $m \ge m(\epsilon,\delta)$ examples drawn i.i.d.\ from $\D$ and labeled by $f$, then $\L(S)$ returns  $\mu: \X \rightarrow \Y^{\listSize}$ such that, with probability at least $1-\delta$,
$$
L_\D(\mu) \triangleq \Pr_{x \sim \D}\bigl[f(x) \notin \mu(x)\bigr]\leq \epsilon \;.
$$
\end{definition}

\begin{theorem}\label{thm:main_multiclass_list_sizes}
Let $\Y=[k]$, let $\cost: \Y^2 \rightarrow [0,1]$ be any cost function, and let $z \ge 0$. Let $v_{\underline{2}} \le \ldots \le v_{\underline{k}}$ as defined in \Cref{eq:multiclass_critical_thresholds_min_max}, and let $\listSize < k$ be the smallest integer for which $z < v_{\underline{\listSize+1}}$ or, if none exists, let $\listSize=k$. Then, the following claims both hold.
\begin{itemize}[leftmargin=.5cm]\itemsep0pt
    \item  {\bf\small $(\cost, z)$ is $\listSize$-boostable:} for every class $\F \subseteq \Y^\X$, every $(\cost,z)$-learner is boostable to an $\listSize$-list PAC learner. 
    \item  {\bf\small $(\cost, z)$ is not $(\listSize-1)$-boostable:} for some $\F  \subseteq \Y^\X$ there exists a $(\cost,z)$-learner that cannot be boosted to an $\listSize'$-list PAC learner with $\listSize' <  \listSize$.
\end{itemize} 
\end{theorem}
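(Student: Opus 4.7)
The plan is to reduce to the $(\cost,z)$-list learner machinery already established in \Cref{thm:wl_to_list} and then bound the list size via the monotonicity of $\val_J(\cost)$ in $J$. Given a $(\cost,z)$-learner for $\F$, I would first note that its margin $\gamma>0$ (since, by the choice of $\listSize$, there is some $J\subseteq\Y$ with $|J|=\listSize+1$ and $\val_J(\cost)\ge v_{\underline{\listSize+1}}>z$), so \Cref{thm:wl_to_list} produces a $(\cost,z)$-list learner whose output $\mu_S$ satisfies $\val_{\mu_S(x)}(\cost)\le z$ for every $x\in\X$. It then remains to show $|\mu_S(x)|\le\listSize$ for every $x$. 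Suppose for contradiction that $|\mu_S(x)|\ge\listSize+1$ for some $x$, and pick any $J\subseteq\mu_S(x)$ with $|J|=\listSize+1$. Since $\Delta_J\subseteq\Delta_{\mu_S(x)}$, we have $\val_J(\cost)\le\val_{\mu_S(x)}(\cost)\le z$. But by definition $\val_J(\cost)\ge v_{\underline{\listSize+1}}>z$, a contradiction. Hence the list learner is an $\listSize$-list PAC learner.

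\textbf{Lower bound (second item).} The plan is to exhibit an explicit trivial $(\cost,z)$-learner for a class which is simply not $\listSize'$-list PAC learnable for any $\listSize'<\listSize$. The choice of $\listSize$ gives $v_{\underline{\listSize}}\le z$, so there exists $J^*\subseteq\Y$ of size $\listSize$ achieving $\val_{J^*}(\cost)=v_{\underline{\listSize}}\le z$. Let $\bp^*\in\Delta_\Y$ be a minimax strategy for the restricted game on $J^*$, so $\cost(\bp^*,\bq)\le z$ for every $\bq\in\Delta_{J^*}$. Taking $\X=\NN$ and $\F=(J^*)^\X$, I would define the trivial learner $\A$ that ignores its input sample and returns $h$ with $h(x)\sim\bp^*$ for every $x$. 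For any $f\in\F$ and any distribution $\D$ over $\X$, the induced marginal $\bq$ on $\Y$ lies in $\Delta_{J^*}$ (since $f$ takes values only in $J^*$), hence $L_\D^\cost(h)=\cost(\bp^*,\bq)\le z$, confirming that $\A$ is a $(\cost,z)$-learner for $\F$.

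\textbf{Non-list-learnability and main obstacle.} To finish the lower bound I would argue that $\F=(J^*)^{\NN}$ is not $\listSize'$-list PAC learnable for any $\listSize'<\listSize$, either by invoking the list-PAC characterization of \citet{brukhim2022characterization,charikar2022characterization} (the class has infinite $(\listSize'+1)$-DS dimension, as it realizes every labeling into a $\listSize$-element set), or by a direct adversary argument: against uniform $\D$ on $[n]$ and uniformly random $f\in\F$, an algorithm sees only $m\ll n$ labels, and on each unseen point any list of size $\listSize'<\listSize$ misses the truth with probability $1-\listSize'/\listSize$, yielding expected error bounded below by $(1-m/n)(1-\listSize'/\listSize)$, which exceeds any fixed $\epsilon<1$ for $n$ large. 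Since no $\listSize'$-list PAC learner exists for $\F$ at all, no boosting of $\A$ can produce one. The main conceptual obstacle is identifying the right threshold: one must use $v_{\underline{\listSize}}$ rather than the weaker $v_{\overline{\listSize}}$, because only the former guarantees a specific $J^*$ of size $\listSize$ on which the minimax game is coin-winnable at cost $\le z$, which is exactly what the trivial learner needs in order to witness a class that is simultaneously $(\cost,z)$-learnable and not $\listSize'$-list learnable.
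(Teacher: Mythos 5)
Your proof is correct and follows essentially the same approach as the paper, with one small but clean variation. For the upper bound, you factor through \Cref{thm:wl_to_list} to obtain a $(\cost,z)$-list learner and then apply a size-bound observation, whereas the paper proves a dedicated \Cref{lemma:wl_to_list_SIZES} which runs \Cref{alg:multi_weak_to_list_boost} directly with margin $\gamma = v_{\underline{\listSize+1}} - z$ and bounds the list size inline. Both routes hinge on the same inequality $\val_{\mu_S(x)}(\cost) \le z < v_{\underline{\listSize+1}} = \min_{|J|\ge \listSize+1}\val_J(\cost)$, so your argument is a valid shortcut; the only cost is that the Definition~\ref{def:multi_margin} margin used by \Cref{thm:wl_to_list} can be strictly smaller than $v_{\underline{\listSize+1}}-z$ (because $\val_J(\cost)>z$ may occur for some $|J|\le\listSize$), so the resulting oracle-call bound could be slightly worse. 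For the lower bound, your construction ($J^*$ achieving $v_{\underline{\listSize}}$, $\F=(J^*)^\X$, the minimax-coin trivial learner, and the appeal to the DS-dimension characterization of list PAC learnability) matches \Cref{lemma:LB__multiclass_list} nearly word for word; your additional direct adversary argument is fine in spirit but note that the expected miss probability $(1-m/n)(1-\listSize'/\listSize)$ exceeds any fixed $\eps < 1 - \listSize'/\listSize$, not any $\eps<1$, which still suffices. Two boundary cases worth stating explicitly: when $\listSize=k$ the first item is trivial (just output $\mu\equiv\Y$), and when $\listSize=1$ the second item is vacuous; the paper flags both.
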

\noindent In words, Theorem \ref{thm:main_multiclass_list_sizes} implies that there is a partition of the loss values interval $[0,1]$ into $k$ sub-intervals or ``buckets'', based on $v_{\underline{\listSize}}$. Then, any loss value $z$ in a certain bucket $v_{\underline{\listSize}}$ can be boosted to a list of size $\listSize$, but not below that. 
In \Cref{thm:list_to_wl_SIZES} we show the converse: that any $\listSize$-list learner can be boosted to $(\cost, z)$-learner where $z$ may be the lowest value within the $\listSize$-th interval $v_{\overline{\listSize}}$, but not below it.

The first item of \Cref{thm:main_multiclass_list_sizes} holds trivially for $z \ge v_{\underline{k}}$, since in that case $s=k$ and a $k$-list learner always exists. The case $z < v_{\underline{k}}$ is addressed by \Cref{lemma:wl_to_list_SIZES} below.

\begin{lemma}\label{lemma:wl_to_list_SIZES}
Let $\Y=[k]$, let $\cost: \Y^2 \rightarrow [0,1]$ be a cost function, and define $v_{\underline{2}} \le \ldots \le v_{\underline{k}}$ as in \Cref{eq:multiclass_critical_thresholds_min_max}. Let $\F \subseteq \Y^\X$ and let $\A$ be a $(\cost,z)$-learner for $\F$ with sample complexity $m_0$ such that $z < v_{\underline{k}}$.
Let $\listSize$ be the smallest integer in $\{1,\ldots,k-1\}$ such that $z < v_{\underline{\listSize+1}}$ and let $\gamma = v_{\underline{\listSize+1}} - z > 0$ be the margin.
Fix any $f \in \F$, let $\D$ be any distribution over $\X$, and let $S = \{(x_1,y_1), \dots, (x_m,y_m)\}$ be a multiset of $m$ examples given by i.i.d.\ points $x_1,\dots,x_m \sim \D$ labeled by $f$.
Finally, fix any $\delta \in (0,1)$.
If \Cref{alg:multi_weak_to_list_boost} is given $S$, oracle access to $\A$, and parameters $T = \ceil*{\frac{18\ln(m)}{\gamma^2}}$, $\eta = \sqrt{\frac{2\ln(m)}{T}}$, $\widehat m = m_0\bigl(\frac{\gamma}{3},\frac{\delta}{T}\bigr)$, and $\sigma = \frac23\gamma$, then \Cref{alg:multi_weak_to_list_boost} makes $T$ calls to $\A$ and returns $\mu_S: \X \to \Y^\listSize$ such that with probability at least $1-\delta$:
\[
    y_i \in \mu_S(x_i) \qquad \forall i=1,\dots,m \;.
\]
\end{lemma}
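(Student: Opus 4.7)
The plan is to reduce to Lemma~\ref{lemma:wl_to_list} and then convert the $(\cost,z+\sigma)$-boundedness of the output list function into a hard size bound using the definition of $v_{\underline{s+1}}$. Since the parameters chosen in the statement match exactly the hypotheses of Lemma~\ref{lemma:wl_to_list} (with $\sigma = \tfrac{2}{3}\gamma$ and $T,\eta,\widehat m$ set accordingly), one application of that lemma yields a list function $\mu_S$ which (i) is $(\cost,z+\tfrac{2}{3}\gamma)$-bounded and (ii) satisfies $y_i \in \mu_S(x_i)$ for all $i=1,\dots,m$ with probability at least $1-\delta$. Thus only the claim $\mu_S : \X \to \Y^{\listSize}$, i.e.\ $|\mu_S(x)| \le \listSize$ for every $x \in \X$, needs a dedicated argument.

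To prove this size bound, I would exploit the monotonicity of the restricted game value: whenever $J \subseteq J'$, one has $\Delta_J \subseteq \Delta_{J'}$, hence $\val_J(\cost) \le \val_{J'}(\cost)$. Combined with the definition
\[
v_{\underline{\listSize+1}}(\cost) \;=\; \min\left\{\val_J(\cost) : J \in \binom{[k]}{\listSize+1}\right\},
\]
this gives $\val_{J'}(\cost) \ge v_{\underline{\listSize+1}}(\cost)$ for every $J' \subseteq \Y$ with $|J'| \ge \listSize+1$. Fix now any $x \in \X$ and set $J' = \mu_S(x)$. By $(\cost,z+\tfrac{2}{3}\gamma)$-boundedness,
\[
\val_{\mu_S(x)}(\cost) \;\le\; z + \tfrac{2}{3}\gamma \;<\; z + \gamma \;=\; v_{\underline{\listSize+1}}(\cost),
\]
so $\mu_S(x)$ cannot contain any $(\listSize+1)$-subset, whence $|\mu_S(x)| \le \listSize$.

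The consistency and oracle-call counts are then inherited directly from Lemma~\ref{lemma:wl_to_list}: Algorithm~\ref{alg:multi_weak_to_list_boost} issues exactly $T = \lceil 18\ln(m)/\gamma^2\rceil$ calls to $\A$ and with probability $\ge 1-\delta$ produces a list function consistent with $S$. Putting the two pieces together gives the statement.

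The only real choice is the margin parameter $\sigma$: it must be strictly smaller than $\gamma$ to force $z+\sigma < v_{\underline{\listSize+1}}$, while also being as large as possible so that the Hedge round count $T$ depends on $\gamma$ and not on some smaller quantity. The choice $\sigma = \tfrac{2}{3}\gamma$ achieves both. I do not anticipate any genuine obstacle; the key conceptual point is simply that the game-value monotonicity translates the (cost-sensitive) boundedness guarantee of Lemma~\ref{lemma:wl_to_list} into a cardinality guarantee by virtue of the definition of $v_{\underline{\listSize+1}}$ as a minimum over $(\listSize+1)$-subsets.
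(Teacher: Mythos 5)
Your proof is correct and takes essentially the same route as the paper's, with the sole difference that you factor through Lemma~\ref{lemma:wl_to_list} (whose parameters match exactly under $\sigma = \tfrac{2}{3}\gamma$), whereas the paper re-derives the Hedge bound and the $(\cost,z+\sigma)$-boundedness claim inline. The conversion from boundedness to the cardinality bound $|\mu_S(x)| \le \listSize$ — via monotonicity of $\val_J(\cost)$ in $J$ together with the definition of $v_{\underline{\listSize+1}}$ as a minimum over $(\listSize+1)$-subsets — is exactly the paper's argument.
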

\begin{proof}
    Fix any $f \in \F$ and any $\epsilon, \delta \in (0,1)$. Let $\D$ be any distribution over $\X$ and let $S = \{(x_1,y_1), \dots, (x_m,y_m)\}$ be a multiset of $m$ examples given by i.i.d.\ points $x_1,\dots,x_m \sim \D$ labeled by $f$.
    The first part of this proof follows similar steps as the one of \Cref{thm:binary_scalar_boosting}.
    In particular, fix any $i \in [m]$ and observe that, again by the regret analysis of Hedge and by the definitions of $\eta$ and $T$, \Cref{alg:multi_weak_to_list_boost} guarantees
    \begin{equation}\label{eq:multiclass_hedge_bound}
        \frac{1}{T} \sum_{t=1}^T \cost(h_t(x_i),y_i) \le \frac{\gamma}{3} + \frac{1}{T} \sum_{t=1}^T \E_{j \sim \D_t}\Bigl[\cost(h_t(x_j),y_j)\Bigr] \;.
    \end{equation}
    Furthermore, since $\A$ is a $(\cost, z)$-learner for $\F$, and given the choices of $\widehat m$ and $h_t = \A(S_t)$, by a union bound over $t \in [T]$ we obtain that
    \begin{equation}\label{eq:multiclass_avg_loss_bound}
        \frac{1}{T} \sum_{t=1}^T \E_{j \sim \D_t}\Bigl[\cost(h_t(x_j),y_j)\Bigr] \le z + \frac{\gamma}{3}
    \end{equation}
    with probability at least $1 - \delta$.
    
    Now, we show that the list predictor $\mu_S$ built by \Cref{alg:multi_weak_to_list_boost} is consistent with $S$ with sufficiently large probability.
    Precisely, by conditioning on the event given by the inequality in \Cref{eq:multiclass_avg_loss_bound}, we now demonstrate that $y_i \in \mu_S(x_i)$ for every $i \in [m]$.
    Consider the function $F: \X \times \Y \to [0,1]$ as defined by \Cref{alg:multi_weak_to_list_boost}.
    Then, by \Cref{eq:multiclass_hedge_bound,eq:multiclass_avg_loss_bound} together with the definition of $\gamma > 0$, we obtain for every $i \in [m]$ that
    \begin{equation}
        \sum_{\ell \in \Y} F(x_i, \ell) \cdot \cost(\ell,y_i) = \frac{1}{T} \sum_{t=1}^T \cost(h_t(x_i),y_i) \le z + \frac{2}{3}\gamma < z + \gamma = v_{\underline{\listSize+1}}\;,
    \end{equation}
    which in turn implies that $y_i \in \mu_S(x_i)$ by construction of $\mu_S$.
    
    We now proceed with a deterministic characterization of the lists returned by $\mu_S$, independently of any conditioning.
    Fix any $x \in \X$.
    Let $\bp^x \in \Delta_\Y$ be such that $p^x_\ell = F(x,\ell)$ for any $\ell \in \Y$, and observe that the sum involved within the condition for $y \in \Y$ in the construction of the list $\mu_S(x)$ is equal to $\cost(\bp^x,y)$.
    Furthermore, it must be true that $\listSize < k$ since $\gamma > 0$.
    We can then show that the list $\mu_S(x)$ satisfies
    \begin{equation}
        v_{\underline{\listSize+1}}
        > \max_{y \in \mu_S(x)} \cost(\bp^x,y)
        = \max_{\bq \in \Delta_{\mu_S(x)}} \cost(\bp^x,\bq)
        \ge \min_{\bp \in \Delta_\Y} \max_{\bq \in \Delta_{\mu_S(x)}} \cost(\bp,\bq)
        = \val_{\mu_S(x)}(\cost) \;,
    \end{equation}
    where the first inequality holds by definition of $\mu_S(x)$.
    Consequently, after observing that $\val_J(\cost) \le \val_{J'}(\cost)$ if $J \subseteq J'$, note that
    \begin{equation}
        \val_{\mu_S(x)}(\cost) < v_{\underline{\listSize+1}} = \min_{J \in \binom{\Y}{\listSize+1}} \val_J(\cost) = \min_{J \subseteq \Y: \abs{J} \ge \listSize+1} \val_J(\cost) \;,
    \end{equation}
    which in turn implies that $\abs{\mu_S(x)} \le \listSize$. We can thus assume that $\mu_S$ outputs elements of $\Y^s$ without loss of generality.
\end{proof}

The following lemma proves that, in a similar way as \Cref{lem:bin_scalar_boost_generalization} for the binary setting, the list predictor output by \Cref{alg:multi_weak_to_list_boost} has a sufficiently small generalization error provided that the sample $S$ has size $m$ sufficiently large.
The main idea to prove generalization is again via a sample compression scheme, but relying instead of novel results by \citet{Brukhim23simple} for multiclass classification.
This generalization result, combined with \Cref{lemma:wl_to_list_SIZES}, suffices to prove \Cref{thm:main_multiclass_list_sizes}.
\begin{lemma}\label{lem:multi_scalar_boost_generalization_SIZES}
    Assume the hypotheses of \Cref{lemma:wl_to_list_SIZES}.
    For any $\eps,\delta \in (0,1)$, if the size $m$ of the sample given to \Cref{alg:multi_weak_to_list_boost} satisfies
    \[
        m \ge \frac{\ln(2/\delta)}{\eps} + m_0\left(\frac{\gamma}{3}, \frac{\delta}{2T}\right) \cdot T \cdot \left(1 + \frac{\ln(m)}{\eps}\right) \;,
    \]
    then with probability at least $1-\delta$ the output $\mu_S$ of \Cref{alg:multi_weak_to_list_boost} satisfies $L_{\D}(\mu_S) \le \eps$.
    Therefore, \Cref{alg:multi_weak_to_list_boost} is an $\listSize$-list PAC learner for $\F$.
    Moreover, one can ensure that \Cref{alg:multi_weak_to_list_boost} makes $O\bigl(\frac{\ln(m)}{\gamma^2}\bigr)$ calls to $\A$ and has sample complexity $m(\eps,\delta) = \widetilde{O}\left(m_0\left(\frac{\gamma}{3}, \frac{\delta}{2T}\right) \cdot \frac{\ln(1/\delta)}{\eps \cdot \gamma^2}\right)$.
\end{lemma}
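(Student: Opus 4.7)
The plan is to essentially replicate the two-step argument used in \Cref{lem:multi_scalar_boost_generalization}, adapted to the list-size setting of \Cref{lemma:wl_to_list_SIZES}. First, I would invoke \Cref{lemma:wl_to_list_SIZES} with confidence parameter $\delta/2$ in place of $\delta$. This guarantees that, with probability at least $1-\delta/2$ over the draw of $S$ and the internal randomness of the algorithm, the list function $\mu_S : \X \to \Y^\listSize$ output by \Cref{alg:multi_weak_to_list_boost} is consistent with the sample, i.e., $y_i \in \mu_S(x_i)$ for every $i \in [m]$, and moreover $|\mu_S(x)| \le s$ for every $x \in \X$ (as established deterministically in the second half of the proof of \Cref{lemma:wl_to_list_SIZES}).

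Second, I would appeal to a sample compression argument specialized to list predictors. Observe that the entire computation of $\mu_S$ depends on $S$ only through the $T$ subsamples $S_1, \ldots, S_T$ fed to $\A$; hence $\mu_S$ admits a sample compression scheme of size $\kappa = \widehat m \cdot T$. Since $\mu_S$ is $s$-valued and (with probability $\ge 1-\delta/2$) consistent with $S$, we can apply the list-compression generalization bound of \citet{Brukhim23simple} (Theorem~6), with confidence $\delta/2$, to conclude that with probability at least $1-\delta/2$,
\begin{equation}
    L_{\D}(\mu_S) \leq \frac{\kappa \ln(m) + \ln(2/\delta)}{m - \kappa} \;.
\end{equation}
A straightforward rearrangement, identical to the one in \Cref{eq:m_ge_m0_0,eq:m_ge_m0}, shows that the right-hand side is bounded by $\eps$ whenever
\begin{equation}
    m \geq \frac{\ln(2/\delta)}{\eps} + \kappa \cdot \left(1 + \frac{\ln(m)}{\eps}\right) = \frac{\ln(2/\delta)}{\eps} + m_0\!\left(\tfrac{\gamma}{3}, \tfrac{\delta}{2T}\right) \cdot T \cdot \left(1 + \frac{\ln(m)}{\eps}\right) \;.
\end{equation}

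A union bound over the two bad events then yields $L_{\D}(\mu_S) \le \eps$ with probability at least $1-\delta$, certifying that \Cref{alg:multi_weak_to_list_boost} is an $\listSize$-list PAC learner for $\F$. The claim on the number of oracle calls follows directly from the choice $T = \lceil 18 \ln(m) / \gamma^2 \rceil$ in \Cref{lemma:wl_to_list_SIZES}, and the stated sample complexity $m(\eps,\delta) = \widetilde{O}\bigl(m_0(\gamma/3,\delta/(2T)) \cdot \ln(1/\delta) / (\eps \gamma^2)\bigr)$ is obtained by substituting $T = O(\ln(m)/\gamma^2)$ into the bound above and solving the resulting implicit inequality in $m$.

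The only non-routine ingredient is the invocation of the list-predictor compression generalization bound from \citet{Brukhim23simple}; one should check that their Theorem~6 applies verbatim to list functions with lists of size at most $s$ (which is exactly the guarantee obtained deterministically in \Cref{lemma:wl_to_list_SIZES}), so no additional structural property of $\mu_S$ is needed. Everything else is an algebraic manipulation identical to the binary and $(\cost,z)$-list cases.
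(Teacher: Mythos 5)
Your proposal is correct and follows the same two-step structure as the paper's own proof: apply \Cref{lemma:wl_to_list_SIZES} with confidence $\delta/2$ to obtain sample consistency, then invoke the list-compression generalization bound of \citet{Brukhim23simple} (Theorem~6) with a compression size $\kappa = \widehat m \cdot T$ and a union bound. The algebra, the claim about oracle calls, and the sample complexity derivation also match the paper's argument.
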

\begin{proof}
    By analogy with the proof of \Cref{lem:bin_scalar_boost_generalization}, we first apply \Cref{lemma:wl_to_list_SIZES} with $\delta/2$ in place of $\delta$ in order to obtain an $\listSize$-list predictor $\mu_S$ consistent with $S$ with probability at least $1-\delta/2$.
    We then apply a compression-based generalization argument.
    To do so, we remark once again that one can construct a compression scheme for $\mu_S$ of size equal to the total size of the samples on which $\A$ operates, which is equal to $\kappa = \widehat m \cdot T$. %
    The main difference with the binary case is that we rely on the generalization bound for a sample compression scheme for lists as per Theorem~6 of \citet{Brukhim23simple}, with $\delta/2$ in place of $\delta$;
    we can employ this generalization bound thanks to the consistency of $\mu_S$ with $S$ (with sufficiently large probability).
    Then, this implies that
    \begin{equation}
        L_{\D}(\mu_S) = \Pr_{x \sim \D}\bigl[f(x) \notin \mu_S(x)\bigr] \le \frac{\kappa \ln(m) + \ln(2/\delta)}{m - \kappa}
    \end{equation}
    holds with probability at least $1-\delta/2$.
    By similar calculations as in \Cref{eq:m_ge_m0_0,eq:m_ge_m0}, and replacing the values of $\kappa$ and $\widehat m$, the right-hand side of the above inequality can be easily show to be at most $\eps$ whenever:
    \begin{equation}
        m \ge \frac{\ln(2/\delta)}{\eps} + m_0\left(\frac{\gamma}{3}, \frac{\delta}{2T}\right) \cdot T \cdot \left(1 + \frac{\ln(m)}{\eps}\right) \;.
    \end{equation}
    A union bound concludes the proof for the first part of the claim, since it shows that \Cref{alg:multi_weak_to_list_boost} is an $\listSize$-list PAC learner for $\F$.
    The sample complexity of \Cref{alg:multi_weak_to_list_boost} then immediately follows by definition of $T$.
\end{proof}

Next, we prove a kind of converse of \Cref{thm:main_multiclass_list_sizes}. That is, one can convert a list learner to a weak learner. This results is stated formally as follows.
\begin{theorem}[\smash{$\listSize$-List $\Rightarrow$ Weak Learning}]\label{thm:list_to_wl_SIZES}
There exists an algorithm $\B$ that for every $\F \subseteq \Y^\X$ satisfies what follows. Let $\cost: \Y^2 \rightarrow [0,1]$ be a cost function, let $z \ge 0$, and let $v_{\overline{2}} \le \ldots \le v_{\overline{k}}$ as defined in Equation \eqref{eq:multiclass_critical_thresholds_min_max}. Let $\listSize < k$ be the smallest integer for which  $z < v_{\overline{\listSize+1}}$, or if none exists let $\listSize=k$. 
Given oracle access to a $\listSize$-list PAC learner $\L$ for $\F$ with sample complexity $m_{\L}$, algorithm $\B$ is a $(\cost, z)$-learner for $\F$ with sample complexity $m_\L$ that makes one oracle call to $\L$.
\end{theorem}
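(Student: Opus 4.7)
The plan is to mimic the proof of \Cref{lemma:list_to_wl} almost verbatim, the only new ingredient being the observation that, by the choice of $\listSize$ in the statement, every list of size at most $\listSize$ automatically yields a restricted game value bounded by $z$. Concretely: since $\listSize$ is defined so that $z \ge v_{\overline{\listSize}}$ (otherwise $\listSize$ would not be the smallest integer satisfying the defining inequality), and since $\val_J(\cost) \le v_{\overline{|J|}}$ for every $J \subseteq \Y$ by definition of $v_{\overline{\,\cdot\,}}$, any list $J \subseteq \Y$ with $|J| \le \listSize$ satisfies $\val_J(\cost) \le v_{\overline{\listSize}} \le z$. Hence the output of any $\listSize$-list PAC learner is in fact $(\cost,z)$-bounded in the sense of \Cref{def:wz_list}.

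The algorithm $\B$ is then as follows. Given a sample $S$ of size $m_\L(\eps,\delta)$, it invokes $\L$ once to obtain a list function $\mu_S : \X \to \Y^\listSize$. For every nonempty $J \subseteq \Y$, let
\[
\bp_J \;\in\; \arg\min_{\bp \in \Delta_\Y} \max_{\bq \in \Delta_J} \cost(\bp,\bq),
\]
which can be computed by a linear program of size polynomial in $k$. The predictor $h_S$ returned by $\B$ is the randomized hypothesis $h_S(x) \sim \bp_{\mu_S(x)}$, with independent randomness for each $x$.

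For the analysis, I would condition on the high-probability event (of probability at least $1-\delta$, by the $\listSize$-list PAC guarantee of $\L$) that $\Pr_{x\sim\D}[f(x) \notin \mu_S(x)] \le \eps$. Decomposing by the law of total expectation exactly as in the proof of \Cref{lemma:list_to_wl}, and using $\|\cost\|_\infty \le 1$,
\[
L_\D^\cost(h_S) \;\le\; \Pr_{x \sim \D}\bigl[f(x)\notin \mu_S(x)\bigr] \;+\; \sum_{J} \Pr_{x\sim\D}\bigl[\mu_S(x)=J \wedge f(x)\in J\bigr] \cdot L_{\D_J}^\cost(h_S),
\]
where the sum is over $J \subseteq \Y$ with positive probability and $\D_J$ is the conditional distribution on $\mu_S(x)=J \wedge f(x) \in J$. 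Letting $\bq_J$ denote the marginal of $\D_J$ on $\Y$, the conditioning forces $\supp(\bq_J) \subseteq J$, so $L_{\D_J}^\cost(h_S) = \cost(\bp_J,\bq_J) \le \max_{\bq \in \Delta_J}\cost(\bp_J,\bq) = \val_J(\cost) \le z$ by the key observation above. Combining gives $L_\D^\cost(h_S) \le z + \eps$, which shows $\B$ is a $(\cost,z)$-learner with sample complexity $m_\L$ making a single oracle call to $\L$, as claimed.

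I do not anticipate any real obstacle: the only nontrivial step is verifying that $|J| \le \listSize$ implies $\val_J(\cost) \le z$, which is immediate from the monotonicity $\val_{J'}(\cost) \le \val_J(\cost)$ whenever $J' \subseteq J$, the definition of $v_{\overline{\listSize}}$ as the maximum of $\val_J(\cost)$ over $|J|=\listSize$, and the choice of $\listSize$ in the statement. Everything else is a direct transcription of the argument used for \Cref{lemma:list_to_wl}.
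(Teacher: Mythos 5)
Your proposal is correct and follows essentially the same route as the paper's proof: invoke $\L$ once, play the minimax distribution $\bp_{\mu_S(x)}$ for the restricted game on the returned list, decompose by total expectation, and use the bound $\val_J(\cost) \le v_{\overline{\listSize}} \le z$ for any $|J| \le \listSize$. The only difference is that you spell out explicitly (via monotonicity of $\val_J$ in $J$ and minimality of $\listSize$) why $\val_J(\cost) \le z$ holds, whereas the paper states this chain of inequalities without elaboration.
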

\begin{proof}
Fix any $f \in \F$ and any distribution $\D$ over $\X$, and suppose $\B$ is given a sample $S$ of size $m \ge m_\L(\epsilon,\delta)$ consisting of examples drawn i.i.d.\ from $\D$ and labeled by $f$. First, $\B$ calls $\L$ on $S$. By hypothesis, then, $\L$ returns $\mu_S: \X \to \Y^\listSize$ that with probability at least $1 - \delta$ satisfies:
\begin{equation}
    \Pr_{x \sim \D}\bigl[f(x) \notin \mu_S(x)\bigr] \le \epsilon \;.\label{eq:pr_fx_ne_mu}
\end{equation}
Conditioning on the event above, we give a randomized predictor $h_S$ such that $L_{\D}^\cost(h_S) \le z + \eps$. First, for any nonempty $J \subseteq \Y$ let $\bp_J \in \Delta_\Y$ be the minimax distribution achieving the value of the game restricted to $J$,
\begin{align}
    \bp_J = \arg \min_{\bp \in \Delta_\Y} \left( \max_{\bq \in \Delta_J} \cost(\bp,\bq) \right) \;.
\end{align}
Note that $\bp_J$ can be efficiently computed via a linear program.
Then, simply define $h_S(x) \sim \bp_{\mu_S(x)}$.

Let us analyse the loss of $h_S$ under $\D$. First, by the law of total probability, and since $\norm{\cost}_{\infty} \le 1$,
\begin{align}
    L_{\D}^\cost(h_S) &\le \Pr_{x \sim D}[f(x) \notin \mu_S(x)] + \sum_{J} \Pr_{x \sim \D}[\mu_S(x)=J \wedge f(x) \in J] \cdot L_{\D_J}^\cost(h_S) \;,\label{eq:LDcost_hS}
\end{align}
where the summation is over all $J \subseteq \Y$ with $\Pr_{x\sim \D}[\mu_S(x)=J]>0$, and $\D_J$ is the distribution obtained from $\D$ by conditioning on the event $\mu_S(x)=J \wedge f(x) \in J$.
Consider the right-hand side of \Cref{eq:LDcost_hS}. By \Cref{eq:pr_fx_ne_mu}, the first term is at most $\eps$. For the second term, denote by $\bq_J$ the marginal of $\D_J$ over $\Y$; note that, crucially, $\bq_J \in \Delta_J$. Therefore, by definition of $h_S$ and of $s$:
\begin{align}
    L_{\D_J}^\cost(h_S) = \cost(\bp_J,\bq_J) \le \max_{\bq \in \Delta_J} \cost(\bp_J,\bq) = \val_J(\cost) \le v_{\overline{s}} \le z \;.
\end{align}
We conclude that $L_{\D}^\cost(h_S) \le z + \eps$.
\end{proof}

\subsection{Lower bounds}\label{subsec:multiclass:lower_bounds}

We start proving the second item of \Cref{thm:main_multiclass}.
\begin{lemma}\label{lemma:LB__finegrained}
Let $\Y=[k]$ and let $\cost: \Y^2 \rightarrow [0,1]$ be any cost function. Let $0 = v_{1}(\cost) \le v_{2}(\cost) \le \cdots \le v_{\tau}(\cost)$  as defined in \Cref{eq:multiclass_critical_thresholds}, and let $\threshIndex$ be the largest integer for which $v_{\threshIndex}(\cost) \le z$. Then there exists a class $\F \subseteq \Y^\X$ that has a $(\cost,z)$-learner but no $(\cost,z')$-learner for any $z' < v_\threshIndex(\cost)$.
\end{lemma}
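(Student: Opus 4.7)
My plan is to pick a subset $J^{*}\subseteq\Y$ with $v_{J^{*}}(\cost)=v_{\threshIndex}(\cost)$ (which exists by the definition of the thresholds in \Cref{eq:multiclass_critical_thresholds}), take $\X=\NN$, and let $\F=(J^{*})^{\X}$ be the class of all functions mapping $\X$ into $J^{*}$. For the easy direction I would exhibit a $(\cost,z)$-learner for $\F$ via the trivial learner driven by the minimax die: let $\bp^{*}\in\Delta_{\Y}$ attain $v_{J^{*}}(\cost)=\min_{\bp}\max_{\bq\in\Delta_{J^{*}}}\cost(\bp,\bq)$, and return the constant-in-$x$ randomized predictor $h(x)\sim\bp^{*}$. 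For any $f\in\F$ and $\D$, the label marginal $\bq$ lies in $\Delta_{J^{*}}$ because $f$ takes values in $J^{*}$, so $L_{\D}^{\cost}(h)=\cost(\bp^{*},\bq)\le v_{J^{*}}(\cost)\le z$.

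For the lower bound I would argue by contradiction: assume a $(\cost,z')$-learner $\A$ for $\F$ with $z'<v_{J^{*}}(\cost)$ and sample complexity $m_{0}$. Using von Neumann's minimax theorem, pick $\bq^{*}\in\Delta_{J^{*}}$ attaining $v_{J^{*}}(\cost)=\max_{\bq\in\Delta_{J^{*}}}\min_{\bp}\cost(\bp,\bq)$, so that $\cost(\bp,\bq^{*})\ge v_{J^{*}}(\cost)$ for every $\bp\in\Delta_{\Y}$. Fix $\eps,\delta>0$ small enough that $2\eps+\delta<v_{J^{*}}(\cost)-z'$, let $m=m_{0}(\eps,\delta)$, and pick $N\ge m/\eps$. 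Take $\X_{0}=\{1,\dots,N\}\subseteq\X$ with $\D$ uniform on $\X_{0}$, and randomize the target by drawing $f(x)\sim\bq^{*}$ i.i.d.\ for $x\in\X_{0}$ (fixing $f$ arbitrarily on $\X\setminus\X_{0}$), so that $f\in\F$ with probability one.

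Next, draw $S$ of size $m$ from $\D$ labeled by $f$, run $\A(S)$ to get $h$, and consider a fresh $x'\sim\D$ independent of $S$. Since $|S|\le m$ and $|\X_{0}|=N$, we have $\Pr[x'\notin S]\ge 1-m/N\ge 1-\eps$; conditional on $x'\notin S$, the label $f(x')\sim\bq^{*}$ is independent of $(S,h)$, so
\[
\E\bigl[\cost(h(x'),f(x'))\mid x'\notin S\bigr]=\E\bigl[\cost(h(x'),\bq^{*})\bigr]\ge v_{J^{*}}(\cost).
\]
This gives $\E_{f,S,x'}\bigl[\cost(h(x'),f(x'))\bigr]\ge(1-\eps)\,v_{J^{*}}(\cost)$. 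The left-hand side equals $\E_{f}\E_{S}\bigl[L_{\D}^{\cost}(h)\bigr]$; but the PAC guarantee of $\A$ together with $L_{\D}^{\cost}(h)\in[0,1]$ gives $\E_{S}[L_{\D}^{\cost}(h)]\le z'+\eps+\delta$ for every fixed $f\in\F$, and averaging over $f$ yields $(1-\eps)\,v_{J^{*}}(\cost)\le z'+\eps+\delta$, contradicting the choice of $\eps$ and $\delta$.

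The main obstacle I anticipate is the independence step in the third paragraph: one must be careful to separate the randomness of the training sample (and the internal randomness of $\A$) from that of $f(x')$ on unseen points, which is why taking $N$ sufficiently larger than $m$ is essential and why we randomize $f$ over $\X_{0}$ rather than specify it up front. The remaining ingredients, namely the minimax characterization of $v_{J^{*}}(\cost)$ and the Markov-type conversion of the high-probability PAC bound into an expectation bound, are routine.
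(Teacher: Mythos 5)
Your proof is correct, but for the hard direction (no $(\cost,z')$-learner for $z'<v_{\threshIndex}(\cost)$) you take a genuinely different route from the paper. The easy direction (the trivial learner given by the minimax die $\bp^*$) is the same. For the lower bound, the paper assumes a $(\cost,z')$-learner, applies \Cref{thm:wl_to_list} to obtain a $(\cost,z')$-list learner, observes that the boundedness condition forces every output list $\mu(x)$ to satisfy $J\not\subseteq\mu(x)$ (hence, after intersecting with $J$, $|\mu(x)|\le|J|-1$), and then invokes the characterization of $\listSize$-list PAC learnability via the Daniely--Shalev-Shwartz dimension of \citet{charikar2022characterization} to conclude that $J^\X$ is not $(|J|-1)$-list learnable. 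You instead give a direct ``no free lunch'' argument: pass to the max-min form of the game via von Neumann's theorem, pick $\bq^*\in\Delta_{J^*}$ attaining it so that $\cost(\bp,\bq^*)\ge v_{J^*}(\cost)$ for all $\bp$, randomize the target $f$ with i.i.d.\ labels $\sim\bq^*$ over a support much larger than the sample size, and show that on unseen test points the expected cost of \emph{any} hypothesis is at least $v_{J^*}(\cost)$, contradicting the PAC guarantee after a Markov-type conversion from high probability to expectation. Your independence bookkeeping (conditioning on $x'\notin S$ and on $(S,h,x')$, then using that $f(x')$ is still distributed as $\bq^*$) is the crux and is handled correctly, as is the choice $N\ge m/\eps$.

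The trade-off: your argument is more elementary and self-contained — it uses only the minimax theorem plus a standard probabilistic lower-bound construction, and avoids both the weak-to-list conversion and the machinery of DS dimension. The paper's argument is shorter given that \Cref{thm:wl_to_list} and the list-PAC characterization have already been set up, and it more directly exhibits the combinatorial obstruction (unbounded DS dimension). One small edge case worth mentioning explicitly in a final write-up: when $v_{\threshIndex}(\cost)=0$ (equivalently $\threshIndex=1$) the claim is vacuous since no $z'<0$ exists; the paper states this, and you should too, since otherwise $J^*$ may be a singleton and $\bq^*$ a point mass, making the lower-bound construction degenerate (albeit also unnecessary).
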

\begin{proof}
    If $v_n(\cost)=0$ then the claim is trivial, hence we assume $v_n(\cost)>0$. Observe that this implies that there exists some $J$ with $|J|\ge 2$ such that $v_\threshIndex(\cost)=V_J(\cost)$.
    Let $\bp^* \in \Delta_\Y$ be the distribution achieving $\val_J(\cost)$, that is:
    \begin{equation}
        \bp^* = \argmin_{\bp \in \Delta_\Y} \max_{\bq \in \Delta_J} \cost(\bp, \bq) \;.
    \end{equation}
    Now let $\X$ be any infinite domain (e.g., $\X = \mathbb{N}$) and let $\F = J^\X$. For every distribution $\D$ over $\X$ and every labeling $f:\X\to J$,
    \begin{align}
    L_\D^{\cost}(\bp^*) %
    &= \cost(\bp^*, \bq_\D) \le \max_{\bq \in \Delta_J}\cost(\bp^*, \bq) = \val_J(\cost) = v_n(\cost) \le z \;, 
    \end{align}
    where $\bq_\D \in \Delta_J$ is the marginal of $\D$ over $\Y$. Thus the learner that returns the random guess $h$ based on $\bp^*$ is a $(\cost,z)$-learner for $\F$.
        
    Next, we show that $\F$ has no $(\cost,z')$-learner with $z'<v_n(\cost)$. Suppose indeed towards a contradiction that there exists such a $(\cost,z')$-learner $\A$. By \Cref{thm:wl_to_list}, then, $\F$ admits a $(\cost,z')$-list learner $\L$. Now, as $z'< v_\threshIndex = \val_J(\cost)$, the list function $\mu$ returned by $\L$ ensures $J \not \subseteq \mu(x)$ for all $x \in \X$. Moreover, as $F=J^\X$, we can assume without loss of generality that $\mu(x) \subseteq J$ for all $x \in \X$; otherwise we could remove the elements of $\mu(x) \setminus J$ to obtain a list $\mu'(x)$ such that $\val_{\mu'(x)}(\cost) \le \val_{\mu(x)}(\cost)$ and that $\Pr_{x \sim \D}[f(x) \in \mu'(x)] \ge \Pr_{x \sim \D}[f(x) \in \mu(x)]$. It follows that $\mu(x) \subsetneq J$ for all $x \in \X$. Therefore, $\L$ is a $(|J|-1)$-list PAC learner.
    However, one can verify that the $(|J|-1)$-Daniely-Shalev-Shwartz dimension of $\F$ is unbounded (see Definition~6 of \citet{charikar2022characterization}). It follows by \citet[Theorem~1]{charikar2022characterization} that $\F$ is not $(|J|-1)$-list PAC learnable, yielding a contradiction.
\end{proof}

Next, we prove the second item of \Cref{thm:main_multiclass_list_sizes}.
\begin{lemma}\label{lemma:LB__multiclass_list}
Let $\Y=[k]$, let $\cost: \Y^2 \rightarrow [0,1]$ be any cost function, and let $v_{\underline{2}} \le \ldots \le v_{\underline{k}}$ as defined in Equation \eqref{eq:multiclass_critical_thresholds_min_max}. Let $z \ge v_{\underline{2}}$ and let $\listSize \le k$ be the largest integer for which  $z \ge v_{\underline{\listSize}}$. Then, there exists a class $\F \subseteq \Y^\X$ that has a $(\cost,z)$-learner and is not $(\listSize-1)$-list PAC learnable. 
\end{lemma}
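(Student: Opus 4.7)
The plan is to mimic the strategy of \Cref{lemma:LB__finegrained}: exhibit a concrete class $\F$ that is trivially $(\cost,z)$-learnable by a single random guess, yet provably hard to $(\listSize-1)$-list PAC learn.

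First, I would unpack the definition of $\listSize$. Since $\listSize$ is the largest integer with $z \ge v_{\underline{\listSize}}$ and by assumption $z \ge v_{\underline{2}}$, we have $\listSize \ge 2$. By the definition of $v_{\underline{\listSize}}$ as the minimum of $\val_J(\cost)$ over $J \in \binom{[k]}{\listSize}$, there exists a subset $J \subseteq \Y$ with $|J|=\listSize$ such that $\val_J(\cost) = v_{\underline{\listSize}} \le z$. Fix such a $J$, and let $\bp^* \in \Delta_\Y$ be a minimax-optimal distribution for the restricted game on $J$, that is,
\[
    \bp^* \in \argmin_{\bp \in \Delta_\Y} \max_{\bq \in \Delta_J} \cost(\bp,\bq)\,.
\]

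Next, I would take $\X$ to be any infinite set (e.g., $\X = \mathbb{N}$) and let $\F = J^\X$, i.e., the set of all functions $\X \to J$. For the positive half of the claim, consider the learner that ignores its input and always outputs the random guess $h^*$ with $h^*(x) \sim \bp^*$ independently for every $x$. For any $f \in \F$ and any distribution $\D$ over $\X$, the marginal $\bq_\D$ of $\D$ over $\Y$ is supported on $J$, so
\[
    L_\D^{\cost}(h^*) \eq \cost(\bp^*,\bq_\D) \leq \max_{\bq \in \Delta_J}\cost(\bp^*,\bq) \eq \val_J(\cost) \eq v_{\underline{\listSize}} \leq z\,,
\]
which shows that this trivial learner is a $(\cost,z)$-learner for $\F$.

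For the negative half of the claim, I would invoke the list-PAC lower bound argument already used at the end of the proof of \Cref{lemma:LB__finegrained}. Since $\F = J^\X$ with $|J|=\listSize$ and $\X$ infinite, one can verify that the $(\listSize-1)$-Daniely--Shalev-Shwartz dimension of $\F$ is unbounded (every finite subset of $\X$ is shattered in the appropriate list-PAC sense by the constant functions in $J^\X$). By Theorem~1 of \citet{charikar2022characterization}, this implies that $\F$ is not $(\listSize-1)$-list PAC learnable, completing the proof. The main obstacle, which is a very mild one, is verifying the dimension lower bound for $J^\X$; but this is essentially identical to the observation used in \Cref{lemma:LB__finegrained} and can be cited verbatim.
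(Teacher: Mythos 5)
Your proof is correct and follows essentially the same route as the paper: the paper's proof is just a terse reference back to \Cref{lemma:LB__finegrained}, choosing $J$ as an argmin over $\binom{\Y}{\listSize}$, setting $\F = J^\X$, observing that the minimax strategy $\bp^*$ for the restricted game on $J$ gives a trivial $(\cost,z)$-learner, and citing the unboundedness of the $(\listSize-1)$-DS dimension of $J^\X$ together with \citet[Theorem~1]{charikar2022characterization} to rule out $(\listSize-1)$-list PAC learning. You have simply unfolded those same steps in full; the only loose spot is the parenthetical claim that constant functions alone witness the DS shattering (they do not, one needs the full richness of $J^\X$), but since you defer to the verbatim argument from \Cref{lemma:LB__finegrained}, as the paper also does, this does not affect correctness.
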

\begin{proof} The proof follows closely the one of \Cref{lemma:LB__finegrained}.
Let $J = \arg\min\left\{\val_{J'}(\cost) : J' \in {\Y \choose \listSize}\right\}$. 
Let $\X$ be any infinite domain (e.g., $\X = \mathbb{N}$) and let $\F = J^\X$. %
As in the proof of \Cref{lemma:LB__finegrained} there is a distribution $\bp^*$ yielding a $(\cost,z)$-learner for $\F$. 
Simultaneously (again from the proof of \Cref{lemma:LB__finegrained}), the class $\F$ is not $(\listSize-1)$-list PAC learnable.  
\end{proof}

Finally, we prove the second item of \Cref{thm:multiclass_MO_boost}.

\begin{lemma}\label{lemma:LB__multiclass_MO_boost}
Let $\Y=[k]$, let $\costVec = (\cost_1,\ldots,\cost_r)$ where each $\cost_i: \Y^2 \rightarrow [0,1]$ is a cost function, and let $\bbz, \bbz' \in [0,1]^r$ such that $\bbz \not\preceq_{\costVec} \bbz'$.  
There exists a class $\F \subseteq \Y^\X$ that admits a $(\costVec, \bbz)$-learner that is trivial, and therefore cannot be boosted to a $(\costVec, \bbz')$-learner.
\end{lemma}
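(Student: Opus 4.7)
The plan is to adapt the lower-bound argument used in the binary case (the second item of \Cref{thm:binary_MO_boost}) and in \Cref{lemma:LB__finegrained}, leveraging the scalarization duality of \Cref{prop:J_dice_duality} to reduce the multi-objective obstruction to a scalar one.

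First, I would unpack the hypothesis $\bbz \not\preceq_{\costVec} \bbz'$ via \Cref{eq:partial_order_MC_MO}: there exists $J \subseteq \Y$ with $\bbz \in D_J(\costVec)$ but $\bbz' \notin D_J(\costVec)$. Observe that $|J| \ge 2$: for a singleton $J = \{y\}$ the point mass $\bp$ concentrated on $y$ trivially gives $\cost_i(\bp,\bq) = 0 \le z_i'$ for every $i$, so $D_{\{y\}}(\costVec) = [0,1]^r$, which would contradict $\bbz' \notin D_J(\costVec)$. I then take $\X$ to be any infinite domain (e.g., $\X = \mathbb{N}$) and define $\F \eqdef J^\X$.

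Next I would construct a trivial $(\costVec, \bbz)$-learner for $\F$. Since $(\costVec,\bbz) \in D_J(\costVec)$, \Cref{def:dice_attainable_J} guarantees that for every $\bq \in \Delta_J$ there exists $\bp(\bq) \in \Delta_\Y$ with $\cost_i(\bp(\bq),\bq) \le z_i$ for all $i$. For any $f \in \F$ and distribution $\D$ over $\X$, the marginal $\bq_\D$ of the labels is supported on $J$ and hence lies in $\Delta_J$. The learner draws enough samples to estimate $\bq_\D$ within total variation $\eps$ (for instance using the $\wt{\OO}\bigl((k+\log(1/\delta))/\eps^2\bigr)$ samples recalled in \Cref{sec:binary}), obtains $\hat\bq \in \Delta_J$, and outputs the random guess $h_S(x) \sim \bp(\hat\bq)$ independently of $x$. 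Since $\cost_i \in [0,1]$, we get $\cost_i(\bp(\hat\bq),\bq_\D) \le z_i + \eps$ for every $i$, so this is indeed a trivial $(\costVec, \bbz)$-learner in the sense of \Cref{def:trivial-learner}. Because its output depends on the labeled sample only through the (label-)marginal estimate, any boosted aggregate of its outputs is still itself a random guess over $\Y$ and therefore achieves no better than what the best random guess can attain on $\F$.

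Finally, I would show that $\F = J^\X$ admits no $(\costVec, \bbz')$-learner at all, which completes the argument. Since $\bbz' \notin D_J(\costVec)$, \Cref{prop:J_dice_duality} yields some $\balpha \in \Delta_r$ with $z'_\balpha \eqdef \langle \balpha, \bbz'\rangle < \val_J(w_\balpha)$, where $w_\balpha = \sum_i \alpha_i w_i$. Any $(\costVec, \bbz')$-learner for $\F$ is in particular a $(w_\balpha, z'_\balpha)$-learner for $\F = J^\X$. Replaying the proof of \Cref{lemma:LB__finegrained} on the scalar cost $w_\balpha$: by \Cref{thm:wl_to_list} such a learner would give a $(w_\balpha, z'_\balpha)$-list learner whose list function $\mu$ satisfies $\val_{\mu(x)}(w_\balpha) \le z'_\balpha < \val_J(w_\balpha)$ for every $x \in \X$, forcing $J \not\subseteq \mu(x)$; pruning labels outside $J$ (which only decreases $\val_{\mu(x)}(w_\balpha)$ and can only enlarge the set of $x$ with $f(x) \in \mu(x)$) then produces a $(|J|-1)$-list PAC learner for $J^\X$, contradicting \citet[Theorem~1]{charikar2022characterization} since the $(|J|-1)$-Daniely--Shalev-Shwartz dimension of $J^\X$ is infinite.

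The main (and only mildly delicate) step is the last one, where I need to check that the scalarization $(w_\balpha, z'_\balpha)$ inherits enough structure—specifically that $|J| \ge 2$ and that $z'_\balpha$ is strictly below $\val_J(w_\balpha)$—to reuse the scalar list-learning lower bound of \Cref{lemma:LB__finegrained} verbatim. Everything else is a clean reduction: choose the separating $J$, read off the trivial learner directly from $J$-dice-attainability, and transport the obstruction to the scalar regime through \Cref{prop:J_dice_duality}.
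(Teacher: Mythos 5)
Your proposal is correct and takes essentially the same route as the paper: choose the separating set $J$ from $\bbz \not\preceq_{\costVec} \bbz'$, set $\F = J^\X$ over an infinite domain, read off a trivial $(\costVec,\bbz)$-learner from $J$-dice-attainability of $\bbz$, and derive a contradiction from a hypothetical $(\costVec,\bbz')$-learner by scalarizing through \Cref{prop:J_dice_duality}, converting to a $(w_\balpha, z'_\balpha)$-list learner via \Cref{thm:wl_to_list}, pruning to a $(|J|-1)$-list PAC learner, and invoking the unbounded Daniely--Shalev-Shwartz dimension of $J^\X$. You are a bit more explicit than the paper on two points (checking $|J|\ge 2$ and spelling out the pruning step, both inherited from \Cref{lemma:LB__finegrained}), but these are exactly the details the paper is implicitly relying on, not a different argument.
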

\begin{proof}
By assumption that $\bbz \not\preceq_{\costVec} \bbz'$, there must be a set $J\subseteq \Y$ for which $\bbz  \in D_J(\costVec)$ and $\bbz' \notin D_J(\costVec)$.
Let $\X$ be any infinite domain (e.g., $\X = \mathbb{N}$) and let $\F = J^\X$.
By definition of $D_J(\costVec)$ there exists a trivial learner that is a $(\costVec, \bbz)$-learner for $\F$.
Assume towards contradiction that the trivial learner can be used to construct a $(\costVec, \bbz')$-learner for $\F$. Since $\bbz' \notin D_J(\costVec)$, then by \Cref{prop:J_dice_duality} there exists  $\balpha \in \Delta_r$ such that $\langle\balpha,\bbz'\rangle <  \val_J(w_{\balpha})$, where $w_{\balpha} = \sum_{i=1}^r \alpha_i w_i$.
Then, by \Cref{thm:wl_to_list}, there is a $(w_{\balpha}, z_{\balpha}')$-list learner $\L'$ that outputs list functions $\mu$ such that, for  every $x \in \X$ it holds that:
$$
V_{\mu(x)}(w_{\balpha}) \le z_{\balpha}' < \val_J(w_{\balpha}).
$$
Thus $\L'$ is a $(|J|-1)$-list PAC learner for $\F$.
However, one can verify that the $(|J|-1)$-Daniely-Shalev-Shwartz dimension of $\F$ is unbounded (see Definition~6 of \citet{charikar2022characterization}). It follows by \citet[Theorem~1]{charikar2022characterization} that $\F$ is not $(|J|-1)$-list PAC learnable, yielding a contradiction.
\end{proof}

\clearpage
\phantomsection
\bibliographystyle{plainnat}
\bibliography{bib}

\appendix
\section{Cost-Sensitive Learning}
\label{app:cost-sensitive}

Cost sensitive learning has a long history in machine learning. The two main motivations are that, first, not all errors have the same impact, and second, that there might be a class-imbalance between the frequency of different classes. 
Over the years there have been many workshops on cost-sensitive learning in ICML (2000) NIPS (2008) SDM (2018), yet another indication of the impostance of cost-sensitive loss.
See, e.g., \cite{LingS17a,ling2008cost}.
Recall the definition of the cost sensitive loss:
\begin{equation} 
 L_{\D}^{\cost}(h) = \E_{x \sim \D}\Bigl[\cost(h(x),f(x))\Bigr]\,.
\end{equation} 
Given the the distribution $\D$ on $\X$, the Bayes optimal prediction rule is given by:
\begin{align}
    \arg\min_{i\in\Y} \sum_{j\in \Y} w(i,j)\Pr[f(x)=j\mid x]
\end{align}
 for every $x\in\X$. The early works include \cite{Elkan01}, which characterizes the Bayes optimal predictor as a threshold, and its implication for re-balancing and decision tree learning. In fact, this resembles our binary prediction rule.

Sample complexity for cost-sensitive leaning for large margin appears in \cite{KarakoulasS98}. Additional sample complexity bounds, for cost-sensitive learning, based on transformation of the learning algorithms and using rejection sampling is found in \cite{zadrozny2003cost}.
The idea of cost sensitive learning has a long history in statistics. For binary classification, there are many different metrics used for evaluation. Let 
$w=\begin{psmallmatrix} w_{++} & w_{+-}\\w_{-+} & w_{--} \end{psmallmatrix}$. The false-positive (FP) is $w_{+-}$, the false-negative is $w_{-+}$, the precision is $w_{++}/(w_{++}+w_{+-})$, the recall is $w_{++}/(w_{++}+w_{-+})$, and more.

\subsection{Boosting cost-sensitive loss}

There has been significant amount of work with the motivation of adapting AdaBoost to a cost-sensitive loss. 
At a high level, the different proposals either modify the way the algorithm updates its weights, taking in to account the cost-sensitive loss, or changes the final prediction rule. \citet{NikolaouEKFB16} give an overview on various AdaBoost variants for the cost-sensitive case. See also \cite{Landesa-Vazquez15b}. Our modified update rule of Hedge in \Cref{alg:binary_boost} corresponds to CGAda of \citet{SunKWW07} and related AdaBoost variants.

The theoretically sound proposal focused on deriving similar guarantees to that of AdaBoost.
Cost-sensitive boosting by \cite{Masnadi-ShiraziV11} modified the exponential updates to include the cost-sensitive loss.
Cost-Generalized AdaBoost by \cite{Landesa-VazquezA12} modifies the initial distribution over the examples.
AdaboostDB by \cite{Landesa-VazquezA13} modifies the base of the exponents used in the updates. All the theoretically sound proposal are aimed to guarantee convergence under the standard weak-leaning assumption. Their main objective is to derive better empirical results when faced with a cost-sensitive loss.
However, they do not address the essential question, \emph{when is boosting possible?} In particular, they do not study cost-sensitive variants weak learners and do not characterize the boostability of such learners.
In addition to the papers above, there have been many heuristic modifications of AdaBoost which try to address the cost-sensitive loss \citep{KarakoulasS98,FanSZC99,Ting00,ViolaJ01,SunKWW07}.

\subsection{Multi-objective learning and boosting}
Learning with multiple objectives is also common in machine learning. A well studied special case are variants of learning with \emph{one-sided error} \citep{kivinen1995learning, sabato2012multi}. A typical goal in one-sided learning or the related \emph{positive-reliable} learning \citep{kalai2012reliable,kanade2014distribution,durgin2019hardness} is to guarantee an (almost) $0$ false positive loss and simultaneously a low false negative loss. This corresponds to a $(\costVec, \bbz)$-learner with $\costVec=(\cost_+,\cost_-)$ and $\bbz=(0,\eps)$. More generally, \citet{kalai2012reliable} also considered \emph{tolerant reliable} learning with an arbitrary $\bbz=(z_+,z_-)$. Our results apply in this context. For example in the binary case, our results show that a $(\costVec, \bbz)$-learner (i.e., a $\bbz$-tolerant-reliable one) is boostable if and only if $\sqrt{z_+}+\sqrt{z_-} < 1$. Moreover, our results also imply boostability and learnability results for reliable learning in the multi-class case; a learning setting mostly over-looked so far.

\section{Multiclass Boosting}
Boosting is a fundamental methodology in machine learning 
    that can boost the accuracy of weak learning rules into a strong one.
    Boosting theory was originally designed for binary classification. The study of boosting was initiating in a line of seminal works which include the celebrated Adaboost algorithm, as well an many other algorithms with various applications, (see e.g.~\cite{Kearns88unpublished,Schapire90boosting,Freund90majority,Freund97decision}). It was later adapted to other settings and was extensively studied in broader contexts as well~\citep{bendavid2001agnostic,KalaiS05,LongS08,kalai2008agnostic,kanade2009potential,feldman2009distribution,moller2024many,green2022optimal,raman2022online,raman2019online}.
 
    There are also various extension of boosting to the multiclass setting.
    The early extensions include AdaBoost.MH, AdaBoost.MR, and approaches based on Error-Correcting Output Codes (ECOC) \citep{schapire1999_MR, allwein2000reducing}. These works often reduce the $k$-class task into a single binary task. The binary reduction can have various problems, including increased complexity, and lack of guarantees of an optimal joint predictor.  

Notably, a work by \cite{Mukherjee2013} established a theoretical framework for multiclass boosting, which generalizes previous learning conditions. However, this requires the assumption that the weak learner minimizes a 
complicated loss function, that is significantly
different from simple classification error.

More recently, there have been several works on multiclass boosting.  First, \cite{brukhim2021multiclass} followed a formulation for multiclass boosting similar to that of \cite{Mukherjee2013}. They proved a certain hardness result showing that a broad, yet restricted, class of boosting algorithms must incur a cost which scales polynomially with $|\Y|$. Then, \cite{brukhim2023improper} and \cite{Brukhim23simple} demonstrated efficient methods for multiclass boosting. We note that our work generalizes the results given in \cite{Brukhim23simple} to the cost sensitive setting, as detailed in \Cref{sec:multiclass}.

\end{document}